\newtheorem{theorem}{Theorem}[section]
\newtheorem{lem}[theorem]{Lemma}
\newtheorem{sigmodel}[theorem] {Assumption} 
\newtheorem{corollary}[theorem]{Corollary}
\newtheorem{definition}[theorem]{Definition}
\newtheorem{remark}[theorem]{Remark}
\newtheorem{fact}[theorem]{Fact}
\newcommand{\ds}{\displaystyle}
\newcommand{\R}{\mathbb{R}}
\newcommand{\bi}{\begin{itemize}}
\newcommand{\ei}{\end{itemize}}
\newcommand{\ben}{\begin{enumerate}}
\newcommand{\een}{\end{enumerate}}
\newcommand{\bean}{\begin{eqnarray*} }
\newcommand{\eean}{\end{eqnarray*} }
\newcommand{\bea}{\begin{eqnarray} }
\newcommand{\eea}{\end{eqnarray} }
\newcommand{\nn}{\nonumber}
\newcommand{\ba}{\begin{array} }
\newcommand{\ea}{\end{array} }
\newcommand{\beq}{\begin{equation}}
\newcommand{\eeq}{\end{equation}}
\newcommand{\wt}{\bm{w}_t}
\newcommand{\xt}{\bm{x}_t}
\newcommand{\lt}{\bm{\ell}_t}
\renewcommand{\l}{\bm{\ell}}
\newcommand{\yt}{\bm{y}_t}
\newcommand{\y}{\bm{y}}
\renewcommand{\a}{\bm{a}}
\newcommand{\et}{\bm{w}_t}
\newcommand{\at}{\bm{a}_t}
\newcommand{\I}{\bm{I}}
\newcommand{\Lamt}{\bm{\Lambda}_t}
\newcommand{\Lam}{\bm{\Lambda}}
\newcommand{\T}{\mathcal{T}}
\newcommand{\A}{\bm{A}}
\newcommand{\cM}{\hat{\bm{D}}}  
\newcommand{\cA}{\bm{D}}    
\newcommand{\cH}{\bm{H}}
\newcommand{\lhat}{\bm{y}} 
\renewcommand{\P}{\bm{P}}
\newcommand{\Phat}{\hat{\bm{P}}}
\newcommand{\Span}{\operatorname{range}}
\newcommand{\diag}{\mathrm{diag}}
\newcommand{\rank}{\operatorname{rank}}
\newcommand{\E}{\mathbb{E}}
\newcommand{\calc}{\mathcal{C}}
\newcommand{\cov}{\operatorname{Cov}}
\newcommand{\thresh}{\lambda_{\mathrm{thresh}}}  
\newcommand{\SE}{\mathrm{SE}}
\newcommand{\M}{\bm{M}}
\newcommand{\rrho}{\rho}
\newcommand{\ghatp}{\hat{g}}
\renewcommand{\L}{\bm{L}}
\newcommand{\bE}{\bm{E}}
\newcommand{\F}{\bm{F}}
\newcommand{\bF}{\bm{F}}
\newcommand{\bR}{\bm{R}}
\newcommand{\Item}{ \vspace{-0.02in} \item \vspace{-0.02in} } 
\newcommand{\cur}{\mathrm{cur}}
\newcommand{\undet}{\mathrm{undet}}
\newcommand{\dett}{\mathrm{det}}
\newcommand{\rest}{\mathrm{rest}}
\newcommand{\term}{\mathrm{term}}
\newcommand{\Lamtdet}{\bm\Lambda_{\dett}}   
\newcommand{\Lamtcur}{\bm\Lambda_{\cur}} 
\newcommand{\Lamtundet}{\bm\Lambda_{\undet}} 
\newcommand{\G}{\bm{G}}
\newcommand{\Ghat}{\bm{\hat{G}}}
\newcommand{\cG}{\mathcal{G}}
\newcommand{\cGhat}{\hat{\mathcal{G}}}
\newcommand{\bbbeta}{\beta}
\newcommand{\bbeta}{\tilde\beta}
\newcommand{\Mcpca}{\cM}
\newcommand{\Section}[1]{ \vspace{-0.15in}  \section{#1} \vspace{-0.16in} } 
\newcommand{\Subsection}[1]{ \vspace{-0.15in} \subsection{#1} \vspace{-0.12in}} 
\renewcommand{\subsubsection}[1]{{\bf #1. }} 
\begin{document}

\title{Correlated-PCA: Principal Components' Analysis when Data and Noise are Correlated}

\author{Namrata Vaswani and Han Guo \\ Iowa State University, Ames, IA, USA
\\ Email: \{namrata,hanguo\}@iastate.edu
}

\maketitle

\newcommand{\PCAalpha}{\text{simple-EVD}}
\newcommand{\PCAtwo}{\text{PCA}(2\alpha)}

\begin{abstract}
Given a matrix of observed data, Principal Components Analysis (PCA) computes a small number of orthogonal directions that contain most of its variability. Provably accurate solutions for PCA have been in use for a long time. However, to the best of our knowledge, all existing theoretical guarantees for it assume that the data and the corrupting noise  are mutually independent, or at least uncorrelated. This is valid in practice often, but not always. In this paper, we study the PCA problem in the setting where the data and noise can be correlated. Such noise is often also referred to as ``data-dependent noise". We obtain a correctness result for the standard eigenvalue decomposition (EVD) based solution to PCA under simple assumptions on the data-noise correlation.
We also develop and analyze a generalization of EVD, cluster-EVD, that improves upon EVD in certain regimes.
\end{abstract}


\Section{Introduction}

Principal Components Analysis (PCA) is among the most frequently used tools for dimension reduction. Given a matrix of data, it computes a small number of orthogonal directions that contain all (or most) of the variability of the data. The subspace spanned by these directions is the ``principal subspace". To use PCA for dimension reduction, one projects the observed data onto this subspace. 
%
The standard solution to PCA is to compute the reduced singular value decomposition (SVD) of the data matrix, or, equivalently, to compute the reduced eigenvalue decomposition (EVD) of the empirical covariance matrix of the data. If all eigenvalues are nonzero, a threshold is used and all eigenvectors with eigenvalues above the threshold are retained.
This solution, which we henceforth refer to as {\em simple EVD}, or just {\em EVD}, has been used for many decades and is well-studied in literature, e.g., see \cite{nadler} and references therein.
However, to the best of our knowledge, all existing results for it assume that the true data and the corrupting noise in the observed data are independent, or, at least, uncorrelated. This is valid in practice often, but not always. Here, we study the PCA problem in the setting where the data and noise vectors may be correlated (correlated-PCA). Such noise is sometimes called ``data-dependent" noise.

{\bf Contributions. }
(1) Under a boundedness assumption on the true data vectors, and some other assumptions,  for a fixed desired subspace error level, we show that the sample complexity of simple-EVD for correlated-PCA scales as $f^2 r^2 \log n$ where $n$ is the data vector length, $f$ is the condition number of the true data covariance matrix and $r$ is its rank. Here ``sample complexity" refers to the number of samples needed to get a  small enough subspace recovery error with high probability (whp).
The dependence on $f^2$ is problematic for datasets with large condition numbers, and, especially in the high dimensional setting where $n$ is large. 
(2) To address this, we also develop and analyze a generalization of simple-EVD, called {\em cluster-EVD}. Under an eigenvalues' ``clustering" assumption, cluster-EVD weakens the dependence on $f$.

To our best knowledge, the correlated-PCA problem has not been explicitly studied. 
We first encountered it while solving the dynamic robust PCA problem in the Recursive Projected Compressive Sensing (ReProCS) framework \cite{rrpcp_allerton,rrpcp_perf,rrpcp_isit15,rrpcp_aistats}. The version of correlated-PCA studied here is motivated by these works.
Some other somewhat related recent works include \cite{sebro_PCA,shamir_PCA} that study stochastic optimization based techniques for PCA; and \cite{onlinePCA,onlinePCA1,onlinePCA2,onlinePCA3} that study online PCA.

{\bf Notation. }
We use the interval notation $[a, b]$ to mean all of the integers between $a$ and $b$, inclusive, and similarly for $[a,b)$ etc.
We use $\|\cdot \|$ to denote the $l_2$ norm of a vector or the induced $l_2$ norm of a matrix. For other $l_p$ norms, we use $\|\cdot \|_p$. For a set $\T$, $\I_{\mathcal{T}}$ refers to an $n \times |\mathcal{T}|$ matrix of columns of the identity matrix indexed by entries in $\mathcal{T}$. For a matrix $\bm{A}$,  $\bm{A}_{\mathcal{T}} := \bm{AI}_{\mathcal{T}}$. A tall matrix with orthonormal columns is referred to as a {\em basis matrix}. For basis matrices $\Phat$ and $\P$, we  quantify the subspace error (SE) between their range spaces using
\beq
\SE(\Phat,\P):= \|(\I - \Phat \Phat') \P \|.
\label{SE_def}
\eeq

\Subsection{Correlated-PCA: Problem Definition}\label{probdef}

We are given a time sequence of data vectors, $\yt$, that satisfy
\bea \label{yt_mod}
\y_t = \lt + \wt, \text{ with } \wt = \bm{M}_t \lt  \text{ and } \lt = \P \at  
\eea
where $\P$ is an $n \times r$ basis matrix with $r \ll n$. Here $\lt$ is the true data vector that lies in a low dimensional subspace of $\R^n$, $\Span(\P)$; $\at$ is its projection into this $r$-dimensional subspace; and $\wt$ is the data-dependent noise. We refer to $\M_t$ as the correlation / data-dependency matrix.  The {\em goal} is to estimate $\Span(\P)$. We make the following assumptions on $\lt$ and $\M_t$.

\begin{sigmodel}
\label{lt_mod} 
The subspace projection coefficients, $\at$, are zero mean, mutually independent and bounded random vectors (r.v.), with a diagonal covariance matrix $\Lam$. Define $\lambda^- :=\lambda_{\min}(\Lam)$, $\lambda^+:=\lambda_{\max}(\Lam)$ and $f:=\frac{\lambda^+}{\lambda^-}$.
Since the $\at$'s are bounded, we can also define a finite constant 
$
\eta:=\max_{j=1,2, \dots r}  \max_t \frac{( \at)_j^2}{\lambda_j}.
$
Thus, $(\at)_j^2 \le \eta \lambda_j$.
\end{sigmodel}
For most bounded distributions, $\eta$ will be a small constant more than one, e.g., if the distribution of all entries of $\at$ is iid zero mean uniform, then $\eta =3$.
From Assumption \ref{lt_mod}, clearly, the $\lt$'s are also zero mean, bounded, and mutually independent r.v.'s with a rank $r$ covariance matrix $\bm{\Sigma} \overset{\mathrm{EVD}}{=} \P \Lam \P'$.
In the model, for simplicity, we assume $\Lam$ to be fixed. However, even if we replace $\Lam$ by $\Lamt$ and define $\lambda^-  = \min_t \lambda_{\min}(\Lamt)$ and  $\lambda^+=\lambda_{\max}(\Lamt)$,  all our results will still hold.


\begin{sigmodel}
\label{Mt_model}
Decompose $\M_t$ as $\M_t = \M_{2,t} \M_{1,t}$. Assume that
\bea
\|{\bm{M}_{1,t}} \P \| \le q < 1, \ \|{\bm{M}_{2,t}}\| \le 1,
\label{simp_bnd}
\eea
and, for any sequence of positive semi-definite Hermitian matrices, $\bm{A}_t$, the following holds 
\bea
\text{for a $\bbbeta < \alpha$, } \left\|\frac{1}{\alpha} \sum_{t =1}^\alpha {\bm{M}_{2,t}} \bm{A}_t {\bm{M}_{2,t}}' \right\|  \le {\frac{\bbbeta}{\alpha}} \max_{t \in [1,\alpha]} \|\bm{A}_t\|.
\label{M2t_bnd}
\eea
\end{sigmodel}
We will need the above to hold for all $\alpha \ge \alpha_0$ and for all $\bbbeta \le c_0 \alpha$ with a $c_0 \ll 1$. We set $\alpha_0$ and $c_0$ in Theorems \ref{thm1_pca} and \ref{thm1}; both will depend on $q$.
Observe that, using \eqref{simp_bnd}, $\frac{\|\wt\|}{\|\lt\|} \le q$, and so {\em $q$ is an upper bound on the signal-to-noise ratio (SNR).}

To understand the assumption on $\M_{2,t}$, notice that, if we allow $\bbbeta = \alpha$, then \eqref{M2t_bnd} always holds and is not an assumption. Let $\bm{B}$ denote the matrix on the LHS of \eqref{M2t_bnd}. One example situation when \eqref{M2t_bnd} holds with a $\bbbeta \ll \alpha$ is if $\bm{B}$ is block-diagonal with blocks $\A_t$. In this case, it holds with $\bbbeta = 1$. In fact, it also holds with $\bbbeta=1$ if $\bm{B}$ is permutation-similar to a block diagonal matrix. The matrix $\bm{B}$ will be of this form if $\M_{2,t} = \I_{\T_t}$ with all the sets $\T_t$ being mutually disjoint. 
More generally, if $\bm{B}$ is permutation-similar to a block-diagonal matrix with blocks given by the summation of $\A_t$'s over at most $\bbbeta_0 < \alpha$ time instants, then \eqref{M2t_bnd} holds with $\bbbeta = \bbbeta_0$.  This will happen if $\M_{2,t} = \I_{\T_t}$ with $\T_t = \T^{[k]}$ for at most $\bbbeta_0$ time instants and if  sets $\T^{[k]}$ are mutually disjoint for different $k$. Finally, the $\T^{[k]}$'s need not even be mutually disjoint. As long as they are such that $\bm{B}$ is a matrix with nonzero blocks on only the main diagonal and on a few diagonals near it, e.g., if it is block tri-diagonal, it can be shown that the above assumption holds. This example is generalized in  Assumption \ref{sbyrho} given below. 

\Subsection{Examples of correlated-PCA problems} \label{examples}

One key example of correlated-PCA is the {\em PCA with missing data (PCA-missing)} problem. Let $\T_t$ denote the set of missing entries at time $t$. Suppose, we set the missing entries of $\yt$ to zero. Then,
\bea \label{MC_mod}
\yt = \lt - \I_{\T_t} \I_{\T_t}{}' \lt.
\eea
In this case $\M_{2,t}= \I_{\T_t}$ and $\M_{1,t} = -\I_{\T_t}{}'$. Thus, $q$ is an upper bound on $ \| \I_{\T_t}{}' \P \|$. Clearly, it will be small if the columns of $\P$ are dense vectors.
For the reader familiar with low-rank matrix completion (MC), e.g., \cite{matcomp_first,matcomp_candes}, {\em PCA-missing} can also be solved by first solving the low-rank matrix completion problem to recover $\L$, followed by PCA on the completed matrix. This would, of course, be much more expensive than directly solving {\em PCA-missing} and would need more assumptions. 

Another example where correlated-PCA occurs is that of robust PCA (low-rank + sparse formulation) \cite{rpca,rpca2,rpca_zhang} when the sparse component's magnitude is correlated with $\lt$. Let $\T_t$ denote the support set of $\wt$ and let $\xt$ be the $|\T_t|$-length vector of its nonzero entries. If we assume linear dependency of $\xt$ on $\lt$, we can write out $\yt$ as
\bea \label{RPCA_mod}
\yt = \lt + \I_{\T_t} \xt = \lt + \I_{\T_t} \bm{M}_{s,t} \lt.
\eea
Thus $\M_{2,t}= \I_{\T_t}$ and $\M_{1,t} = \M_{s,t}$ and so $q$ is an upper bound on $\|\M_{s,t} \P\|$.
In the rest of the paper, we refer to this problem is {\em ``PCA with sparse data-dependent corruptions (PCA-SDDC)".}
One key application where it occurs is in foreground-background separation for videos consisting of a slow changing background sequence (modeled as lying close to a low-dimensional subspace) and a sparse foreground image sequence consisting typically of one or more moving objects \cite{rpca}. The PCA-SDDC problem is to estimate the background sequence's subspace.
In this case, $\lt$ is the background image at time $t$, $\T_t$  is the support set of the foreground image at $t$, and $\xt$ is the difference between foreground and background intensities on $\T_t$. 
An alternative solution approach for PCA-SDDC is to use an RPCA solution such as principal components' pursuit (PCP) \cite{rpca,rpca2} or Alternating-Minimization (Alt-Min-RPCA) \cite{robpca_nonconvex} to first recover the matrix $\L$ followed by PCA on $\L$. However, as shown in Sec. \ref{sims}, Table \ref{table_real}, this approach will be much slower; and it will work only if its required incoherence assumptions hold. For example, if the columns of $\P$ are sparse, it fails.
%

For both problems above, a solution for PCA will work only when the {\em corrupting noise $\wt$ is small compared to $\lt$.} A sufficient condition for this is that $q$ is small.

A third example where correlated-PCA and its generalization, correlated-PCA with partial subspace knowledge, occurs is in the subspace update step of Recursive Projected Compressive Sensing (ReProCS) for dynamic robust PCA \cite{rrpcp_perf,rrpcp_aistats}. We refer the reader to \cite{long_simple_proof} to understand this application.

In all three of the above applications, the assumptions on the data-noise correlation matrix given in Assumption \ref{Mt_model} hold if there are enough changes of a certain type in the set of missing or corrupted entries, $\T_t$. One example where this is true is in case of a 1D object of length $s$ or less that remains static for at most $\bbbeta$ frames at a time. When it moves, it moves by at least a certain fraction of $s$ pixels. The following assumption is inspired by the object's support.
%
%
\begin{sigmodel} 
\label{sbyrho}
%
Let $l$ denote the number of times the set $\T_t$ changes in the interval $[1,\alpha]$ (or in any given interval of length $\alpha$ in case of dynamic robust PCA). So $0 \le l \le \alpha-1$.
Let $t^0:=1$; let $t^k$, with $t^k < t^{k+1}$, denote the time instants in this interval at which $\T_t$ changes; and let $\T^{[k]}$ denote the distinct sets. In other words, $\T_t = \T^{[k]}$ for $t \in [t^k, t^{k+1})$, for each $k=1,2, \dots, l$.
%
Assume that the following hold with a $\bbbeta < \alpha$:
\ben
\item \label{moves_often} $(t^{k+1} - t^k) \le \bbeta$ and $|\T^{[k]}| \le s$;
\item \label{moves_enough} ${\rho}^2 \bbeta \leq \bbbeta$ where $\rrho$ is the smallest positive integer so that, for any $0 \le k \le l$,  $\T^{[k]}$ and $\T^{[k+\rrho]}$ are disjoint; 
\item \label{disjoint_motion} for any $k_1, k_2$ satisfying $0 \le k_1 < k_2 \le l$, the sets $(\T^{[k_1]} \setminus \T^{[k_1+1]})$ and  $(\T^{[k_2]} \setminus \T^{[k_2+1]})$ are disjoint.
\een
An implicit assumption for condition \ref{disjoint_motion} to hold is that $\sum_{k=0}^l |\T^{[k]} \setminus \T^{[k+1]}| \le n$. 
Observe that conditions \ref{moves_enough} and \ref{disjoint_motion} enforce an upper bound on the maximum support size $s$.
\end{sigmodel}
To connect Assumption \ref{sbyrho} with the moving object example given above, condition \ref{moves_often} holds if the object's size is at most $s$ and if it moves at least once every $\bbeta$ frames. Condition \ref{moves_enough} holds, if, every time it moves, it moves in the same direction and by at least $\frac{s}{\rrho}$ pixels.
Condition \ref{disjoint_motion} holds if, every time it moves, it moves in the same direction and by at most $d_0 \ge \frac{s}{\rrho}$ pixels, with $d_0 \alpha \le n$ (or, more generally, the motion is such that, if the object were to move at each frame, and if it started at the top of the frame, it does not reach the bottom of the frame in a time interval of length $\alpha$). 

The following lemma \cite{rrpcp_isit15} shows that, with Assumption \ref{sbyrho} on $\T_t$, $\M_{2,t} = \I_{\T_t}$ satisfies the assumption on $\M_{2,t}$ given in Assumption \ref{Mt_model}. Its proof generalizes the discussion below Assumption \ref{Mt_model}.

\begin{lem}\label{blockdiag1}[{\cite{rrpcp_isit15}, Lemmas 5.2 and 5.3}]
Assume that Assumption \ref{sbyrho} holds.
For any sequence of $|\T_t| \times |\T_t|$ symmetric positive-semi-definite matrices $\bm{A}_t$, 
\begin{align*}
\|  \sum_{t=1}^\alpha \I_{\T_t} \bm{A}_t {\bm{I}_{\mathcal{T}_t}}' \| &\leq  (\rho^2 \bbeta) \max_{t \in [1,\alpha]} \| \bm{A}_t\| \le  \bbbeta \max_{t \in [1,\alpha]} \| \bm{A}_t\|
\end{align*}
Thus, if $\|\I_{\T_t}{}' \P\| \le q < 1$, then the PCA-missing problem satisfies Assumption \ref{Mt_model}. If $\|\M_{s,t} \P\| \le q < 1$, then the PCA-SDDC problem satisfies Assumption \ref{Mt_model}.
\end{lem}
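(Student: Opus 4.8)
The plan is to reduce both claims to the single deterministic matrix inequality displayed in Lemma~\ref{blockdiag1}, since the concluding statements about {\em PCA-missing} and {\em PCA-SDDC} follow from it at once. In both problems $\M_{2,t}=\I_{\T_t}$, so $\|\M_{2,t}\|=\|\I_{\T_t}\|=1$; the left-hand side of \eqref{M2t_bnd} evaluated at an arbitrary $|\T_t|\times|\T_t|$ PSD $\bm{A}_t$ is exactly $\frac1\alpha\big\|\sum_{t=1}^\alpha \I_{\T_t}\bm{A}_t\I_{\T_t}{}'\big\|$, so the inequality yields \eqref{M2t_bnd} with the constant $\bbbeta$ of Assumption~\ref{sbyrho} (which satisfies $\bbbeta<\alpha$); and $\|\M_{1,t}\P\|$ equals $\|\I_{\T_t}{}'\P\|\le q$ for {\em PCA-missing} and $\|\M_{s,t}\P\|\le q$ for {\em PCA-SDDC} by the stated hypotheses, giving \eqref{simp_bnd}. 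Thus all the work is in the matrix inequality.

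For that, the first step is to regroup the time-sum over the maximal intervals on which $\T_t$ is constant. With the notation of Assumption~\ref{sbyrho}, $\sum_{t=1}^\alpha \I_{\T_t}\bm{A}_t\I_{\T_t}{}' = \sum_{k=0}^{l}\I_{\T^{[k]}}\bm{B}_k\I_{\T^{[k]}}{}'$, where $\bm{B}_k:=\sum_{t=t^k}^{t^{k+1}-1}\bm{A}_t$ is PSD and, by condition~\ref{moves_often} (at most $\bbeta$ terms, each of norm at most $b:=\max_{t\in[1,\alpha]}\|\bm{A}_t\|$), satisfies $\|\bm{B}_k\|\le\bbeta\, b$. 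Since every summand is PSD, the operator norm of the regrouped sum equals $\max_{\|z\|=1}\sum_{k=0}^l (\I_{\T^{[k]}}{}'z)'\bm{B}_k(\I_{\T^{[k]}}{}'z)\le \bbeta\, b\,\max_{\|z\|=1}\sum_{k=0}^l\|\I_{\T^{[k]}}{}'z\|^2 = \bbeta\, b\,\max_{\|z\|=1}\sum_{i} z_i^2\, m_i$, where $m_i:=|\{k\in[0,l]:i\in\T^{[k]}\}|$ is the number of groups that cover coordinate $i$. Hence everything comes down to bounding $\max_i m_i$.

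The crux — and the only place conditions~\ref{moves_enough} and~\ref{disjoint_motion} enter — is the claim $m_i\le\rho^2$ (in fact the structure forces $m_i\le\rho$). Fix $i$ and set $\T^{[l+1]}:=\emptyset$. If two distinct maximal runs of consecutive indices in $\{k:i\in\T^{[k]}\}$ had right endpoints $b_1\ne b_2$, then $i\in\T^{[b_1]}\setminus\T^{[b_1+1]}$ and $i\in\T^{[b_2]}\setminus\T^{[b_2+1]}$, contradicting the disjointness in condition~\ref{disjoint_motion}; so $\{k:i\in\T^{[k]}\}$ is a single block $\{a,a+1,\dots,a+m_i-1\}$. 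If $m_i>\rho$, this block contains both $a$ and $a+\rho$, forcing $i\in\T^{[a]}\cap\T^{[a+\rho]}$ and contradicting condition~\ref{moves_enough}; hence $m_i\le\rho\le\rho^2$. Substituting $\max_i m_i\le\rho^2$ gives $\big\|\sum_{t=1}^\alpha\I_{\T_t}\bm{A}_t\I_{\T_t}{}'\big\|\le\rho^2\bbeta\, b$, and $\rho^2\bbeta\le\bbbeta$ by condition~\ref{moves_enough}, which is the asserted chain. The main obstacle is precisely this combinatorial covering bound (the one for which Assumption~\ref{sbyrho} was tailored); the matrix-analytic part is just the elementary fact that a sum of PSD blocks has operator norm at most the worst block norm times the maximal coordinate multiplicity — a direct generalization of the block-(tri)diagonal examples discussed below Assumption~\ref{Mt_model}.
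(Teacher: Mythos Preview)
Your proof is correct. The paper itself does not prove this lemma --- it attributes the result to \cite{rrpcp_isit15}, Lemmas~5.2 and~5.3, and remarks only that ``its proof generalizes the discussion below Assumption~\ref{Mt_model}'' (the block-diagonal / block-banded heuristic). Your argument is exactly that generalization carried out cleanly: you regroup by constant-support intervals, bound the PSD sum via the quadratic-form identity $\max_{\|z\|=1}\sum_k\|\I_{\T^{[k]}}{}'z\|^2=\max_i m_i$, and then use conditions~\ref{moves_enough} and~\ref{disjoint_motion} to show the coverage set $\{k:i\in\T^{[k]}\}$ is a single consecutive block of length at most $\rho$.

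One remark: your combinatorial step actually yields $m_i\le\rho$, hence the sharper bound $\rho\,\bbeta\,\max_t\|\bm{A}_t\|$ rather than the $\rho^2\,\bbeta$ stated in the lemma; you then weaken to $\rho^2\,\bbeta$ via $\rho\le\rho^2$. The extra factor of $\rho$ in the cited result presumably comes from a less direct block-banded norm estimate (e.g., splitting into $\rho$ block-diagonal pieces and applying the triangle inequality, or a Cauchy--Schwarz step on overlapping blocks), whereas your variational argument avoids that loss. Either way, your bound implies the lemma as stated.
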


Assumption \ref{sbyrho} is one  model on $\T_t$ that ensures that, if $\M_{2,t} = \I_{\T_t}$, the assumption on $\M_{2,t}$ given in Assumption \ref{Mt_model} holds. For its many generalizations, see Supplementary Material, Sec. \ref{more_examp}, or \cite{rrpcp_isit15}.

\Section{Simple EVD} \label{simple_evd}

Simple EVD computes the top eigenvectors of the empirical covariance matrix, $\frac{1}{\alpha}\sum_{t=1}^\alpha \yt \yt{}'$, of the observed data.
The following can be shown. 

\begin{theorem}[simple-EVD result] \label{thm1_pca}
Let $\Phat$ denote the matrix containing all the eigenvectors of $\frac{1}{\alpha}\sum_{t=1}^\alpha \yt \yt{}'$ with eigenvalues above a threshold, $\thresh$, as its columns.
Pick a $\zeta$ so that $r \zeta \le 0.01$.
Suppose that $\yt$'s satisfy \eqref{yt_mod} and the following hold.
\ben
\item  Assumption \ref{lt_mod} on $\lt$ holds.
Define
\[
\alpha_0:=C \eta^2 \frac{r^2 11
\log n }{ ( r \zeta)^2}  \max ( f, q f, q^2f )^2, \ C:= \frac{32 }{ 0.01^2}. 
\]

\item  Assumption \ref{Mt_model} on $\M_t$ holds for any $\alpha \ge \alpha_0$ and for any $\bbbeta$ satisfying
\[
\frac{\bbbeta}{\alpha} \le  \left( \frac{1 - r\zeta}{2} \right)^2 \min \left( \frac{ (r \zeta)^2 }{4.1 (q f)^2} , \frac{ (r \zeta) }{q^2 f} \right)
\]

\item  Set algorithm parameters $\thresh = 0.95 \lambda^-$ and $\alpha \ge \alpha_0$.

\een
Then, with probability at least  $1-6n^{-10}$, 
$
\SE(\Phat,\P) \le  r \zeta.
$%
\end{theorem}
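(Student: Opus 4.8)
The plan is to write the empirical covariance matrix as a signal part plus a perturbation, $\frac1\alpha\sum_{t=1}^\alpha\yt\yt' = \hat{\bm\Sigma}_{\bm\ell} + \bm H$, where $\hat{\bm\Sigma}_{\bm\ell}:=\frac1\alpha\sum_t\lt\lt' = \P\big(\frac1\alpha\sum_t\at\at'\big)\P'$ and $\bm H:=\frac1\alpha\sum_t(\lt\wt' + \wt\lt' + \wt\wt')$. Since $\range(\hat{\bm\Sigma}_{\bm\ell})\subseteq\range(\P)$, a Weyl-plus-$\sin\theta$ (Davis--Kahan) argument will reduce the claim to two quantitative estimates: (i) a lower bound $\lambda_r(\hat{\bm\Sigma}_{\bm\ell})\ge(1-\epsilon_0)\lambda^-$ for a small $\epsilon_0$, which, together with $\|\bm H\|$ being small, guarantees that exactly the $r$ eigenvectors spanning $\range(\P)$ exceed the threshold $\thresh=0.95\lambda^-$ (by Weyl, the other eigenvalues of the perturbed matrix are $\le\|\bm H\|<\thresh$); and (ii) an upper bound on $\|\bm H\|$, which then gives $\SE(\Phat,\P)\le\|\bm H\|/(\lambda_r(\hat{\bm\Sigma}_{\bm\ell})-\|\bm H\|)$. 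It will then suffice to show $\|\bm H\|\le c\,r\zeta\,\lambda^-$ for a constant $c<1$; since $r\zeta\le0.01$, this yields $\SE(\Phat,\P)\le r\zeta$ once $\epsilon_0$ is absorbed.

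For step (i) --- and for the companion bound $\|\hat{\bm\Sigma}_{\bm\ell}\|\le(1+\epsilon_0)\lambda^+$ needed later --- I would apply a matrix Hoeffding inequality to the independent, zero-mean, bounded summands $\at\at'-\Lam$. The boundedness in Assumption~\ref{lt_mod} is what makes this quantitative ($\|\at\|^2\le\eta r\lambda^+$, so $\|\at\at'\|\le\eta r\lambda^+$); asking for failure probability $\le n^{-10}$ against an ambient-dimension-$n$ union bound produces the factor $11\log n$, and pushing $\epsilon_0$ down to the $O(0.01)$ level produces the $\eta^2 r^2 f^2\log n/(r\zeta)^2$ term in $\alpha_0$.

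For step (ii), I would split $\bm H = \E[\bm H] + (\bm H-\E[\bm H])$ and exploit the factorization $\M_t=\bm{M}_{2,t}\bm{M}_{1,t}$: put $\G_t:=\bm{M}_{1,t}\P$ (so $\|\G_t\|\le q$) and $\bnut:=\bm{M}_{1,t}\lt=\G_t\at$ (so $\|\bnut\|\le q\|\at\|$ and $\wt=\bm{M}_{2,t}\bnut$). The mean of the $\wt\wt'$ term is $\frac1\alpha\sum_t\bm{M}_{2,t}(\G_t\Lam\G_t')\bm{M}_{2,t}'$, which is the quadratic form of \eqref{M2t_bnd} with $\bm{A}_t=\G_t\Lam\G_t'$, $\|\bm{A}_t\|\le q^2\lambda^+$, hence $\le(\bbbeta/\alpha)q^2\lambda^+$. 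The mean of the cross term is $\frac1\alpha\sum_t\P\Lam\G_t'\bm{M}_{2,t}'$ plus its transpose; I would bound it by a Cauchy--Schwarz inequality for sums of outer products, $\big\|\sum_t\G_t'\bm{M}_{2,t}'\big\|^2\le\big\|\sum_t\G_t'\G_t\big\|\,\big\|\sum_t\bm{M}_{2,t}\bm{M}_{2,t}'\big\|\le(\alpha q^2)\,\bbbeta$, where the last factor uses \eqref{M2t_bnd} with $\bm{A}_t=\I$; this gives $\le q\lambda^+\sqrt{\bbbeta/\alpha}$. These two expressions are precisely what the hypothesis on $\bbbeta/\alpha$ is calibrated against --- the factor $\big(\tfrac{1-r\zeta}{2}\big)^2$ there absorbing the numerical slack so the sum stays below a fixed fraction of $r\zeta\lambda^-$. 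The fluctuation $\bm H-\E[\bm H]$ is once more a sum of independent, zero-mean, bounded matrices ($\|\lt\wt'\|\le q\eta r\lambda^+$, $\|\wt\wt'\|\le q^2\eta r\lambda^+$), so matrix Hoeffding together with $\alpha\ge\alpha_0$ will drive it below the remaining slack; the three scalings in $\max(f,qf,q^2f)^2$ correspond to the signal, cross, and noise contributions. A union bound over the concentration events then yields probability at least $1-6n^{-10}$.

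The hard part is that one cannot apply a concentration inequality directly to $\frac1\alpha\sum_t\yt\yt'$, nor even to the cross term $\frac1\alpha\sum_t\lt\wt'$: because $\wt=\M_t\lt$ is deterministically tied to $\lt$, this cross term has a mean as large as $\Theta(q\lambda^+)$ and its summands are aligned rather than incoherent, so a naive estimate only gives $\|\bm H\|=O(q\lambda^+)$, which does not shrink with $\alpha$ and is far too large to be useful. The entire role of Assumption~\ref{Mt_model} --- the factorization $\M_t=\bm{M}_{2,t}\bm{M}_{1,t}$ together with the block-diagonal-type averaging bound \eqref{M2t_bnd} on $\bm{M}_{2,t}$ --- is to route the mean through a cancellation and recover the $\sqrt{\bbbeta/\alpha}$ (respectively $\bbbeta/\alpha$) decay. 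The delicate point in the bookkeeping is to set up the Cauchy--Schwarz split so that no stray $\sqrt{\eta r}$ factor attaches to the mean, keeping the $\eta$ and $r$ dependence confined to the $\alpha_0$/concentration side and thereby matching exactly the stated $\bbbeta/\alpha$ condition.
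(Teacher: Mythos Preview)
Your proposal is correct and follows essentially the same route as the paper: decompose the empirical covariance as $\frac{1}{\alpha}\sum_t\lt\lt'$ plus the perturbation $\cH=\frac{1}{\alpha}\sum_t(\lt\wt'+\wt\lt'+\wt\wt')$, apply the Davis--Kahan $\sin\theta$ bound, control $\lambda_r(\frac{1}{\alpha}\sum_t\lt\lt')$ and each piece of $\cH$ by matrix Hoeffding, and handle the nonzero means via the Cauchy--Schwarz-for-sums inequality together with Assumption~\ref{Mt_model} (using $\bm A_t=\I$ for the cross term to extract the $\sqrt{\bbbeta/\alpha}$ factor and $\bm A_t=\G_t\Lam\G_t'$ for the noise term to extract $\bbbeta/\alpha$). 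Your explicit verification that thresholding at $\thresh$ selects exactly $r$ eigenvectors is a point the paper leaves implicit, but otherwise the argument, the key lemma, and the bookkeeping match.
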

{\em Proof: } The proof involves a careful application of the $\sin \theta$ theorem \cite{davis_kahan} to bound the subspace error, followed by using matrix Hoeffding \cite{tail_bound} to obtain high probability bounds on each of the terms in the $\sin \theta$ bound. It is given in the Supplementary Material, Section \ref{proof_thm1_pca}.

Consider the lower bound on $\alpha$. We refer to this as the ``sample complexity".
Since $q < 1$, and $\eta$ is a small constant (e.g., for the uniform distribution, $\eta=3$), for a fixed error level, $r\zeta$, $\alpha_0$ simplifies to $c f^2 r^2 \log n$. Notice that the dependence on $n$ is logarithmic. It is possible to show that the sample complexity scales as $\log n$ because we assume that the $\lt$'s are bounded r.v.s. As a result we can apply the matrix Hoeffding inequality \cite{tail_bound} to bound the perturbation between the observed data's empirical covariance matrix and that of the true data. 
The bounded r.v. assumption is actually a more practical one than the usual Gaussian assumption since most sources of data have finite power.

By replacing matrix Hoeffding by Theorem 5.39 of \cite{vershynin} in places where one can apply a concentration of measure result to $\sum_t \at \at{}' / \alpha$ (which is at $r \times r$ matrix), and by matrix Bernstein \cite{tail_bound} elsewhere, it should be possible to further reduce the sample complexity to $c \max( (q f)^2 r \log n, f^2 (r + \log n))$.
It should also be possible remove the boundedness assumption and replace it by a Gaussian (or a sub-Gaussian) assumption, but, that would increase the sample complexity to $c (qf)^2 n$.

%
%

Consider the upper bound on $\beta/\alpha$. Clearly, the smaller term is the first one. This depends on $1/(q f)^2$. Thus, when $f$ is large and $q$ is not small enough, the bound required may be impractically small. As will be evident from the proof (see Remark \ref{cor_wt} in Supplementary Material), we get this bound because $\wt$ is correlated with $\lt$ and this results in $\E[\lt \wt{}'] \neq 0$. 

If $\wt$ and $\lt$ were uncorrelated, $q f$ would get replaced by $\frac{\lambda_{\max}(\cov(\wt))}{\lambda^-}$ in the upper bound on $\beta/\alpha$ as well as in the sample complexity. 

\subsubsection{Application to PCA-missing and PCA-SDDC}
By Lemma \ref{blockdiag1}, the following is immediate.
\begin{corollary} \label{cor_missing}
Consider the PCA-missing model, \eqref{MC_mod}, and assume that $\max_t \| \I_{\T_t}{}' \P \| \le q < 1$; or consider the PCA-SDDC model, \eqref{RPCA_mod}, and assume that $\max_t \| \M_{s,t} \P\| \le q < 1$.
Assume that everything in Theorem \ref{thm1_pca} holds except that we replace Assumption \ref{Mt_model} by Assumption \ref{sbyrho}.
Then, with probability at least $1-6n^{-10}$, $\SE(\Phat,\P) \le r \zeta$.
\end{corollary}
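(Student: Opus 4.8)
\emph{Proof plan.} The plan is to deduce the corollary directly from Theorem \ref{thm1_pca} by showing that, for both the PCA-missing and the PCA-SDDC models, Assumption \ref{sbyrho} together with the stated incoherence bound implies Assumption \ref{Mt_model}. Lemma \ref{blockdiag1} is precisely the statement that accomplishes this, so the argument is short and is essentially just a verification of hypotheses.

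First I would fix the decomposition $\M_t = \M_{2,t}\M_{1,t}$ dictated by each model. For PCA-missing, \eqref{MC_mod} gives $\M_{2,t} = \I_{\T_t}$ and $\M_{1,t} = -\I_{\T_t}{}'$; for PCA-SDDC, \eqref{RPCA_mod} gives $\M_{2,t} = \I_{\T_t}$ and $\M_{1,t} = \M_{s,t}$. In both cases $\M_{2,t}$ is a matrix of distinct columns of the identity, so $\|\M_{2,t}\| = 1$, and the bound $\|\M_{1,t}\P\| \le q < 1$ is exactly the hypothesis $\max_t \|\I_{\T_t}{}'\P\| \le q$ (respectively $\max_t \|\M_{s,t}\P\| \le q$) assumed in the corollary. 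Thus the two conditions in \eqref{simp_bnd} hold.

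Next I would invoke Lemma \ref{blockdiag1}: under Assumption \ref{sbyrho}, for any sequence of symmetric positive-semi-definite matrices $\A_t$ of the appropriate sizes, $\|\sum_{t=1}^{\alpha} \I_{\T_t}\A_t\I_{\T_t}{}'\| \le (\rho^2 \bbeta)\max_{t\in[1,\alpha]}\|\A_t\| \le \beta \max_{t\in[1,\alpha]}\|\A_t\|$, where the last inequality is condition \ref{moves_enough} of Assumption \ref{sbyrho}. Dividing by $\alpha$ yields exactly \eqref{M2t_bnd} with this value of $\beta$. Hence Assumption \ref{Mt_model} holds for the same $\alpha$ and $\beta$ for which Assumption \ref{sbyrho} was assumed; and since "everything in Theorem \ref{thm1_pca}" is assumed to hold, this $\beta$ in particular obeys the upper bound on $\beta/\alpha$ demanded by item 2 of that theorem, with $\thresh = 0.95\lambda^-$ and $\alpha \ge \alpha_0$. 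All the hypotheses of Theorem \ref{thm1_pca} are then in force, so its conclusion gives $\SE(\Phat,\P) \le r\zeta$ with probability at least $1 - 6n^{-10}$.

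There is essentially no analytic obstacle: all the probabilistic content lives inside Theorem \ref{thm1_pca}, and the translation of the support model into the averaging bound lives inside Lemma \ref{blockdiag1}. The only thing that requires a little care is bookkeeping — checking that the $\beta = \rho^2\bbeta$ produced by Lemma \ref{blockdiag1} is the very $\beta$ constrained in Theorem \ref{thm1_pca}, and that the matrices $\A_t$ appearing in \eqref{M2t_bnd} and in Lemma \ref{blockdiag1} are size-compatible (they are, since $\M_{2,t} = \I_{\T_t}$ maps $\R^{|\T_t|}$ into $\R^n$, so $\I_{\T_t}\A_t\I_{\T_t}{}'$ with $\A_t$ of size $|\T_t|\times|\T_t|$ is exactly $\M_{2,t}\A_t\M_{2,t}{}'$).
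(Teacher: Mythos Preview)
Your proposal is correct and is exactly the paper's approach: the paper states that the corollary ``is immediate'' from Lemma~\ref{blockdiag1}, which is precisely the hypothesis-verification you describe (Assumption~\ref{sbyrho} plus the incoherence bound forces Assumption~\ref{Mt_model}, after which Theorem~\ref{thm1_pca} applies verbatim). There is nothing to add.
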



\Section{Cluster-EVD}  \label{cluster_evd}
To try to relax the strong dependence on $f^2$ of the result above, we develop a generalization of simple-EVD that we call {\em cluster-EVD}. This requires the clustering assumption.


\newcommand{\rhat}{\hat{r}}

\Subsection{Clustering assumption}\label{cluster_evd_assum}
To state the assumption, define the following partition of the index set $\{1,2, \dots r \}$ based on the eigenvalues of $\bm\Sigma$.
Let $\lambda_i$ denote its $i$-th largest eigenvalue.

\begin{definition}[$g$-condition-number partition of $\{1,2, \dots r \}$] 
\label{def_part}
Define $\cG_1 = \{1,2, \dots r_1\}$ where $r_1$ is the index for which $\frac{\lambda_1}{\lambda_{r_1}} \le g$  and $\frac{\lambda_1}{\lambda_{r_1+1}} > g$. In words, to define  $\cG_{1}$, start with the index of the first (largest) eigenvalue and keep adding indices of the smaller eigenvalues to the set until the ratio of the maximum to the minimum eigenvalue first exceeds $g$. 

For each $k>1$, define $\cG_k = \{r_*+1,r_*+2, \dots, r_*+ r_k \}$ where $r_* = (\sum_{i=1}^{k-1} r_i)$,
 $r_k$ is the index for which $\frac{\lambda_{r_*+1}}{\lambda_{r_* + r_k}} \le g$  and $\frac{\lambda_{r_*+1}}{\lambda_{r_* + r_k+1}} > g$. In words, to define $\cG_k$, start with the index of the $(r_*+1)$-th eigenvalue, and repeat the above procedure.

Stop when $\lambda_{r_* + r_k+1} = 0$, i.e., when there are no more nonzero eigenvalues. Define $\vartheta=k$ as the number of sets in the partition. Thus $\{ \cG_{1}, \cG_{2}, \dots , \cG_{\vartheta} \}$ is the desired partition.
\end{definition}

Define $\G_0 = [.]$,
$\G_k:= (\P)_{\cG_k}$,
$\lambda_{k}^+:= {\max_{i\in\cG_{k}}\lambda_i\left(\Lam \right)}$, $\lambda_{k}^-:= {\min_{i\in\cG_{k}}\lambda_i\left(\Lam \right)}$ and
\[
\chi: = \max_{k=1,2,\dots,\vartheta} \frac{\lambda_{k+1}^+}{\lambda_{k}^-}.
\]
$\chi$  quantifies the ``distance" between consecutive sets of the above partition.  Moreover, by definition,
$\frac{\lambda_{k}^+}{\lambda_{k}^-} \le g.$
Clearly, $g \ge 1$ and $\chi \le 1$ always. We assume the following.

%
%

%

\begin{sigmodel}
\label{clust_model} 
For a $1 \le g^+ < f$ and a $0 \le \chi^+ < 1$,
assume that there exists a $g$ satisfying $1 \le g \le  g^+ $ and a $\chi$ satisfying $0 \le \chi \le \chi^+$, for which we can define a $g$-condition-number partition of $\{1,2, \dots r \}$ that satisfies $\chi \le \chi^+$. 
The number of sets in the partition is $\vartheta$.
When $g^+$ and $\chi^+$ are small, we say that the eigenvalues are ``well-clustered" with ``clusters", $\cG_k$.
%
%
\end{sigmodel}

This assumption can be understood as a generalization of the eigen-gap condition needed by the block power method, which is a fast algorithm for obtaining the $k$ top eigenvectors of a matrix \cite{golub2000eigenvalue}.
We expect it to hold for data that has variability across different scales. The large scale variations would result in the first (largest eigenvalues') cluster and the smaller scale variations would form the later clusters. This would be  true, for example, for video ``textures" such as moving waters or waving trees in a forest. We tested this assumption on some such videos.
We describe our conclusions here for three videos - ``lake" (video of moving lake waters), ``waving-tree" (video consisting of waving trees), and ``curtain" (video of window curtains moving due to the wind). 
For each video, we first made it low-rank by keeping the eigenvectors corresponding to the smallest number of eigenvalues that contain at least 90\% of the total energy and projecting the video onto this subspace. 
%
For the low-rankified lake video, $f= 74$  and Assumption \ref{clust_model} holds with  $\vartheta = 6$ clusters, $g^+  = 2.6$ and $\chi^+ = 0.7$. For the waving-tree video, $f= 180$ and Assumption \ref{clust_model} holds with $\vartheta=6$, $g^+=9.4$ and $\chi^+=0.72$. For the curtain video,  $f= 107$ and the assumption holds $\vartheta=3$, $g^+=16.1$ and $\chi^+=0.5$. We show the clusters of eigenvalues in Fig. \ref{evals_plot}.

%
\begin{figure}
\begin{center}
	\begin{tabular}{c}
		\includegraphics[width=78mm, height=30mm]{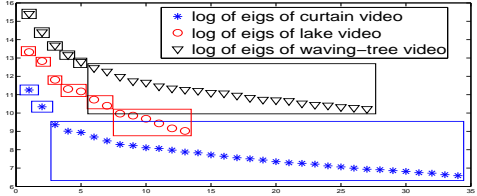}
	\end{tabular}
	\end{center}
\vspace{-0.3in}
\caption{\small{Eigenvalue clusters of the three low-rankified videos.}
}\label{evals_plot}
\vspace{-0.25in}
\end{figure}

\Subsection{Cluster-EVD algorithm} \label{algo}
The cluster-EVD approach is summarized in Algorithm \ref{cPCA_algo}. I
Its main idea is as follows.
%
We start by computing the empirical covariance matrix of the first set of $\alpha$ observed data points, $\Mcpca_1:= \frac{1}{\alpha} \sum_{t=1}^\alpha \yt \yt{}'$. Let $\hat\lambda_i$ denote its $i$-th largest eigenvalue.
To estimate the first cluster, $\hat{\cG}_{1}$, we start with the index of the first (largest) eigenvalue and keep adding indices of the smaller eigenvalues to it until the ratio of the maximum to the minimum eigenvalue exceeds $\ghatp$ or until the minimum eigenvalue goes below a ``zero threshold", $\thresh$. 
Then, we estimate the first cluster's subspace, $\Span(\bm{G}_{1})$ by computing the top $\rhat_1$ eigenvectors of $\Mcpca_1$.
To get the second cluster and its subspace, we project the next set of $\alpha$ $\yt$'s orthogonal to $\hat{\bm{G}}_{1}$ followed by repeating the above procedure. This is repeated for each $k>1$. The algorithm stops when $\hat\lambda_{\rhat_k+1} < \thresh$.

Algorithm \ref{cPCA_algo} is related to, but significantly different from, the ones introduced in \cite{rrpcp_perf,rrpcp_aistats} for the subspace deletion step of ReProCS.
The one introduced in \cite{rrpcp_perf} assumed that the clusters were known to the algorithm (which is unrealistic). The one studied in \cite{rrpcp_aistats} has an automatic cluster estimation approach, but, one that needs a larger lower bound on $\alpha$ compared to what Algorithm \ref{cPCA_algo} needs. 
%
%

\begin{algorithm}[t!]
\caption{\small{\bf Cluster-EVD}}
\label{cPCA_algo}
{\bf Parameters: } $\alpha$, $\ghatp$, $\thresh$.

Set $\hat{\bm{G}}_{0} \leftarrow [.]$. Set the flag $\mathrm{Stop} \leftarrow 0$. Set $k \leftarrow 1$. 

Repeat
\ben
\Item  Let $\hat{\bm{G}}_{\det,k}:= [\hat{\bm{G}}_{0},\hat{\bm{G}}_{1}, \dots \hat{\bm{G}}_{k-1}]$ and let $\bm\Psi_k:=(\I -\hat{\bm{G}}_{\det,k} \hat{\bm{G}}_{\det,k}{}')$. Notice that $\bm\Psi_{1}=\I$. Compute
  \[
 \Mcpca_k = \bm\Psi_k \left(\frac{1}{\alpha}\sum_{t = (k-1) \alpha+1}^{k\alpha} \lhat_t \lhat_t{}'\right) \bm\Psi_k
  \]

\Item  Find the $k$-th cluster, $\hat{\cG}_k$: let $\hat\lambda_i = \lambda_i(\Mcpca_k)$;
\ben
\Item  find the index $\rhat_k$ for which $\frac{\hat\lambda_1}{\hat\lambda_{\rhat_k}} \le \ghatp$  and either  $\frac{\hat\lambda_1}{\hat\lambda_{\rhat_k+1}} > \ghatp$ or $\hat\lambda_{\rhat_k+1} < \thresh$;

\Item  set $\hat{\cG}_k = \{ \hat{r}_*+1, \hat{r}_* +2, \dots, \hat{r}_* + \rhat_k  \}$ where $\hat{r}_* := \sum_{j=1}^{k-1} \rhat_j$;

\Item  if $\hat\lambda_{\rhat_k+1} < \thresh$, update the flag $\mathrm{Stop} \leftarrow 1$
\een

\Item  Compute $\hat{\bm{G}}_{k} \leftarrow \mathrm{eigenvectors}(\Mcpca_k, \rhat_k)$;
increment $k$
\een
Until $\mathrm{Stop} == 1.$

Set $\hat\vartheta \leftarrow k$. Output $\Phat \leftarrow [\hat{\bm{G}}_{1} \cdots \hat{\bm{G}}_{\hat\vartheta}]$.

$\mathrm{eigenvectors}(\bm{\mathcal{M}},r)$ returns a basis matrix for the span of the top $r$ eigenvectors of $\bm{\mathcal{M}}$.
\end{algorithm}

\Subsection{Main result}
We give the performance guarantee for Algorithm \ref{cPCA_algo} here. Its parameters are set as follows. We set $\ghatp$ to a value that is a little larger than $g$. This is needed to allow for the fact that $\hat\lambda_i$ is not equal to the $i$-th eigenvalue of $\Lam$ but is within a small margin of it. For the same reason, we need to also use a nonzero ``zeroing" threshold, $\thresh$, that is larger than zero but smaller than $\lambda^-$. We set $\alpha$ large enough to ensure that  $\SE(\Phat,\P) \le r\zeta$ holds with a high enough probability. 

\begin{theorem}[cluster-EVD result] \label{thm1}
Consider Algorithm \ref{cPCA_algo}.
%
Pick a $\zeta$ so that
$
r^2 \zeta \le 0.0001, \ \text{and} \ r^2\zeta f \le 0.01. 
$
Suppose that $\yt$'s satisfy \eqref{yt_mod} and the following hold.
\ben
\item  Assumption \ref{lt_mod} and Assumption \ref{clust_model} on $\lt$ hold with $\chi^+$ satisfying $\chi^+ \le \min(1 -  r\zeta - \frac{0.08}{0.25}, \frac{g^+ - 0.0001}{1.01g^++0.0001} - 0.0001)$. 
Define
\begin{align*}
& \alpha_0:=C \eta^2 \frac{r^2 (11\log n + \log \vartheta) }{ (r\zeta)^2}  \max ( g^+, q g^+,  \\
& q^2 f, q (r\zeta)f, (r \zeta)^2 f, q\sqrt{f g^+},(r\zeta)\sqrt{f g^+} )^2, \ C:= \frac{32 \cdot 16}{0.01^2}.
\end{align*}

\item  Assumption \ref{Mt_model} on $\M_t$ holds with $\alpha \ge \alpha_0$ and with $\bbbeta$ satisfying
\[
\frac{\bbbeta}{\alpha} \le  \left( \frac{(1 - r\zeta - \chi^+)}{2}  \right)^2 \min \left( \frac{ (r_k \zeta)^2 }{4.1 (q g^+)^2} , \frac{ (r_k \zeta) }{q^2 f} \right).
\]

\item  Set algorithm parameters  $\ghatp = 1.01g^+ + 0.0001$, $\thresh = 0.95 \lambda^-$ and $\alpha \ge \alpha_0$.

\een
Then, with probability at least $1-12n^{-10}$,
$
\SE(\Phat,\P) \le  r \zeta.
$
\end{theorem}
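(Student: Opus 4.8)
The plan is to run an induction on the cluster index $k=1,\dots,\hat\vartheta$, reducing each outer iteration of Algorithm \ref{cPCA_algo} to a variant of the single-subspace argument behind Theorem \ref{thm1_pca}. Write $\bm{G}_{\det,k}^* := [\G_1,\dots,\G_{k-1}]$ for the \emph{true} basis of the already-recovered clusters and $\bm{\Psi}_k^* := \I - \bm{G}_{\det,k}^*(\bm{G}_{\det,k}^*)'$ for the true projection, and recall that the algorithm instead uses $\bm{\Psi}_k = \I - \hat{\bm{G}}_{\det,k}\hat{\bm{G}}_{\det,k}'$. The induction hypothesis, to be propagated over a high-probability event, is that the first $k-1$ iterations returned the correct ranks ($\rhat_j = r_j$) and subspace estimates obeying a per-cluster bound $\SE(\hat{\bm{G}}_j,\G_j) \lesssim r_j\zeta$, so that the accumulated error $\varepsilon_k := \SE(\hat{\bm{G}}_{\det,k},\bm{G}_{\det,k}^*) = \|\bm{\Psi}_k\bm{G}_{\det,k}^*\|$ is of order $r\zeta$, and in particular $\varepsilon_k f \le 0.01$ by the hypothesis $r^2\zeta f \le 0.01$. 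The base case $k=1$ is essentially Theorem \ref{thm1_pca} applied to the top ($g^+$-conditioned) cluster.

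For the inductive step, decompose $\lt = \bm{G}_{\det,k}^*\bm{u}_t + \bm{G}_{\ge k}\bm{v}_t$ with $\bm{G}_{\ge k} := [\G_k,\dots,\G_\vartheta]$, so that $\bm{\Psi}_k\yt = \bm{\Psi}_k\bm{G}_{\ge k}\bm{v}_t + \bm{\Psi}_k\bm{G}_{\det,k}^*\bm{u}_t + \bm{\Psi}_k\wt$. The first term is the \emph{signal}: the ideal covariance $\bm{\Psi}_k^*\bm{\Sigma}\bm{\Psi}_k^*$ has its top $r_k$ eigenvalues in $[\lambda_k^-,\lambda_k^+]$ and its $(r_k{+}1)$-th eigenvalue equal to $\lambda_{k+1}^+ \le \chi\lambda_k^- \le \chi^+\lambda_k^-$, hence an eigengap $\ge (1-\chi^+)\lambda_k^-$. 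The second term is \emph{leakage} from the earlier clusters, present only because $\bm{\Psi}_k\ne\bm{\Psi}_k^*$; using $\|\bm{\Psi}_k\bm{G}_{\det,k}^*\| = \varepsilon_k$ and boundedness of the sub-vectors $\bm{u}_t,\bm{v}_t$ of $\at$ (Assumption \ref{lt_mod}), its contribution to the empirical covariance is $O(\varepsilon_k\sqrt{\lambda_1^+\lambda_k^+})$ in the cross terms and $O(\varepsilon_k^2\lambda_1^+)$ in the quadratic term (times mild concentration factors). The third term is the data-dependent noise, handled exactly as in Theorem \ref{thm1_pca}: split $\wt = \M_{2,t}\M_{1,t}\lt$, use $\|\M_{1,t}\P\|\le q$ and $\|\M_{2,t}\|\le 1$ from \eqref{simp_bnd} on the $\bm{\Psi}_k\wt\wt'\bm{\Psi}_k$ term, and use \eqref{M2t_bnd} (with the stated bound on $\bbbeta/\alpha$) to control $\frac1\alpha\sum_t\bm{\Psi}_k\lt\wt'\bm{\Psi}_k$, which is where the $qg^+,\ q^2f,\ q\sqrt{fg^+}$ dependencies enter. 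Then apply the $\sin\theta$ theorem \cite{davis_kahan} to bound $\SE(\hat{\bm{G}}_k,\G_k) \le \|\bE_k\|/\big((1-\chi^+)\lambda_k^- - \|\bE_k\|\big)$, where $\bE_k := \Mcpca_k - \bm{\Psi}_k^*\bm{\Sigma}\bm{\Psi}_k^*$, and bound each piece of $\bE_k$ (purely stochastic, noise-cross, leakage) by a slice of $r_k\zeta\lambda_k^-$ with high probability via the matrix Hoeffding inequality \cite{tail_bound}; this is where $\alpha\ge\alpha_0$ is used, and matching the three kinds of pieces produces exactly the maximum of the terms appearing in $\alpha_0$ and the upper bound on $\bbbeta/\alpha$ (the leakage pieces, scaled by $\lambda_1^+/\lambda_k^- \le f$ relative to the target, contribute the $q(r\zeta)f$, $(r\zeta)^2f$, $(r\zeta)\sqrt{fg^+}$ terms absent from the simple-EVD result).

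It remains to check that the cluster-detection test in Step 2 of Algorithm \ref{cPCA_algo} returns the correct partition, $\rhat_k = r_k$ for every $k$, on the good event. Since the eigenvalues $\hat\lambda_i$ of $\Mcpca_k$ lie within $\|\bE_k\|$ of those of $\bm{\Psi}_k^*\bm{\Sigma}\bm{\Psi}_k^*$ --- whose nonzero eigenvalues are $\lambda_{r_*+1}\ge\cdots\ge\lambda_{r_*+r_k}$ with $r_* := \sum_{j<k}r_j$, all in $[\lambda_k^-,\lambda_k^+]$, and whose next eigenvalue is $\lambda_{k+1}^+\le\chi^+\lambda_k^-$ --- one verifies: (i) within cluster $k$ the ratio $\hat\lambda_1/\hat\lambda_i$ stays below $\ghatp = 1.01g^+ + 0.0001$ for $i\le r_k$ (the true ratio is $\le g\le g^+$ and the margin in $\ghatp$ absorbs the perturbation, the other place $r^2\zeta f\le 0.01$ is used) and $\hat\lambda_{r_k} \ge \thresh = 0.95\lambda^-$; (ii) at a non-final cluster boundary $\hat\lambda_1/\hat\lambda_{r_k+1} > \ghatp$, using $\lambda_{r_k+1}=\lambda_{k+1}^+\le\chi^+\lambda_k^-$ and the second bound on $\chi^+$ (of the form $\frac{g^+ - 0.0001}{\ghatp} - 0.0001$), while for the last cluster instead $\hat\lambda_{r_k+1} < \thresh$, since beyond cluster $\vartheta$ the ideal eigenvalues vanish whereas $0.95\lambda^-$ lies above the perturbation floor and below every nonzero $\lambda_i$. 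The first bound on $\chi^+$ (of the form $1-r\zeta-0.32$) is what keeps the eigengap $(1-\chi^+)\lambda_k^-$ large enough for the $\sin\theta$ bound. Finally a union bound over the $\vartheta\le r$ iterations (the source of the $\log\vartheta$ term inside $\alpha_0$ and of the degradation of the failure probability to $12n^{-10}$) yields, on the intersection event, $\rhat_k = r_k$ and small $\SE(\hat{\bm{G}}_k,\G_k)$ for each $k$; combining these across clusters gives $\SE(\Phat,\P)\le r\zeta$.

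The main obstacle is the leakage term: one must show that the error $\varepsilon_k$ accumulated from the earlier clusters, although it enters the current cluster's perturbation $\bE_k$ scaled by the potentially \emph{large} ratio $\lambda_1^+/\lambda_k^- \le f$, is still negligible compared with the target $r_k\zeta\lambda_k^-$. This is exactly what forces the extra hypothesis $r^2\zeta f\le 0.01$ and the second bound on $\chi^+$, and why $\alpha_0$ carries the $(r\zeta)f$-type terms. A secondary subtlety is that the induction must propagate the accumulated subspace error \emph{additively} across the $\vartheta$ clusters --- a multiplicative loss per step would blow up over the iterations and defeat the purpose of replacing $f$ by $g^+$.
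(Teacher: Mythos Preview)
Your overall architecture---induction on $k$, $\sin\theta$ plus matrix Hoeffding, Weyl for the rank-detection step---matches the paper, and your handling of the data-dependent noise and of the cluster test is essentially right. The gap is in the reference matrix you feed to $\sin\theta$.

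You take $\cA = \bm\Psi_k^*\bm\Sigma\bm\Psi_k^* = \G_{\ge k}\bm\Lambda_{\ge k}\G_{\ge k}'$ and bound $\bE_k = \cM_k - \cA$. But the ``signal'' piece of $\cM_k$ has population value $\bm\Psi_k\G_{\ge k}\bm\Lambda_{\ge k}\G_{\ge k}'\bm\Psi_k$, \emph{not} $\G_{\ge k}\bm\Lambda_{\ge k}\G_{\ge k}'$: the estimated projector $\bm\Psi_k$ does not leave $\G_{\ge k}$ invariant. Writing $\bm{D}:=\Ghat_{\det,k}'\G_{\ge k}$ (so $\|\bm{D}\|\le\varepsilon_k$), the difference is
\[
\bm\Psi_k\G_{\ge k}\bm\Lambda_{\ge k}\G_{\ge k}'\bm\Psi_k - \G_{\ge k}\bm\Lambda_{\ge k}\G_{\ge k}'
= -\Ghat_{\det,k}\bm{D}\bm\Lambda_{\ge k}\G_{\ge k}' - \G_{\ge k}\bm\Lambda_{\ge k}\bm{D}'\Ghat_{\det,k}' + O(\varepsilon_k^2\lambda_k^+),
\]
a \emph{deterministic} term of norm $\sim 2\varepsilon_k\lambda_k^+ = 2\varepsilon_k g\,\lambda_k^-$. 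Your listed leakage contributions---$O(\varepsilon_k^2\lambda_1^+)$ quadratic and $O(\varepsilon_k\sqrt{\lambda_1^+\lambda_k^+})$ cross ``times mild concentration factors''---miss this; the cross term you describe is the mean-zero empirical piece $\frac1\alpha\sum_t\bm\Psi_k\G_\det\bm{u}_t\bm{v}_t'\G_{\ge k}'\bm\Psi_k$, which concentration does kill, but the displayed discrepancy is a population quantity. With $\varepsilon_k\sim r\zeta$, your $\sin\theta$ bound yields $\SE(\Ghat_k,\G_k)\gtrsim r\zeta g/(1-\chi)$ rather than $r_k\zeta$, and feeding this back into $\varepsilon_{k+1}$ produces exactly the multiplicative blow-up you flag in your last paragraph.

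The paper sidesteps this by never comparing to $\G_k$ directly. It sets $\bm\Psi_k\G_k \overset{\mathrm{QR}}{=} \bE_\cur\bR_\cur$ and bounds $\zeta_k:=\SE([\Ghat_1,\dots,\Ghat_k],\G_k)=\|(\I-\Ghat_k\Ghat_k')\bm\Psi_k\G_k\|\le \SE(\Ghat_k,\bE_\cur)$. The reference for $\sin\theta$ is then $\cA:=\bE_\cur\A\bE_\cur'+\bE_{\cur,\perp}\A_\perp\bE_{\cur,\perp}'$ with $\A,\A_\perp$ the diagonal blocks of the \emph{empirical} $\frac1\alpha\sum_t\bm\Psi_k\lt\lt'\bm\Psi_k$. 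Because $\cM_k$ and $\cA$ now share the same estimated projector $\bm\Psi_k$, the problematic $\varepsilon_k g$ term never arises; the price is an extra off-diagonal piece $\bE_\cur\bE_\cur'(\frac1\alpha\sum_t\bm\Psi_k\lt\lt'\bm\Psi_k)\bE_{\cur,\perp}\bE_{\cur,\perp}'$ in $\cH$, but its expectation is only $O((r\zeta)^2 f)\lambda_k^-$, which is controlled by the hypothesis $r^2\zeta f\le 0.01$. The contribution $\lambda_\undet^+$ lands in $\lambda_{\max}(\A_\perp)$ (hence the $(1-\chi)$ gap you already use), and now the per-cluster bound $\zeta_k\le r_k\zeta$ closes additively.
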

{\em Proof: } The proof is given in Section \ref{proof_thm1} in Supplementary Material. 

We can also get corollaries for PCA-missing and PCA-SDDC for cluster-EVD. 
%
We have given one specific value for $\ghatp$ and $\thresh$ in Theorem \ref{thm1} for simplicity. One can, in fact, set $\ghatp$ to be anything that satisfies \eqref{g_chi_f} given in Supplementary Material and one can set $\thresh$ to be anything satisfying $5 r\zeta \lambda^- \le \thresh \le 0.95 \lambda^-$. 
%
Also, it should be possible to reduce the sample complexity of cluster-EVD to $c \max(q^2 (g^+)^2 r \log n, (g^+)^2 (r + \log n))$ using the approach explained in Sec. \ref{simple_evd}. 





\Section{Discussion} \label{discuss}

\subsubsection{Comparing simple-EVD and cluster-EVD}
Consider the lower bounds on $\alpha$. In the cluster-EVD (c-EVD) result, Theorem \ref{thm1}, if $q$ is small enough (e.g., if $q \le 1/ \sqrt{f}$), and if $ (r^2\zeta) f \le 0.01$, it is clear that the maximum in the $\max(.,.,.,.)$ expression is achieved by $(g^+)^2$. Thus, in this regime, c-EVD needs $\alpha \ge C \frac{r^2 (11\log n + \log \vartheta) }{ ( r \zeta)^2} g^2$ and its sample complexity is $\vartheta \alpha$.
In the EVD result (Theorem \ref{thm1_pca}), $g^+$ gets replaced by $f$ and $\vartheta$ by 1, and so, its sample complexity, $\alpha \ge C \frac{r^2 11\log n }{(r \zeta)^2} f^2$.
In situations where the condition number $f$ is very large but $g^+$ is much smaller and $\vartheta$ is small (the clustering assumption holds well), the sample complexity of c-EVD will be much smaller than that of simple-EVD.
However, notice that, the lower bound on $\alpha$ for simple-EVD holds for any $q<1$ and for any $\zeta$ with $r\zeta<0.01$ while the c-EVD lower bound given above holds only when $q$ is small enough, e.g., $q = O(1/\sqrt{f})$, and $\zeta$ is small enough, e.g., $r\zeta = O(1/f)$. This tighter bound on $\zeta$ is needed because the error of the $k$-th step of c-EVD depends on the errors of the previous steps times $f$.
Secondly, the c-EVD result also needs $\chi^+$ and $\vartheta$ to be small (clustering assumption holds well), whereas, for simple-EVD, by definition, $\chi^+=0$ and $\vartheta = 1$.
Another thing to note is that the constants in both lower bounds are very large with the c-EVD one being even larger.

To compare the upper bounds on $\beta$, assume that the same $\alpha$ is used by both, i.e., $\alpha =\max(\alpha_0(\text{EVD}),\alpha_0(\text{c-EVD}))$. As long as $r_k$ is large enough, $\chi^+$ is small enough, and $g$ is small enough, the upper bound on $\beta$ needed by the c-EVD result is significantly looser. For example, if $\chi^+ = 0.2$, $\vartheta=2$, $r_k=r/2$,  then c-EVD needs $\bbbeta \le  (0.5 \cdot 0.79 \cdot 0.5)^2  \frac{ (r \zeta)^2 }{4.1 q^2 g^2}   \alpha$ while simple-EVD needs  $\bbbeta \le  (0.5 \cdot 0.99)^2  \frac{ (r \zeta)^2 }{4.1 q^2 f^2}   \alpha$. If $g=3$ but $f=100$, clearly the c-EVD bound is looser.


\subsubsection{Comparison with other results for PCA-SDDC and PCA-missing}
To our knowledge, there is no other result for correlated-PCA. Hence, we provide comparisons of the corollaries given above for the PCA-missing and PCA-SDDC special cases with works that also study these or related problems. 
An alternative solution for either PCA-missing or PCA-SDDC is to first recover the entire matrix $\L$ and then compute its subspace via SVD on the estimated $\L$. For the PCA-missing problem, this can be done by using any of the low-rank matrix completion techniques, e.g., nuclear norm minimization (NNM) \cite{matcomp_candes} or alternating minimization (Alt-Min-MC) \cite{lowrank_altmin}. Similarly, for PCA-SDDC, this can be done by solving any of the recent provably correct RPCA techniques such as principal components' pursuit (PCP) \cite{rpca,rpca2,rpca_zhang}  or alternating minimization (Alt-Min-RPCA) \cite{robpca_nonconvex}. 

However, as explained earlier doing the above has two main disadvantages. The first is that it is much slower (see Sec. \ref{sims}).
The difference in speed is most dramatic when solving the matrix-sized convex programs such as NNM or PCP, but even the Alt-Min methods are slower. If we use the time complexity from \cite{robpca_nonconvex}, then finding the span of the top $k$ singular vectors of an $n \times m$ matrix takes $O(nm k)$ time. Thus, if $\vartheta$ is a constant, both simple-EVD and c-EVD need $O(n \alpha r)$ time, whereas, Alt-Min-RPCA needs $O(n \alpha r^2)$ time per iteration \cite{robpca_nonconvex}. 
The second disadvantage is that the above methods for MC or RPCA need more assumptions to provably correctly recover $\L$. All the above methods need an incoherence assumption on both the left singular vectors, $\P$, and the right singular vectors, $\bm{V}$, of $\L$. Of course, it is possible that, if one studies these methods with the goal of only recovering the column space of $\L$ correctly,  the incoherence assumption on the right singular vectors is not needed. From simulation experiments (see Sec. \ref{sims}), the incoherence of the left singular vectors is definitely needed.
On the other hand, for the PCA-SDDC problem, simple-EVD or c-EVD do not even need the incoherence assumption on $\P$.
%
%

The disadvantage of both EVD and c-EVD, or in fact of any solution for the PCA problem,  is that they work only when $q$ is small enough (the corrupting noise is small compared to $\lt$). 

\begin{table*}[t!]
  \centering
  \begin{tabular}{c|cccc|cccc}
    \toprule
\multicolumn{1}{c}{ } &   \multicolumn{4}{c}{Mean Subspace Error (SE)}  & \multicolumn{4}{c}{Average Time}                   \\
    \cmidrule{1-9}
   & c-EVD & EVD & PCP & A-M-RPCA  & c-EVD & EVD & PCP & A-M-RPCA\\
    \midrule
Expt 1  & 0.0908 & 0.0911 & 1.0000 & 1.0000 &   0.0549 & 0.0255 & 0.2361 & 0.0810 \\ \hline
Expt 2   & 0.3626 & 0.3821 & 0.4970 & 0.4846 &   0.0613 & 0.0223   & 1.6784 & 5.5144  \\
    \bottomrule
  \end{tabular}
\vspace{-0.1in}
  \caption{\footnotesize{Comparison of $\SE(\Phat,\P)$ and execution time (in seconds). A-M-RPCA: Alt-Min-RPCA. Expt 1: simulated data, Expt 2: lake video with simulated foreground.}} 
\label{table_real} \label{sparseP_small_n}
\vspace{-0.2in}
\end{table*}

\Section{Numerical Experiments} \label{sims}
We use the PCA-SDDC problem as our case study example. We compare EVD and cluster-EVD (c-EVD) with PCP \cite{rpca2}, solved using \cite{alm}, and with Alt-Min-RPCA \cite{robpca_nonconvex} (implemented using code from the authors' webpage). For both PCP and Alt-Min-RPCA, $\Phat$ is recovered as the top $r$ eigenvectors of of the estimated $\L$. 
To show the advantage of EVD or c-EVD, we let $\lt = \P \at$ with columns of $\P$ being sparse. These were chosen as the first $r=5$ columns of the identity matrix. 
We generate $\at$'s iid uniformly with zero mean and covariance matrix $\bm\Lambda = diag(100, 100, 100, 0.1, 0.1)$. Thus the condition number $f = 1000$. The clustering assumption holds with $\vartheta=2$, $g^+ = 1$ and $\chi^+ = 0.001$.
 The noise $\wt$ is generated as $\wt = \I_{\T_t} \M_{s,t} \lt$ with $\T_t$ generated to satisfy Assumption \ref{sbyrho} with $s=5$, $\rho=2$, and $\bbeta = 1$; and the entries of $\M_{s,t}$ being iid $\mathcal{N}(0,q^2)$ with $q=0.01$.
 We used $n =500$. EVD and c-EVD (Algorithm \ref{cPCA_algo}) were implemented with $\alpha=300$, $\thresh = 0.095 $, $\ghatp = 3$. 10000-time Monte Carlo averaged values of $\SE(\Phat,\P)$ and execution time are shown in the first row of Table \ref{table_real}.
Since the columns of $\P$ are sparse, both PCP and Alt-Min-RPCA fail. Both have average $\SE$ close to one whereas the average SE of c-EVD and EVD is 0.0908 and 0.0911 respectively. Also, both EVD and c-EVD are much faster than the other two. 
%
%
We also did an experiment with the settings of this experiment, but with $\P$ dense. In this case, EVD and c-EVD errors were similar, but PCP and Alt-Min-RPCA errors were less than $10^{-5}$.

For our second experiment, we used images of a low-rankified real video sequence as $\lt$'s. We chose the escalator sequence from \url{http://perception.i2r.a-star.edu.sg/bk_model/bk_index.html} since the video changes are only in the region where the escalator moves (and hence can be modeled as being sparse). We made it exactly low-rank by retaining its top 5 eigenvectors and projecting onto their subspace. This resulted in a data matrix $\L$ of size $n \times r$ with $n=20800$ and $r = 5$. We overlaid a simulated moving foreground block on it. The intensity of the moving block was controlled to ensure that $q$ is small. 
We estimated $\Phat$ using EVD, c-EVD, PCP and Alt-Min-RPCA.
We let $\P$ be the eigenvectors of the low-rankified video with nonzero eigenvalues and computed $\SE(\Phat,\P)$. The errors and execution time are displayed in the second row of Table \ref{table_real}. Since $n$ is very large, the difference in speed is most apparent in this case. 
%

Thus c-EVD outperforms PCP and AltMinRPCA when columns of $\P$ are sparse. It also outperforms EVD but the advantage in mean error is not as much as our theorems predict. One reason is that the constant in the required lower bounds on $\alpha$ is very large. It is hard to pick an $\alpha$ that is this large and still only $O(\log n)$  unless $n$ is very large. Secondly, both guarantees are only sufficient conditions.

\Section{Conclusions and Future Work} \label{conclusions} 
We studied the problem of PCA in noise that is correlated with the data (data-dependent noise). We obtained sample complexity bounds for the most commonly used PCA solution, simple EVD. We also developed and analyzed a generalization of EVD, called cluster-EVD, that has lower sample complexity under extra assumptions.
We provided a detailed comparison of our results with those for other approaches to solving its example applications - PCA with missing data and PCA with sparse data-dependent corruptions.%

We used the matrix Hoeffding inequality \cite{tail_bound} to obtain our results. As explained in Sec. \ref{simple_evd}, it should be possible to improve the sample complexity bounds if this is replaced by \cite[Theorem 5.39]{vershynin} or matrix Bernstein. 
%
Moreover, as done in \cite{rrpcp_aistats} (for ReProCS), the mutual independence of $\lt$'s can be easily replaced by a more practical assumption of $\lt$'s following autoregressive model with almost no change to our assumptions.
Thirdly, by generalizing the proof techniques developed here, we can also study the problem of correlated-PCA with partial subspace knowledge. This is done in \cite{corpca_tsp}. The solution to the latter problem helps to greatly simplify the proof of correctness of ReProCS for online dynamic RPCA \cite{long_simple_proof}.
Fourthly, the boundedness assumption on $\lt$'s can be replaced by a Gaussian or a  well-behaved sub-Gaussian assumption but this will increase the sample complexity to $O(n)$.
Finally, an open-ended question is how we relax  Assumption \ref{Mt_model} on $\M_t$ and still get results similar to Theorem \ref{thm1_pca} or Theorem \ref{thm1}. 







\clearpage
\bibliographystyle{IEEEbib}
\bibliography{tipnewpfmt_kfcsfullpap}


\clearpage
{\bf \Large Supplementary Material}


\Section{More examples of Assumption \ref{Mt_model}} \label{more_examp}
Assumption \ref{sbyrho} is one simple example of a support change model that ensures that, if $\M_{2,t} = \I_{\T_t}$, the assumption on $\M_{2,t}$ given in Assumption \ref{Mt_model} holds. If instead of one object, there are $k$ objects, and each of their supports satisfies Assumption \ref{sbyrho}, then again, with some modifications, it is possible to show that both the PCA-missing and PCA-SDDC problems satisfy Assumption \ref{Mt_model}. 
Moreover, notice that Assumption \ref{sbyrho} does not require the entries in $\T_t$ to be contiguous at all (they need not correspond to the support of one or a few objects). Similarly, we can replace the condition that $\T_t$ be constant for at most $\bbeta$ time instants in Assumption \ref{sbyrho} by $|\{t: \T_t = \T^{[k]} \}| \le \bbeta.$%

Thirdly, the requirement of the object(s) always moving in one direction may seem too stringent. As explained in \cite[Lemma 9.4]{rrpcp_isit15}, a Bernoulli-Gaussian ``constant velocity with random acceleration" motion model will also work whp. It allows the object to move at each frame with probability $p$ and not move with probability $1-p$ independent of past or future frames; when the object moves, it moves with an iid Gaussian velocity that has mean $1.1s/\rrho$ and variance $\sigma^2$; $\sigma^2$ needs to be upper bounded and $p$ needs to be lower bounded.

Lastly, if $s < c_1 \alpha$ for $c_1 \ll 1$, another model that works is that of an object of length $s$ or less moving by at least one pixel and at most $b$ pixels at each time \cite[Lemma 9.5]{rrpcp_isit15}.

\Section{Proof of Theorem \ref{thm1_pca}}
\label{proof_thm1_pca}
This result also follows as a corollary of Theorem \ref{thm1}. We prove it separately first since its proof is short and and less notation-ally intensive. It will help understand the proof of Theorem \ref{thm1} much more easily. Both results rely on the $\sin \theta$ theorem reviewed next.

\Subsection{$\sin \theta$ theorem}
Davis and Kahan's $\sin \theta$ theorem \cite{davis_kahan} studies the rotation of eigenvectors by perturbation.

\begin{theorem}[$\sin \theta$ theorem \cite{davis_kahan}] \label{sintheta_orig}
Consider two Hermitian matrices $\cA$ and $\cM$. Suppose that $\cA$ can be decomposed as
\begin{align}
 \label{sindecomp}
\cA &= \left[ \begin{array}{cc} \bE & \bE_{\perp} \\ \end{array} \right]
\left[\begin{array}{cc} \A \ & 0\ \\ 0 \ & \A_{\perp} \\ \end{array} \right]
\left[ \begin{array}{c} \bE' \\ {\bE_{\perp}}' \\ \end{array} \right] \nn
\end{align}
where $[\bE \ \bE_{\perp}]$ is an orthonormal matrix.
Suppose that $\cM$ can be decomposed as
\begin{align*}
\cM
&= \left[ \begin{array}{cc} \bF & \bF_{\perp} \\ \end{array} \right]
\left[\begin{array}{cc}  \bm\Lambda\ & 0\ \\ 0 \ &  \bm\Lambda_{\perp} \\ \end{array} \right]
\left[ \begin{array}{c} \bF' \\ {\bF_{\perp}}' \\ \end{array} \right] \nn
\end{align*}
where $[\bF \ \bF_{\perp}]$ is another orthonormal matrix and is such that $\rank(\bF)=\rank(\bE)$. Let $\cH:= \cM - \cA$ denote the perturbation.
If $ \lambda_{\min}(\A) >\lambda_{\max}(\bm\Lambda_{\perp})$, then
\beq
\|(\I-\bF \bF')\bE \| 
\le \frac{\|\cH\|}{\lambda_{\min}(\A) - \lambda_{\max}(\bm\Lambda_{\perp})}.
\nn
\eeq
\end{theorem}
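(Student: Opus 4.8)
\emph{Proof plan.} Write $d:=\lambda_{\min}(\A)-\lambda_{\max}(\bm\Lambda_\perp)>0$ and $Z:=\bF_\perp'\bE$. Since $[\bF\ \bF_\perp]$ is a (square) orthonormal matrix, $\I-\bF\bF'=\bF_\perp\bF_\perp'$; and since $\bF_\perp$ has orthonormal columns, $\|(\I-\bF\bF')\bE\|=\|\bF_\perp\bF_\perp'\bE\|=\|\bF_\perp'\bE\|=\|Z\|$. So the claim is precisely $\|Z\|\le\|\cH\|/d$, and the whole argument amounts to an estimate on $Z$.

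The key step is a Sylvester-type identity for $Z$. Plugging the two spectral decompositions into $\cH=\cM-\cA$ and using the block-orthogonality relations $\bF_\perp'\bF=\bm{0}$, $\bF_\perp'\bF_\perp=\I$, $\bE'\bE=\I$, $\bE_\perp'\bE=\bm{0}$, one computes $\bF_\perp'\cM=\bm\Lambda_\perp\bF_\perp'$ and $\cA\bE=\bE\A$, so
\[
\bF_\perp'\cH\bE=\bF_\perp'(\cM-\cA)\bE=\bm\Lambda_\perp Z-Z\A .
\]
Hence $Z$ solves $\bm\Lambda_\perp Z-Z\A=\bF_\perp'\cH\bE$, whose right-hand side has norm at most $\|\bF_\perp'\|\,\|\cH\|\,\|\bE\|\le\|\cH\|$ because $\bF_\perp$ and $\bE$ have orthonormal columns.

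It remains to invert the Sylvester map $L\colon Z\mapsto\bm\Lambda_\perp Z-Z\A$ with a $1/d$ bound, i.e.\ to show $\|L(Z)\|\ge d\,\|Z\|$. Since $\A$ and $\bm\Lambda_\perp$ are Hermitian with spectra separated by $d$, I would get this cheaply by testing against the top singular pair of $Z$: assuming $Z\ne\bm{0}$, pick unit vectors $u,v$ with $Zv=\|Z\|u$ and $u'Z=\|Z\|v'$, so that $u'\big(\bm\Lambda_\perp Z-Z\A\big)v=\|Z\|\big(u'\bm\Lambda_\perp u-v'\A v\big)$; since $u'\bm\Lambda_\perp u\le\lambda_{\max}(\bm\Lambda_\perp)$ and $v'\A v\ge\lambda_{\min}(\A)$, the bracket has absolute value $\ge d$, and therefore $\|\cH\|\ge|u'(\bF_\perp'\cH\bE)v|=\|Z\|\,|u'\bm\Lambda_\perp u-v'\A v|\ge d\,\|Z\|$. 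Dividing by $d$ gives $\|(\I-\bF\bF')\bE\|=\|Z\|\le\|\cH\|/d$. I do not expect a real obstacle here: the only ``idea'' is to spot the Sylvester structure of $\bF_\perp'\cH\bE$ and that the eigenvalue gap $d$ controls $\|L^{-1}\|$, the top-singular-pair step being just a one-line substitute for the usual resolvent/contour-integral estimate of the Sylvester-operator norm (which is what one would use if $\A,\bm\Lambda_\perp$ were not Hermitian). The hypothesis $\rank(\bF)=\rank(\bE)$ is not actually used in this one-sided bound; it is there so that the conclusion can be read as a bound on the sine of the largest principal angle between the equal-dimensional subspaces $\range(\bE)$ and $\range(\bF)$.
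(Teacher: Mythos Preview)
Your argument is correct. The Sylvester identity $\bF_\perp'\cH\bE=\bm\Lambda_\perp Z-Z\A$ is derived cleanly, and the top-singular-pair trick is a valid way to get $\|L(Z)\|\ge d\|Z\|$ for Hermitian $\A,\bm\Lambda_\perp$; the Rayleigh-quotient bounds $u'\bm\Lambda_\perp u\le\lambda_{\max}(\bm\Lambda_\perp)$ and $v'\A v\ge\lambda_{\min}(\A)$ are exactly what make the gap $d$ appear.

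There is nothing to compare against, however: the paper does not prove this theorem. It is quoted verbatim as the Davis--Kahan $\sin\theta$ theorem with a citation, and the paper only supplies the short deduction of the subsequent corollary (bounding $\lambda_{\max}(\bm\Lambda_\perp)$ via Weyl's inequality when $\bF$ is the matrix of top eigenvectors of $\cM$). Your write-up is thus a self-contained proof of a result the paper takes as a black box, and it matches the standard Davis--Kahan argument.
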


Let $r = \rank(\bE)$. Suppose that $\bm{F}$ is the matrix of top $r$ eigenvectors of $\cM$. Then $\bm{\Lambda}$ and $\bm{\Lambda}_\perp$ are diagonal and $\lambda_{\max}(\bm\Lambda_{\perp}) = \lambda_{r+1}(\cM) \le \lambda_{r+1}(\cA) + \|\cH\|$. The inequality follows using Weyl's inequality.
Suppose also that $\lambda_{\min}(\A) > \lambda_{\max}(\A_\perp)$. Then,
(i) $\lambda_r(\cA) = \lambda_{\min}(\A)$ and $\lambda_{r+1}(\cA) = \lambda_{\max}(\A_\perp)$ and (ii) $\Span(\bm{E})$ is equal to the span of the top $r$ eigenvectors of $\cA$.
Thus, $\lambda_{\max}(\bm\Lambda_{\perp}) \le \lambda_{\max}(\A_\perp) + \|\cH\|$. With this we have the following corollary.

\begin{corollary} \label{sintheta}
Consider a Hermitian matrix $\cA$ and its perturbed version $\cM$.
Suppose that $\cA$ can be decomposed as
\begin{align}
 \label{sindecomp}
\cA &= \left[ \begin{array}{cc} \bE & \bE_{\perp} \\ \end{array} \right]
\left[\begin{array}{cc} \A\ & 0\ \\ 0 \ & \A_{\perp} \\ \end{array} \right]
\left[ \begin{array}{c} \bE' \\ {\bE_{\perp}}' \\ \end{array} \right] \nn
\end{align}
where $\bE$ is a basis matrix.  
Let $\bm{F}$ denote the matrix containing the top $\rank(\bE)$ eigenvectors of $\cM$.
 Let $\cH:=\cM-\cA$ denote the perturbation. If $ \lambda_{\min}(\A)  - \lambda_{\max}(\A_\perp) - \|\cH\| > 0$, then
\beq
\|(\I-\F \F') \bE \| \le \frac{\|\cH\|}{\lambda_{\min}(\A) -\lambda_{\max}(\A_\perp) - \|\cH\|}.
\nn
\eeq
and $\Span(\bE)$ is equal to the span of the top $\rank(\bE)$ eigenvectors of $\cA$.
\end{corollary}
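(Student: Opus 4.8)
The plan is to deduce Corollary~\ref{sintheta} from the $\sin\theta$ theorem (Theorem~\ref{sintheta_orig}) together with Weyl's inequality, essentially making rigorous the remarks that precede the corollary. Write $r:=\rank(\bE)$, and note throughout that for the Hermitian perturbation $\cH$ we have $\lambda_{\max}(\cH)\le\|\cH\|$, and that the hypothesis $\lambda_{\min}(\A)-\lambda_{\max}(\A_\perp)-\|\cH\|>0$ in particular gives the unperturbed gap $\lambda_{\min}(\A)>\lambda_{\max}(\A_\perp)$.

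First I would pin down the spectrum of $\cA$. Since $\cA=[\bE\ \bE_\perp]\diag(\A,\A_\perp)[\bE\ \bE_\perp]'$ with $[\bE\ \bE_\perp]$ orthonormal, the multiset of eigenvalues of $\cA$ is the union of those of $\A$ and of $\A_\perp$; because every eigenvalue of $\A$ is at least $\lambda_{\min}(\A)>\lambda_{\max}(\A_\perp)$ and hence strictly exceeds every eigenvalue of $\A_\perp$, the $r$ largest eigenvalues of $\cA$ are exactly those of $\A$ and the associated $r$-dimensional eigenspace is $\Span(\bE)$. Hence $\lambda_r(\cA)=\lambda_{\min}(\A)$, $\lambda_{r+1}(\cA)=\lambda_{\max}(\A_\perp)$, and $\Span(\bE)$ equals the span of the top $r$ eigenvectors of $\cA$; this is the second conclusion of the corollary.

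Next I would control $\lambda_{r+1}(\cM)$ where $\cM=\cA+\cH$. By Weyl's inequality, $\lambda_{r+1}(\cM)\le\lambda_{r+1}(\cA)+\lambda_{\max}(\cH)\le\lambda_{\max}(\A_\perp)+\|\cH\|$, which by hypothesis is strictly below $\lambda_{\min}(\A)$. Now take an EVD $\cM=[\bF\ \bF_\perp]\diag(\bm\Lambda,\bm\Lambda_\perp)[\bF\ \bF_\perp]'$ with $\bF$ a basis matrix whose columns are $r$ top eigenvectors of $\cM$ (possible since $\cM$ is Hermitian, and consistent with the definition of $\bF$ in the statement); then $\rank(\bF)=r=\rank(\bE)$ and $\lambda_{\max}(\bm\Lambda_\perp)=\lambda_{r+1}(\cM)\le\lambda_{\max}(\A_\perp)+\|\cH\|<\lambda_{\min}(\A)$. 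Thus the hypothesis $\lambda_{\min}(\A)>\lambda_{\max}(\bm\Lambda_\perp)$ of Theorem~\ref{sintheta_orig} is met, and that theorem gives $\|(\I-\bF\bF')\bE\|\le\|\cH\|/(\lambda_{\min}(\A)-\lambda_{\max}(\bm\Lambda_\perp))$. Finally, since $\lambda_{\max}(\bm\Lambda_\perp)\le\lambda_{\max}(\A_\perp)+\|\cH\|<\lambda_{\min}(\A)$ and $t\mapsto\|\cH\|/(\lambda_{\min}(\A)-t)$ is nondecreasing on $(-\infty,\lambda_{\min}(\A))$, I can enlarge the denominator-subtracted quantity to obtain the stated bound $\|(\I-\bF\bF')\bE\|\le\|\cH\|/(\lambda_{\min}(\A)-\lambda_{\max}(\A_\perp)-\|\cH\|)$.

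There is no serious obstacle here: the corollary is a bookkeeping exercise on top of Theorem~\ref{sintheta_orig}. The only points that require a little care are (i) verifying, via the block structure and the strict unperturbed gap, that $\Span(\bE)$ really is the top-$r$ eigenspace of $\cA$, so that the indices $r$ and $r{+}1$ align with $\A$ and $\A_\perp$; and (ii) checking that the spectral gap survives the perturbation, i.e., that Weyl's bound $\lambda_{r+1}(\cM)\le\lambda_{\max}(\A_\perp)+\|\cH\|$ still leaves $\lambda_{\max}(\bm\Lambda_\perp)$ below $\lambda_{\min}(\A)$ --- which is precisely what the hypothesis $\lambda_{\min}(\A)-\lambda_{\max}(\A_\perp)-\|\cH\|>0$ guarantees.
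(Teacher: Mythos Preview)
Your proposal is correct and follows essentially the same route as the paper: you use the block decomposition plus the strict gap $\lambda_{\min}(\A)>\lambda_{\max}(\A_\perp)$ to identify $\Span(\bE)$ with the top-$r$ eigenspace of $\cA$ and to get $\lambda_{r+1}(\cA)=\lambda_{\max}(\A_\perp)$, then apply Weyl to bound $\lambda_{\max}(\bm\Lambda_\perp)=\lambda_{r+1}(\cM)\le\lambda_{\max}(\A_\perp)+\|\cH\|$, and finally invoke Theorem~\ref{sintheta_orig}. The paper's argument, given in the paragraph preceding the corollary, is exactly this; your write-up is slightly more explicit about the final monotonicity step, but there is no substantive difference.
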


\Subsection{Proof of Theorem \ref{thm1_pca}}

We use the $\sin\theta$ theorem \cite{davis_kahan} from Corollary \ref{sintheta}. Apply it with $\cM = \frac{1}{\alpha} \sum_t \yt \yt{}'$ and $\cA = \frac{1}{\alpha} \sum_t \lt \lt{}'$. Thus, $\bm{F} = \Phat$. Recall that $\at=\P'\lt$. Then, $\cA$ can be decomposed as $\P (\frac{1}{\alpha} \sum_t \at \at{}') \P' + \P_\perp \bm{0} \P_\perp'$, and so we have $\bm{E} = \P$, $\A = \frac{1}{\alpha} \sum_t \at \at{}'$ and $\A_\perp = \bm{0}$.
Moreover, it is easy to see that the perturbation $\cH:=\frac{1}{\alpha} \sum_t \yt \yt{}' -  \frac{1}{\alpha} \sum_t \lt \lt{}'$ satisfies
\beq
\cH =  \frac{1}{\alpha} \sum_t \lt \wt' + \frac{1}{\alpha} \sum_t \wt \lt' + \frac{1}{\alpha} \sum_t \wt \wt'.
\label{perturb_eq}
\eeq
Thus,
\begin{align*}
&\SE(\Phat,\P) \\
&\le   \frac{2\|\frac{1}{\alpha} \sum_t \lt \wt'\| + \|\frac{1}{\alpha} \sum_t \wt \wt'\|}{\lambda_r(\frac{1}{\alpha} \sum_t \lt \lt') -  (2\|\frac{1}{\alpha} \sum_t \lt \wt'\| + \|\frac{1}{\alpha} \sum_t \wt \wt'\|) }
\end{align*}
if the denominator is positive.

\begin{remark} \label{cor_wt}
Because $\wt$ is correlated with $\lt$, the $\lt \wt'$ terms are the dominant ones in the perturbation expression given in \eqref{perturb_eq}. If they were uncorrelated, these two terms would be close to zero whp due to law of large numbers and the $\wt \wt'$ term would be the dominant one.
\end{remark}

In the next lemma, we bound the terms in the bound on $\SE(\Phat,\P)$ using the matrix Hoeffding inequality \cite{tail_bound}. 
\begin{lem}
Let $\epsilon = 0.01 r \zeta \lambda^-$.
\ben
\item With probability at least $1 - 2n \exp\left( - \alpha \frac{\epsilon^2}{ 32( \eta r  q \lambda^+  )^2} \right)$,
\[
\|\frac{1}{{\alpha}} \sum_t \lt {\et}' \| \le q \lambda^+  \sqrt{\frac{\bbbeta}{\alpha}} + \epsilon  = [ q f \sqrt{\frac{\bbbeta}{\alpha}} + 0.01 r \zeta] \lambda^-
\]

\item With probability at least
$1 - 2n \exp(- \frac{\alpha \epsilon^2}{32  ( \eta r q^2  \lambda^+)^2 })$,
\[
\| \frac{1}{{\alpha}} \sum_t  \et {\et}'  \| \le {\frac{\bbbeta}{\alpha}} q^2 \lambda^+  + \epsilon = [q^2 f  \frac{\bbbeta}{\alpha}  + 0.01 r \zeta] \lambda^-
\]

\item With probability at least $1 - 2n \exp(- \frac{\alpha \epsilon^2}{32  (\eta r \lambda^+)^2})$, 
\begin{align*}
\lambda_{r}(\frac{1}{\alpha} \sum_t \lt \lt')  &\ge  (1 - (r \zeta)^2) \lambda^- - \epsilon 
\end{align*}
\een
\label{hp_bnds_pca}
\end{lem}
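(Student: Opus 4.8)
Each of the three quantities is an empirical average, and in every case I would split it as mean plus zero-mean fluctuation, bound the mean deterministically via Assumptions~\ref{lt_mod}--\ref{Mt_model}, and bound the fluctuation by the matrix Hoeffding inequality~\cite{tail_bound}. Using $\at=\P'\lt$, $\wt=\M_{2,t}\M_{1,t}\P\at$ and $\E[\at\at{}']=\Lam$, one gets $\E[\lt\wt{}']=\P\Lam(\M_{1,t}\P)'\M_{2,t}{}'$, $\E[\wt\wt{}']=\M_{2,t}\bm{A}_t\M_{2,t}{}'$ with $\bm{A}_t:=(\M_{1,t}\P)\Lam(\M_{1,t}\P)'$, and $\E[\lt\lt{}']=\P\Lam\P'$.

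\textbf{Mean terms.} For the second bound, $\bm{A}_t\succeq0$ and $\|\bm{A}_t\|\le\|\M_{1,t}\P\|^2\lambda^+\le q^2\lambda^+$, so \eqref{M2t_bnd} applied to $\{\bm{A}_t\}$ gives $\|\frac1\alpha\sum_t\E[\wt\wt{}']\|\le\frac{\bbbeta}{\alpha}q^2\lambda^+$ immediately. For the third bound, $\frac1\alpha\sum_t\E[\lt\lt{}']=\P\Lam\P'$ has $r$-th eigenvalue exactly $\lambda^-$, so no assumption on $\M_t$ enters. The first bound is the delicate one: $\frac1\alpha\sum_t\E[\lt\wt{}']=\P\Lam^{1/2}\bigl(\frac1\alpha\sum_t\Lam^{1/2}(\M_{1,t}\P)'\M_{2,t}{}'\bigr)$ is \emph{not} of the form appearing in \eqref{M2t_bnd}. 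I would first factor out $\|\P\Lam^{1/2}\|=\sqrt{\lambda^+}$, then bound the remaining norm by testing against unit vectors $u,v$ and applying Cauchy--Schwarz \emph{over $t$}: $u'\bigl(\frac1\alpha\sum_t\M_{2,t}(\M_{1,t}\P)\Lam^{1/2}\bigr)v\le\bigl(\frac1\alpha\sum_t\|\M_{2,t}{}'u\|^2\bigr)^{1/2}\bigl(\frac1\alpha\sum_t\|(\M_{1,t}\P)\Lam^{1/2}v\|^2\bigr)^{1/2}$, where the first factor is $\le\sqrt{\bbbeta/\alpha}$ by \eqref{M2t_bnd} with $\bm{A}_t=\I$ and the second is $\le q\sqrt{\lambda^+}$. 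This yields $\|\frac1\alpha\sum_t\E[\lt\wt{}']\|\le q\lambda^+\sqrt{\bbbeta/\alpha}$.

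\textbf{Fluctuation terms.} Each target equals $\frac1\alpha\sum_t\bm{Z}_t$ with the $\bm{Z}_t$ independent, mean zero: $\bm{Z}_t=\lt\wt{}'-\E[\lt\wt{}']$, resp.\ $\wt\wt{}'-\E[\wt\wt{}']$, resp.\ (working on the $r$-dimensional range, since $\frac1\alpha\sum_t\lt\lt{}'=\P(\frac1\alpha\sum_t\at\at{}')\P'$ shares its nonzero eigenvalues with $\frac1\alpha\sum_t\at\at{}'$) $\at\at{}'-\Lam$. Assumption~\ref{lt_mod} gives $\|\lt\|^2=\|\at\|^2=\sum_j(\at)_j^2\le\eta r\lambda^+$, and \eqref{simp_bnd} gives $\|\wt\|\le q\|\at\|$; hence $\|\bm{Z}_t\|\le2\eta rq\lambda^+$, $\le2\eta rq^2\lambda^+$, $\le2\eta r\lambda^+$ respectively. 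Matrix Hoeffding~\cite{tail_bound}, applied to the Hermitian dilation of $\bm{Z}_t$ in the non-symmetric first case (the source of the factor $2n$ there) and using $r\le n$ in the third, then gives $\|\frac1\alpha\sum_t\bm{Z}_t\|\le\epsilon$ with the stated probabilities once $\epsilon=0.01\,r\zeta\lambda^-$ and $\lambda^+=f\lambda^-$ are substituted. Combining: the triangle inequality gives the first two displayed bounds, and Weyl's inequality gives $\lambda_r\bigl(\frac1\alpha\sum_t\lt\lt{}'\bigr)\ge\lambda^--\|\frac1\alpha\sum_t(\at\at{}'-\Lam)\|\ge\lambda^--\epsilon$, which is stronger than the claimed $(1-(r\zeta)^2)\lambda^--\epsilon$.

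\textbf{Main obstacle.} The fluctuation estimates are routine. The step needing care is the mean bound for the first quantity: because $\wt$ is correlated with $\lt$, $\E[\lt\wt{}']\ne0$ (cf.\ Remark~\ref{cor_wt}) and its average, unlike that of $\E[\wt\wt{}']$, is not directly controllable by \eqref{M2t_bnd}; the two-stage Cauchy--Schwarz above is precisely what makes \eqref{M2t_bnd} applicable, and this term---of size $qf\sqrt{\bbbeta/\alpha}$---is what forces the upper bound on $\bbbeta/\alpha$ in Theorem~\ref{thm1_pca}.
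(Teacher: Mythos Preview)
Your proposal is correct and follows essentially the same route as the paper. The paper obtains Lemma~\ref{hp_bnds_pca} as the special case $\G_\dett=\G_\undet=[.]$, $\bm\Psi=\I$ of the general Lemma~\ref{hp_bnds_gen}, whose proof uses exactly your ingredients: the matrix Cauchy--Schwarz inequality \eqref{CSmat} with $\bm{X}_t=\bm\Sigma\M_{1,t}{}'$, $\bm{Y}_t=\M_{2,t}$ together with \eqref{M2t_bnd} (applied with $\bm{A}_t=\I$) for the mean of item~1, direct application of \eqref{M2t_bnd} for item~2, and matrix Hoeffding (Corollaries~\ref{hoeffding_nonzero}--\ref{hoeffding_rec}) for the fluctuations; your observation that item~3 actually yields $\lambda^- - \epsilon$ without the $(1-(r\zeta)^2)$ factor is also right, since here there is no projection $\bm\Psi$.
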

\begin{proof}
This follows by using Lemma \ref{hp_bnds_gen} given later with $\G_\cur \equiv \P$, $\G_\dett  \equiv [.]$, $\G_\undet  \equiv [.]$, $\zeta_\dett \equiv 0$, $r \zeta \equiv 0$,
$r_\cur = r$, $g \equiv f$, $\chi \equiv 0$, $\vartheta \equiv 1$.
\end{proof}

Using this lemma to bound the subspace error terms, followed by using the bounds on $\beta/\alpha$ and $\zeta$, we conclude the following:
w.p. at least $1 - 2n \exp\left( - \alpha \frac{\epsilon^2}{ 32( \eta r  q \lambda^+  )^2} \right) - 2n \exp(- \frac{\alpha \epsilon^2}{32  ( \eta r q^2  \lambda^+)^2 })  - 2n \exp(- \frac{\alpha \epsilon^2}{32  (\eta r \lambda^+)^2})$,
\begin{align*}
&\SE(\Phat,\P)\\
&\le   \frac{2q f \sqrt{\frac{\bbbeta}{\alpha}} + q^2 f  \frac{\bbbeta}{\alpha}  + 0.03 r \zeta}{1 - (r \zeta)^2 - 0.01 r \zeta - (2q f \sqrt{\frac{\bbbeta}{\alpha}} + q^2 f  \frac{\bbbeta}{\alpha}  + 0.03 r \zeta)}\\
&\le \frac{0.75 (1-r \zeta) r \zeta + 0.03 r \zeta}{1 - r \zeta} < r \zeta
\end{align*}
Using the bound $\alpha \ge \alpha_0$ from the theorem, the probability of the above event is at least $1- 6n^{-10}$.
We get this by bounding each of the three negative terms in the probability expression by $-2n^{-10}$.
We work this out for the first term:
$\alpha \frac{\epsilon^2}{ 32( \eta r  q \lambda^+  )^2} \ge  \frac{32 \cdot 11}{(0.01)^2}  \frac{\eta^2 r^2 (\log n) }{(r \zeta)^2} (qf)^2  \frac{(0.01 r \zeta \lambda^-)^2}{32\eta^2 r^2  q^2 \lambda^+{}^2} = 11  \log n $. Thus, $2n \exp\left( - \alpha \frac{\epsilon^2}{ 32( \eta r  q \lambda^+  )^2} \right) \le 2n \exp(-11 \log n) \le 2n^{-10}$.


\Section{Proof of Theorem \ref{thm1}} \label{proof_thm1}
We explain the overall idea of the proof next. In Sec. \ref{main_lems}, we give a sequence of lemmas in generalized form (so that they can apply to various other problems). The proof of Theorem \ref{thm1} is given in Sec. \ref{proof_of_thm1} and follows easily by applying these. One of the lemmas of Sec. \ref{main_lems} is proved in Sec. \ref{proof_hoeff} while the others are proved there itself.

\Subsection{Overall idea}
We need to bound $\SE(\Phat, \P)$. From Algorithm \ref{cPCA_algo}, $\Phat = [\Ghat_1, \Ghat_2, \dots, \Ghat_\vartheta]$ where $\Ghat_k$ is the matrix of top $\rhat_k$ eigenvectors of $\cM_k$ defined in Algorithm \ref{cPCA_algo}. Also, $\P = [\G_1, \G_2, \dots, \G_\vartheta]$ where $\G_k$ is a basis matrix with $r_k$ columns.

\begin{definition}
Define $\zeta_k: = \SE([\Ghat_1, \Ghat_2, \dots, \Ghat_k], \G_k)$ and $\zeta_0 = 0$. Define $\zeta_k^+:= r_k \zeta$. Let $r_0 = 0$.
\end{definition}

It is easy to see that
\begin{align}
\SE(\Phat, \P) &\le \sum_{k=1}^{\vartheta} \SE(\Phat, \G_k) \nn \\
&\le \sum_{k=1}^{\vartheta} \SE([\Ghat_1, \Ghat_2, \dots, \Ghat_k], \G_k) = \sum_{k=1}^{\vartheta} \zeta_k 
\label{SE_Phat_bnd}
\end{align}
The first inequality is triangle inequality, the second follows because $[\Ghat_1, \Ghat_2, \dots, \Ghat_k]$ is orthogonal to $[\Ghat_{k+1}, \dots \G_\vartheta]$. Since $r = \sum_k r_k$, if we can show that $\zeta_k \le\zeta_k^+ = r_k \zeta$ for all $k$ we will be done.

We bound $\zeta_k$ using induction. The base case is easy and follows just from the definition, $\zeta_0 = \SE([.],[.]) = 0 = r_0 \zeta$.
For bounding $\zeta_k$, assume that for all $i=1,2, \dots, k-1$, $\zeta_i \le r_i \zeta$.
This implies that
\begin{align}
&\SE([\Ghat_1, \Ghat_2, \dots, \Ghat_{k-1}],  [\G_1, \G_2, \dots, \G_{k-1}])  \nn \\
&\le \sum_{i=1}^{k-1} \SE([\Ghat_1, \Ghat_2, \dots, \Ghat_{k-1}], \G_i) \nn \\
&\le \sum_{i=1}^{k-1} \zeta_i \le \sum_{i=1}^{k-1} r_i \zeta \le r\zeta
\label{induc_assu}
\end{align} 
Using this, we will first show that $\rhat_k = r_k$, and then we will use this and the $\sin \theta$ result to bound $\zeta_k$.

Before proceeding further, we simplify notation.
\begin{definition} \label{gen_def} \
\ben
\item Let
\begin{align*}
&\G_\dett:= [\G_1, \G_2, \dots, \G_{k-1}],\ \G_\cur:=\G_k, \\
& \G_\undet:= [\Ghat_{k+1}, \dots \G_\vartheta]
\end{align*}
\item Similarly, let $\Ghat_\dett :=[\Ghat_1, \Ghat_2, \dots, \Ghat_{k-1}]$, $\Ghat_\cur:=\Ghat_k$.

\item Let $\cG_\dett:=\cG_1 \cup \cG_2 \dots \cup \cG_{k-1}$ and $\cG_\cur = \cG_k$.



\item Let $r_\cur:=r_k = \rank(\G_k)$  and $\rhat_\cur:=\rhat_k$.

\item  Let
$
\lambda_\cur^+: = \lambda_k^+, \ \lambda_\cur^-: = \lambda_k^-, \ \lambda_\undet^+:= \lambda_{k+1}^+ 
$

\item  Let $t_* = k \alpha$.

\een
\end{definition}

\Subsection{Main lemmas - generalized form} \label{main_lems}
In this section, we give a sequence of lemmas that apply to a generic problem where $\yt = \lt + \wt = \lt + \M_t \lt$ with $\lt$ satisfying Assumption \ref{lt_mod}; $\M_t$ satisfying Assumption \ref{Mt_model}; and with $\P$ split into three parts as $\P = [\G_\dett, \G_\cur, \G_\undet]$. We can correspondingly split $\bm\Lambda$ as $\bm\Lambda = \diag(\bm\Lambda_\dett, \bm\Lambda_\cur, \bm\Lambda_\undet)$.

We are given $\Ghat_\dett$ that was computed using (some or all) $\yt$'s for $t \le t_*$ and that satisfies $\zeta_\dett \le r\zeta$.
The goal is to estimate $\Span(\G_\cur)$ and bound the estimation error. This is done by first estimating $\rhat_\cur$ and then computing $\Ghat_\cur$ as the top $\rhat_\cur$ eigenvectors of
\bea
\cM:= \frac{1}{\alpha} \sum_{t=t_*+1}^{t_*+\alpha} \bm\Psi \yt \yt{}' \bm\Psi. 
\label{def_cM}
\eea
To bound the estimation error, we first show that, whp, $\rhat_\cur=r_\cur$ and so $\cGhat_\cur = \cG_\cur$; and then we use this to show that $\zeta_\cur \le r_\cur \zeta$.

\begin{definition} \
\ben
\item Define $\bm\Psi:= \I - \Ghat_\dett \Ghat_\dett{}'$.
\item Define $\zeta_\dett:= \SE(\Ghat_\dett, \G_\dett) = \|\bm\Psi \G_\dett\|$ and $\zeta_\dett^+ = r \zeta$
\item Define $\zeta_\cur: = \SE([\Ghat_\dett, \Ghat_\cur], \G_\cur)$.

\item Let $(\bm\Psi \G_\cur) \stackrel{\mathrm{QR}}{=} \bE_\cur \bR_\cur$ denote its reduced QR decomposition. Thus $\bE_\cur$ is a basis matrix whose span equals that of $(\bm\Psi \G_\cur)$ and $\bR_\cur$ is a square upper triangular matrix with $\|\bR_\cur\|=\|\bm\Psi \G_\cur\| \le 1$.

\item Let
$
\lambda_\cur^+= \lambda_{\max}(\bm\Lambda_\cur)$, $\lambda_\cur^-= \lambda_{\min}(\bm\Lambda_\cur)$,
 $\lambda_\undet^+= \lambda_{\max}(\bm\Lambda_\undet).
$

\item Let $r_\cur = \rank(\G_\cur)$. Clearly, $r_\cur \le r$.

\een
\end{definition}

\begin{remark}
In special cases, $\G_\dett$ (and hence $\Ghat_\dett$) could be empty; and/or $\G_\undet$ could be empty.
\end{remark}

\bi
\item Since $\bm\Lambda$ contains eigenvalues in decreasing order, when $\G_\undet$ is not empty, $\lambda^- \le \lambda_\undet^+ \le \lambda_\cur^- \le \lambda_\cur^+ \le \lambda^+$.

\item When $\G_\undet$ is empty, $\lambda_\undet^+ = 0$ and $\lambda^- \le \lambda_\cur^- \le \lambda_\cur^+ \le \lambda^+$.
\ei

Using  $\|\bR_\cur\|=\|\bm\Psi \G_\cur\| \le 1$, 
\begin{align*}
\zeta_\cur & = \|(\I - \Ghat_\cur \Ghat_\cur{}')\bm\Psi \G_\cur\| \\
& = \|(\I - \Ghat_\cur \Ghat_\cur{}') \bE_\cur \bR_\cur\|\\
& \le \|(\I - \Ghat_\cur \Ghat_\cur{}') \bE_\cur\| = \SE(\Ghat_\cur, \bE_\cur).
\end{align*} 
Thus, to bound $\zeta_\cur$ we need to bound $\SE(\Ghat_\cur, \bE_\cur)$. $\Ghat_\cur$ is the matrix of top $\rhat_\cur$ eigenvectors of $\cM$. From its definition, $\bE_\cur$ is a basis matrix with $r_\cur$ columns. Suppose for a moment that $\rhat_\cur=r_\cur$. Then, in order to bound $\SE(\Ghat_\cur, \bE_\cur)$, we can use the $\sin \theta$ result, Corollary \ref{sintheta}. To do this, we need to define a matrix $\cA$ so that, under appropriate assumptions, the span of its top $r_\cur$ eigenvectors equals $\Span(\bE_\cur)$. For the simple EVD proof, we used $\frac{1}{\alpha} \sum_{t=t_* + 1}^{t_*+\alpha} \bm\Psi \lt \lt' \bm\Psi$ as the matrix $\cA$. However, this will not work now since $\bE_\cur$ is not orthonormal to $\bm\Psi \G_\dett$ or to $\bm\Psi \G_\undet$. 
But, instead we can use
\bea
&& \cA = \bE_\cur \A \bE_\cur{}'  + \bE_{\cur,\perp} \A_\perp \bE_{\cur,\perp}{}', \text{ where} \nn \\
&& \A:=\bE_\cur{}'  (\frac{1}{\alpha} \sum_{t=t_* + 1}^{t_*+\alpha}  \bm\Psi \lt \lt' \bm\Psi)   \bE_\cur  \text{ and} \nn \\
&& \A_\perp:= \bE_{\cur,\perp}{}'  (\frac{1}{\alpha} \sum_{t=t_* + 1}^{t_*+\alpha}  \bm\Psi \lt \lt' \bm\Psi) \bE_{\cur,\perp}
\label{def_cA}
\eea
Now, by construction, $\cA$ is in the desired form. 

With the above choice of $\cA$, $\cH:= \cM - \cA$ satisfies\footnote{This follows easily by writing $\cH= (\cM - \frac{1}{\alpha} \sum_t \bm\Psi \lt \lt' \bm\Psi) + (\frac{1}{\alpha} \sum_t \bm\Psi \lt \lt' \bm\Psi - \cA)$ and using the fact that $\M=(\bE \bE' + \bE_\perp\bE_\perp{}') \M (\bE \bE' + \bE_\perp\bE_\perp{}')$ for  $\frac{1}{\alpha} \sum_t \bm\Psi \lt \lt' \bm\Psi$.}
$\cH = \term1 + \term1' + \term2 + \term3 + \term3'$ where $\term1:=\frac{1}{\alpha} \sum_t \bm\Psi \lt \wt' \bm\Psi$, $\term2:=\frac{1}{\alpha} \sum_t \bm\Psi \wt \wt' \bm\Psi$ and $\term3= \bE_\cur \bE_\cur{}'  (\frac{1}{\alpha} \sum_t \bm\Psi \lt \lt' \bm\Psi) \bE_{\cur,\perp}\bE_{\cur,\perp}{}'$.

Thus, using the above along with Corollary \ref{sintheta}, we can conclude the following.
\begin{fact} \label{zeta_cur_bnd}
\
\ben
\item
If $\rhat_\cur = r_\cur$, and $\lambda_{\min}(\A) - \lambda_{\max}(\A_\perp) - \|\cH\| > 0$,%
\[
\zeta_\cur  \le  \SE(\Ghat_\cur, \bE_\cur) \le   \frac{\|\cH\|}{\lambda_{\min}(\A) - \lambda_{\max}(\A_\perp) - \|\cH\|}. 
\]
\item Let $\bm{Q}:=\bE_\cur \bE_\cur{}'  (\frac{1}{\alpha}  \sum_t \bm\Psi \lt \lt' \bm\Psi) \bE_{\cur,\perp}\bE_{\cur,\perp}{}'$. We have
\begin{align*}
\| \cH\| \le  2\|\frac{1}{\alpha} \sum_t \bm\Psi \lt \wt'\| + \|\frac{1}{\alpha}  \sum_t \wt \wt'\|   +2\| \bm{Q} \|.
\end{align*}
\een
\end{fact}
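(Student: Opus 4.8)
The plan is to obtain both parts of the Fact directly from the pieces already assembled above, so the argument is short.

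For part 1, I would apply Corollary \ref{sintheta} with the perturbed matrix taken to be $\cM$ of \eqref{def_cM} and the unperturbed matrix taken to be $\cA$ of \eqref{def_cA}. By construction $\cA$ is Hermitian and is already written in the block form the corollary requires: the orthonormal matrix is $[\bE_\cur\ \bE_{\cur,\perp}]$, the leading block is the $r_\cur \times r_\cur$ matrix $\A$, and the complementary block is $\A_\perp$, so the role of $\bE$ is played by $\bE_\cur$. Under the hypothesis $\rhat_\cur = r_\cur = \rank(\bE_\cur)$, the matrix $\Ghat_\cur$ of top $\rhat_\cur$ eigenvectors of $\cM$ is exactly the matrix of top $\rank(\bE_\cur)$ eigenvectors that the corollary calls $\bm{F}$, and the eigengap assumption $\lambda_{\min}(\A) - \lambda_{\max}(\A_\perp) - \|\cH\| > 0$ is precisely the corollary's hypothesis. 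The corollary then yields $\SE(\Ghat_\cur,\bE_\cur) \le \|\cH\|/(\lambda_{\min}(\A) - \lambda_{\max}(\A_\perp) - \|\cH\|)$, and combining this with the inequality $\zeta_\cur \le \SE(\Ghat_\cur,\bE_\cur)$ established just above (from the QR factorization $\bm\Psi\G_\cur = \bE_\cur\bR_\cur$ and $\|\bR_\cur\| \le 1$) completes part 1.

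For part 2, I would start from the decomposition $\cH = \term1 + \term1' + \term2 + \term3 + \term3'$ recorded above, noting that $\term3$ is literally the matrix $\bm{Q}$. The triangle inequality gives $\|\cH\| \le 2\|\term1\| + \|\term2\| + 2\|\bm{Q}\|$. It then remains only to discard the projector factors: since $\bm\Psi$ is an orthogonal projector, $\|\bm\Psi\| \le 1$, so $\|\term1\| \le \|\tfrac{1}{\alpha}\sum_t \bm\Psi\lt\wt'\|\,\|\bm\Psi\| \le \|\tfrac{1}{\alpha}\sum_t \bm\Psi\lt\wt'\|$ and $\|\term2\| = \|\bm\Psi(\tfrac{1}{\alpha}\sum_t \wt\wt')\bm\Psi\| \le \|\tfrac{1}{\alpha}\sum_t \wt\wt'\|$, which gives the stated bound.

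The only thing that needs any verification is the decomposition of $\cH$ itself, and this is the routine computation indicated in the footnote: write $\cH = (\cM - \tfrac{1}{\alpha}\sum_t \bm\Psi\lt\lt'\bm\Psi) + (\tfrac{1}{\alpha}\sum_t \bm\Psi\lt\lt'\bm\Psi - \cA)$, substitute $\wt = \M_t\lt$ in the first bracket to get $\term1 + \term1' + \term2$, and use $\bN = (\bE_\cur\bE_\cur{}' + \bE_{\cur,\perp}\bE_{\cur,\perp}{}')\bN(\bE_\cur\bE_\cur{}' + \bE_{\cur,\perp}\bE_{\cur,\perp}{}')$ with $\bN = \tfrac{1}{\alpha}\sum_t \bm\Psi\lt\lt'\bm\Psi$ in the second, where the two diagonal blocks are exactly $\bE_\cur\A\bE_\cur{}'$ and $\bE_{\cur,\perp}\A_\perp\bE_{\cur,\perp}{}'$ (hence cancelled by $\cA$) and the two off-diagonal blocks are $\term3$ and $\term3'$. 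So there is no real obstacle: this Fact is just a bookkeeping corollary of Davis--Kahan and the triangle inequality, and the genuine work of bounding $\|\tfrac{1}{\alpha}\sum_t \bm\Psi\lt\wt'\|$, $\|\tfrac{1}{\alpha}\sum_t \wt\wt'\|$, $\|\bm{Q}\|$, $\lambda_{\min}(\A)$ and $\lambda_{\max}(\A_\perp)$ is deferred to the matrix-Hoeffding lemmas that follow.
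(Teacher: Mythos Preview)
Your proposal is correct and follows essentially the same approach as the paper: the paper derives the Fact as an immediate consequence of Corollary~\ref{sintheta} applied to the pair $(\cM,\cA)$ together with the QR-based inequality $\zeta_\cur \le \SE(\Ghat_\cur,\bE_\cur)$ and the decomposition $\cH = \term1 + \term1' + \term2 + \term3 + \term3'$ (justified exactly as in the footnote you reproduced). Your write-up makes the bookkeeping more explicit than the paper's terse ``using the above along with Corollary~\ref{sintheta}'', but the argument is the same.
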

The next lemma bounds the RHS terms in the above lemma and a few other quantities needed for showing $\rhat_\cur = r_\cur$.


\begin{lem}

(1) Assume that $\yt= \lt + \wt = \lt + \M_t \lt$ with $\lt$ satisfying Assumption \ref{lt_mod} and $\M_t$ satisfying Assumption \ref{Mt_model}. 

(2) 
Assume that we are given $\Ghat_\dett$ that was computed using (some or all) $\yt$'s for $t \le t_*$ and that satisfies $\zeta_\dett \le r\zeta$.  

Define $g:=\lambda_\cur^+/ \lambda_\cur^-$, $\chi:= \lambda_\undet^+/ \lambda_\cur^-$. Set $\epsilon := 0.01 r_\cur \zeta \lambda_\cur^-$. 
%

Then, the following hold:
\ben

\item Let $p_1:=2n \exp( - \frac{ \alpha\epsilon^2}{ 32 b_{prob}^2}$ where $
b_{prob}:= \eta r q ( (r\zeta) \lambda^+  + \lambda_\cur^+ +  (r\zeta) \sqrt{\lambda^+ \lambda_\cur^+} +  \sqrt{\lambda^+ \lambda_\cur^+} ) .
$
Conditioned on $\{\zeta_\dett \le r\zeta\}$, with probability at least $1 - p_1$
\begin{align*}
&\|\frac{1}{{\alpha}} \sum_t  {\bm\Psi} \lt {\et}' \| \le q ( (r\zeta) \lambda^+ + \lambda_\cur^+)  \sqrt{\frac{\bbbeta}{\alpha}} + \epsilon \\
& \le [q (r\zeta) f \sqrt{\frac{\bbbeta}{\alpha}} + q g \sqrt{\frac{\bbbeta}{\alpha}} + 0.01 r_\cur \zeta] \lambda_\cur^- .
\end{align*}

\item Let $p_2:= 2n \exp(- \frac{\alpha \epsilon^2}{32  (  q^2 \eta r  \lambda^+)^2 })$.
Conditioned on $\{\zeta_\dett \le r\zeta\}$, with probability (w.p.) at least 
$1 -p_2$, 
\[
\| \frac{1}{{\alpha}} \sum_t  \et {\et}'  \| \le {\frac{\bbbeta}{\alpha}} q^2 \lambda^+  + \epsilon \le [ \frac{\bbbeta}{\alpha} q^2 f + 0.01 r_\cur \zeta] \lambda_\cur^- .
\]

\item Let $p_3:=2n \exp(- \frac{\alpha \epsilon^2}{32  b_{prob}^2})$ with $b_{prob}:= \eta r ( (r\zeta)^2 \lambda^+ + \lambda_\cur^+ + 2(r\zeta) \sqrt{\lambda^+ \lambda_\cur^+})$.
Conditioned on $\{\zeta_\dett \le r\zeta\}$, with probability at least $1 - p_3$, 
\begin{align*}
&\|\bE_\cur \bE_\cur{}'  (\frac{1}{\alpha} \bm\Psi \lt \lt' \bm\Psi) \bE_{\cur,\perp}\bE_{\cur,\perp}{}' \|\\
& \le (r\zeta)^2\lambda^+ + \frac{(r\zeta)^2}{\sqrt{1 - (r\zeta)^2}}\lambda_\undet^+ + \epsilon   \\
&\le [  (r\zeta)^2 f + \frac{(r\zeta)^2}{\sqrt{1 - (r\zeta)^2}} \chi + 0.01 r_\cur \zeta] \lambda_\cur^- .
\end{align*}

\item Conditioned on $\{\zeta_\dett \le r\zeta\}$, w.p. at least $1 - p_3$, 
\begin{align*}
\lambda_{\min}(\A)  &\ge  (1 - (r\zeta)^2) \lambda_\cur^- - \epsilon  \\
&=  [1 - (r\zeta)^2 - 0.01 r_\cur \zeta] \lambda_\cur^-
\end{align*}

\item Conditioned on $\{\zeta_\dett \le r\zeta\}$,  w.p. at least $1 -  p_3$, 
\begin{align*}
\lambda_{\max}(\A_\perp) &\le ( (r\zeta)^2 \lambda^+ + \lambda_\undet^+ )+ \epsilon \\
& \le [ (r\zeta)^2 f  + \chi + 0.01 r_\cur \zeta] \lambda_\cur^- .
\end{align*}

\item Conditioned on $\{\zeta_\dett \le r\zeta\}$, with probability at least $1 -  p_3$, 
\[
\lambda_{\max}(\A_\perp) \ge (1 - (r\zeta)^2 - \frac{(r\zeta)^2}{\sqrt{1-(r\zeta)^2}}) \lambda_\undet^+  - \epsilon .
\]

\item Conditioned on $\{\zeta_\dett \le r\zeta\}$,  w.p. at least $1 -  p_3$, 
\begin{align*}
\lambda_{\max}(\A)  &\ge   (1 - (r\zeta)^2)  \lambda_\cur^+ - \epsilon  \\
&=  [(1 - (r\zeta)^2) g - 0.01 r_\cur \zeta ] \lambda_\cur^- .
\end{align*}

\item Conditioned on $\{\zeta_\dett \le r\zeta\}$,  w.p. at least $1 -  p_3$, 
\begin{align*}
\lambda_{\max}(\A)  &\le  \lambda_\cur^+ + (r\zeta)^2 \lambda^+ + \frac{1}{1- r^2 \zeta^2} (r\zeta)^2 \lambda_\undet^+ + \epsilon \\
&\le  [g +  (r\zeta)^2 f  +  \frac{(r\zeta)^2}{1- (r\zeta)^2}  \chi + 0.01 r_\cur \zeta ] \lambda_\cur^-  .
\end{align*}

\een
\label{hp_bnds_gen}
\end{lem}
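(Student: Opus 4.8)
The plan is to prove all eight parts by one common recipe: condition, split each average into (conditional mean) $+$ (zero-mean deviation), bound the deviation by $\epsilon$ with matrix Hoeffding, and estimate the mean deterministically. First I would condition on the $\sigma$-algebra generated by $\{\at:\ t\le t_*\}$. On the event $\{\zeta_\dett\le r\zeta\}$ the matrices $\bm\Psi=\I-\Ghat_\dett\Ghat_\dett{}'$, $\bE_\cur$, $\bR_\cur$ become deterministic with $\|\bm\Psi\|\le1$, $\|\bm\Psi\G_\dett\|\le r\zeta$, $\bE_\cur{}'\bm\Psi\G_\cur=\bR_\cur$, $(1-(r\zeta)^2)\I\preceq\bR_\cur{}'\bR_\cur\preceq\I$, $\bE_{\cur,\perp}{}'\bm\Psi\G_\cur=\bm{0}$, and (using $\G_\cur\perp\G_\dett$, $\G_\cur\perp\G_\undet$) $\|\Ghat_\dett{}'\G_\cur\|,\|\Ghat_\dett{}'\G_\undet\|\le r\zeta$ to leading order; meanwhile the $\at$'s for $t>t_*$ stay i.i.d.\ and independent of all of these, so the per-$t$ summands below are conditionally independent. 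Since the resulting probability bounds do not depend on the admissible $\bm\Psi$, they hold conditioned on $\{\zeta_\dett\le r\zeta\}$. The recurring trick is to never use the crude estimate $\|\bm\Psi\lt\|\le\sqrt{\eta r\lambda^+}$ but instead to split $\bm\Psi\lt=\bm\Psi\G_\dett\a_{t,\dett}+\bm\Psi[\G_\cur,\G_\undet]\a_{t,\rest}$ (and similarly split $\P'\bm\Psi$ and $\at\at{}'-\Lam$), using $\|\bm\Psi\G_\dett\|\le r\zeta$, $\|\a_{t,\dett}\|^2\le\eta r\lambda^+$, $\|\a_{t,\rest}\|^2\le\eta r\lambda_\cur^+$, $\|\Lam_\dett\|\le\lambda^+$, $\|\diag(\Lam_\cur,\Lam_\undet)\|=\lambda_\cur^+$; this is exactly what produces the shapes $(r\zeta)\lambda^+$, $\lambda_\cur^+$, $\sqrt{\lambda^+\lambda_\cur^+}$, $(r\zeta)^2\lambda^+$ appearing in the stated $b_{prob}$'s and mean bounds.

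For parts 1 and 2 I would use $\wt=\M_{2,t}(\M_{1,t}\P)\at$ with $\|\M_{1,t}\P\|\le q$, $\|\M_{2,t}\|\le1$, so that (transposing) $\frac1\alpha\sum_t\bm\Psi\lt\wt{}'=\bigl(\frac1\alpha\sum_t\M_{2,t}(\M_{1,t}\P)\at\at{}'\P'\bm\Psi\bigr)'$. Subtracting the conditional mean replaces $\at\at{}'$ by $\at\at{}'-\Lam$, and the conditional mean $\frac1\alpha\sum_t\M_{2,t}(\M_{1,t}\P)\Lam\P'\bm\Psi$ is handled by Cauchy--Schwarz, $\|\sum_tA_tB_t'\|\le\|\sum_tA_tA_t'\|^{1/2}\|\sum_tB_tB_t'\|^{1/2}$, with $A_t=\M_{2,t}(\M_{1,t}\P)$ and $B_t\equiv\bm\Psi\P\Lam$ constant in $t$: the first factor is $\le\bbbeta q^2$ by \eqref{M2t_bnd} with $\A_t=(\M_{1,t}\P)(\M_{1,t}\P)'$, the second is $\alpha\|\bm\Psi\P\Lam\|^2\le\alpha((r\zeta)\lambda^++\lambda_\cur^+)^2$ by the block split (grouping $[\G_\cur,\G_\undet]$ together so that $\|\diag(\Lam_\cur,\Lam_\undet)\|=\lambda_\cur^+$), giving the $q((r\zeta)\lambda^++\lambda_\cur^+)\sqrt{\bbbeta/\alpha}$ term; matrix Hoeffding on the zero-mean deviation (summand norm $\le b_{prob}$ by the block split of $\P'\bm\Psi$ and of the factors of $\at$) then gives deviation $\le\epsilon=0.01\,r_\cur\zeta\lambda_\cur^-$ with probability $\ge1-p_1$. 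Part 2 is identical with conditional mean $\frac1\alpha\sum_t\M_{2,t}(\M_{1,t}\P)\Lam(\M_{1,t}\P)'\M_{2,t}'$, to which \eqref{M2t_bnd} applies directly ($\|\A_t\|\le q^2\lambda^+$), yielding $\frac\bbbeta\alpha q^2\lambda^+$ plus a Hoeffding $\epsilon$, probability $\ge1-p_2$.

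Parts 3--8 all live on the single event $\{\|\widehat{\bm{\Sigma}}-\bm\Psi\bm{\Sigma}\bm\Psi\|\le\epsilon\}$, where $\widehat{\bm{\Sigma}}:=\frac1\alpha\sum_t\bm\Psi\lt\lt{}'\bm\Psi$; this has probability $\ge1-p_3$ by matrix Hoeffding applied to $\bm\Psi(\lt\lt{}'-\bm{\Sigma})\bm\Psi=\bm\Psi\P(\at\at{}'-\Lam)\P'\bm\Psi$, whose summand norm is $\le2b_{prob}$ with $b_{prob}=\eta r(r\zeta\sqrt{\lambda^+}+\sqrt{\lambda_\cur^+})^2=\eta r((r\zeta)^2\lambda^++\lambda_\cur^++2(r\zeta)\sqrt{\lambda^+\lambda_\cur^+})$, the block bound on $\|\bm\Psi\lt\|^2$. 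On this event $\bm Q=\bE_\cur\bE_\cur{}'\widehat{\bm{\Sigma}}\bE_{\cur,\perp}\bE_{\cur,\perp}{}'$ equals its mean $\bE_\cur\bE_\cur{}'(\bm\Psi\bm{\Sigma}\bm\Psi)\bE_{\cur,\perp}\bE_{\cur,\perp}{}'$ up to $\epsilon$; expanding $\bm{\Sigma}=\G_\dett\Lam_\dett\G_\dett{}'+\G_\cur\Lam_\cur\G_\cur{}'+\G_\undet\Lam_\undet\G_\undet{}'$ and using the orthogonality relations, the $\G_\cur$-piece cancels against $\bE_{\cur,\perp}{}'\bm\Psi\G_\cur=\bm{0}$, leaving $(r\zeta)^2\lambda^+$ from $\G_\dett$ and $\tfrac{(r\zeta)^2}{\sqrt{1-(r\zeta)^2}}\lambda_\undet^+$ from the $\G_\cur$--$\G_\undet$ cross-term (through $\|\bE_\cur{}'\bm\Psi\G_\undet\|\le\|\bR_\cur^{-1}\|\,\|\Ghat_\dett{}'\G_\cur\|\,\|\Ghat_\dett{}'\G_\undet\|$) --- part 3. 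For parts 4--8, $\A=\bE_\cur{}'\widehat{\bm{\Sigma}}\bE_\cur$, $\A_\perp=\bE_{\cur,\perp}{}'\widehat{\bm{\Sigma}}\bE_{\cur,\perp}$, so Weyl's inequality reduces everything to $\bE_\cur{}'\bm\Psi\bm{\Sigma}\bm\Psi\bE_\cur$ and $\bE_{\cur,\perp}{}'\bm\Psi\bm{\Sigma}\bm\Psi\bE_{\cur,\perp}$ up to $\epsilon$; writing $\bm{\Sigma}$ as a sum of three PSD pieces, $\bE_\cur{}'\bm\Psi\bm{\Sigma}\bm\Psi\bE_\cur\succeq\bR_\cur\Lam_\cur\bR_\cur{}'$, so $\lambda_{\min}(\A)\ge(1-(r\zeta)^2)\lambda_\cur^--\epsilon$ (part 4) and $\lambda_{\max}(\A)\ge(1-(r\zeta)^2)\lambda_\cur^+-\epsilon$ (part 7) using only $\sigma_{\min}(\bR_\cur)^2\ge1-(r\zeta)^2$, while adding the norms of the $\G_\dett$-piece ($\le(r\zeta)^2\lambda^+$) and $\G_\undet$-piece ($\le\tfrac{(r\zeta)^2}{1-(r\zeta)^2}\lambda_\undet^+$) gives the matching upper bound, part 8; and $\bE_{\cur,\perp}{}'\bm\Psi\bm{\Sigma}\bm\Psi\bE_{\cur,\perp}$ has no $\G_\cur$-piece, so $\lambda_{\max}(\A_\perp)\le(r\zeta)^2\lambda^++\lambda_\undet^++\epsilon$ (part 5), whereas the lower bound (part 6) comes from the Rayleigh quotient of $\widehat{\bm{\Sigma}}$ at the $\range(\bE_{\cur,\perp})$-projection of a near-eigenvector $\bm\Psi\G_\undet w$ of the $\G_\undet$-piece, losing only $O((r\zeta)^2)\lambda_\undet^+$ from the projection. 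Each $p_i$ then comes from the union-bound factor $2n$ built into matrix Hoeffding.

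The hard part is not conceptual --- everything is $\sin\theta$/Weyl $+$ matrix Hoeffding $+$ Assumption \ref{Mt_model} --- but is the tight bookkeeping: every occurrence of $\bm\Psi\lt$, $\P'\bm\Psi$, $\at\at{}'-\Lam$, $\bm\Psi\bm{\Sigma}\bm\Psi$ must be split into its $\G_\dett$/$\G_\cur$/$\G_\undet$ parts, tracking which of $\lambda^+$, $\lambda_\cur^\pm$, $\lambda_\undet^+=\chi\lambda_\cur^-$ and which power of $r\zeta$ weights each piece, so that the $b_{prob}$'s and the mean bounds land in exactly the stated form rather than a cruder one; one also needs the small observations $\|\bm\Psi\P\Lam\|\le(r\zeta)\lambda^++\lambda_\cur^+$ (group $[\G_\cur,\G_\undet]$) and the two-sided control $(1-(r\zeta)^2)\I\preceq\bR_\cur{}'\bR_\cur\preceq\I$; and the conditioning in the first paragraph must be set up carefully enough that matrix Hoeffding is legitimately applied to a genuinely conditionally independent, conditionally zero-mean matrix sequence.
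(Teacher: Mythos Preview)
Your plan is correct and matches the paper's proof in all essential respects: condition on the past data so that $\bm\Psi,\bE_\cur,\bR_\cur$ become deterministic; for item~1 bound the conditional mean via matrix Cauchy--Schwarz together with Assumption~\ref{Mt_model} (the paper takes $\bm{X}_t=\bm\Psi\bm\Sigma\M_{1,t}{}'$, $\bm{Y}_t=\M_{2,t}$, using $\bm{A}_t\equiv\I$ in \eqref{M2t_bnd}, which is equivalent to your choice); for item~2 apply \eqref{M2t_bnd} with $\bm{A}_t=\M_{1,t}\bm\Sigma\M_{1,t}{}'$; and for items~3--8 bound the conditional means using exactly the block split $\bm\Psi\bm\Sigma\bm\Psi=\sum_{*\in\{\dett,\cur,\undet\}}\bm\Psi\G_*\Lam_*\G_*{}'\bm\Psi$, the identity $\bE_{\cur,\perp}{}'\bm\Psi\G_\cur=\bm0$, the two-sided control on $\bR_\cur$, and (for parts~4,~6,~7) Ostrowski's theorem, which is the precise version of your Rayleigh-quotient remark.

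Two small organizational differences are worth flagging. First, the paper's matrix-Hoeffding corollaries (its Corollaries~\ref{hoeffding_nonzero} and~\ref{hoeffding_rec}) take the \emph{un-centered} summands as input and return $\|\cdot\|\le(\text{mean bound})+\epsilon$; thus no centering step is needed, and $b_{\mathrm{prob}}$ is exactly the bound on $\|\bm\Psi\lt\wt{}'\|$ (item~1) or $\|\bm\Psi\lt\lt{}'\bm\Psi\|$ (items~3--8). If you first center, the summand norm can double and you would land on $4b_{\mathrm{prob}}^2$ in the exponent rather than $b_{\mathrm{prob}}^2$, so drop that step. Second, for items~3--8 the paper applies Hoeffding separately to each compressed quantity ($\bE_\cur{}'\widehat{\bm\Sigma}\bE_\cur$, $\bE_{\cur,\perp}{}'\widehat{\bm\Sigma}\bE_{\cur,\perp}$, $\bE_\cur{}'\widehat{\bm\Sigma}\bE_{\cur,\perp}$), each time invoking the same loose bound $\|\bE_*{}'\bm\Psi\lt\lt{}'\bm\Psi\bE_{**}\|\le\|\bm\Psi\lt\lt{}'\bm\Psi\|\le b_{\mathrm{prob}}$; your single event $\{\|\widehat{\bm\Sigma}-\bm\Psi\bm\Sigma\bm\Psi\|\le\epsilon\}$ is a cleaner packaging of the same computation and yields the same $p_3$ once you avoid centering.
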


\begin{proof} The proof is in Section \ref{proof_hoeff}. \end{proof}

\begin{corollary}
Consider the setting of Lemma \ref{hp_bnds_gen}.
Assume%
\ben
\item  $r (r\zeta) \le 0.0001, \ \text{and} \ r(r\zeta) f \le 0.01.$  Since $r_\cur \le r$, this implies that $r_\cur\zeta \le 0.0001$, and
\item  $\bbbeta \le  \left( \frac{(1 - r_\cur\zeta - \chi)}{2}  \right)^2 \min \left( \frac{ (r_\cur \zeta)^2 }{4.1 q^2 g^2} , \frac{ (r_\cur \zeta) }{q^2 f} \right)  \alpha$.
\een
Using these and using  $ g \ge 1$, $g \le f$, $\chi \le 1$ (these hold by definition), with probability at least $1-p_1-p_2 - 4 p_3$,
\begin{align*}
\|\cH\| & \le  [ 2.02 q g \sqrt{\frac{\bbbeta}{\alpha}} +  \frac{\bbbeta}{\alpha} q^2 f    +  0.08 r_\cur\zeta]  \lambda_\cur^- \nonumber \\ 
& \le [0.75 (1 - r\zeta - \chi) r_\cur \zeta + 0.08 r_\cur\zeta] \lambda_\cur^- \nn \\
& \le 0.83 r_\cur\zeta \lambda_\cur^- , \nn \\
\lambda_{\max}(\A_\perp)  
&\le [ \chi + 0.02r_\cur\zeta] \lambda_\cur^-,  \\
 \lambda_{\max}(\A_\perp)
 &\ge [ \chi -  0.02 r_\cur\zeta] \lambda_\cur^-,  \\
\lambda_{\min}(\A)
&\ge [1 - 0.0101 r_\cur\zeta]   \lambda_\cur^-,  \\
\lambda_{\max}(\A)
&\le [g + 0.0202 r_\cur\zeta ] \lambda_\cur^- , \\
\lambda_{\max}(\A) 
& \ge [g - 0.02r_\cur\zeta ] \lambda_\cur^- .
\end{align*}

\label{hp_bnds_gen_cor}
\end{corollary}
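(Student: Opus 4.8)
The plan is to run the eight estimates of Lemma~\ref{hp_bnds_gen} through a union bound and then substitute the two standing hypotheses on $\zeta$ and on $\bbbeta/\alpha$, absorbing each resulting lower-order term into the stated small-constant slack; once Lemma~\ref{hp_bnds_gen} is in hand the whole argument is elementary bookkeeping.

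For the probability bound, items~1 and~2 of Lemma~\ref{hp_bnds_gen} fail with probability at most $p_1$ and $p_2$, while items~3--8 follow from at most four distinct concentration events (as in the proof of that lemma, several of the $\lambda(\A)$ and $\lambda(\A_\perp)$ estimates are read off the same event), each failing with probability at most $p_3$. Hence, conditioned on $\{\zeta_\dett\le r\zeta\}$, all eight bounds hold simultaneously with probability at least $1-p_1-p_2-4p_3$, and everything that follows is deterministic on that event.

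Next I would record the consequences of hypothesis~1. Since $r_\cur\le r$ and $r\ge1$: $r_\cur\zeta\le r\zeta\le r^2\zeta\le 0.0001$; $(r\zeta)f=(r^2\zeta f)/r\le 0.01$; $(r\zeta)^2 f/(r_\cur\zeta)=(r^2\zeta f)/r_\cur\le r^2\zeta f\le 0.01$, so $(r\zeta)^2 f\le 0.01\,r_\cur\zeta$; similarly $(r\zeta)^2\le 0.0001\,r_\cur\zeta$, and, as $(r\zeta)^2\le 10^{-8}$, both $1/\sqrt{1-(r\zeta)^2}$ and $1/(1-(r\zeta)^2)$ are below $1.001$. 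Using these together with $g\ge1$, $g\le f$, $\chi\le1$ (true by definition of the partition), the terms $(r\zeta)^2 f$, $(r\zeta)^2 g$, $(r\zeta)^2\chi$ occurring in items~4--8 are each at most a small multiple of $r_\cur\zeta$; collecting them with the $\epsilon=0.01\,r_\cur\zeta\lambda_\cur^-$ slack yields the five displayed inequalities for $\lambda_{\min}(\A)$, $\lambda_{\max}(\A)$ (both directions) and $\lambda_{\max}(\A_\perp)$ (both directions).

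For $\|\cH\|$ I would start from Fact~\ref{zeta_cur_bnd}(2), $\|\cH\|\le 2\|\tfrac1\alpha\sum_t\bm\Psi\lt\wt{}'\|+\|\tfrac1\alpha\sum_t\wt\wt{}'\|+2\|\bm{Q}\|$, and substitute items~1,~2,~3. The cross term $2q(r\zeta)f\sqrt{\bbbeta/\alpha}$ is at most $0.02\,qg\sqrt{\bbbeta/\alpha}$ since $(r\zeta)f\le 0.01$ and $g\ge1$, producing the coefficient $2.02$; the terms $2(r\zeta)^2 f$ and $2(r\zeta)^2\chi/\sqrt{1-(r\zeta)^2}$ are $O(r_\cur\zeta)$ and, with the three $0.01\,r_\cur\zeta$ slacks, fit under $0.08\,r_\cur\zeta$. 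This gives the first displayed bound; for the next, I would plug in hypothesis~2, whose first branch gives $\sqrt{\bbbeta/\alpha}\le\tfrac{1-r_\cur\zeta-\chi}{2}\cdot\tfrac{r_\cur\zeta}{\sqrt{4.1}\,qg}$, so that $2.02\,qg\sqrt{\bbbeta/\alpha}\le\tfrac{2.02}{2\sqrt{4.1}}(1-r_\cur\zeta-\chi)r_\cur\zeta\le 0.5(1-r_\cur\zeta-\chi)r_\cur\zeta$, and whose second branch gives $\tfrac{\bbbeta}{\alpha}q^2 f\le\bigl(\tfrac{1-r_\cur\zeta-\chi}{2}\bigr)^2 r_\cur\zeta\le 0.25(1-r_\cur\zeta-\chi)r_\cur\zeta$ (using $(x/2)^2\le x/4$ for $0\le x\le1$); adding these and $0.08\,r_\cur\zeta$, then bounding $1-r_\cur\zeta-\chi$ by $1$, gives $\|\cH\|\le 0.83\,r_\cur\zeta\lambda_\cur^-$. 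The only place that needs care is tracking $r_\cur$ versus $r$ — using $r_\cur\le r$ to bound $(r\zeta)$-terms from above but $r_\cur\ge1$ in the ratios above — and verifying that the accumulated constants really do stay below the thresholds ($0.08$, $0.0101$, $0.0202$, etc.) stated in the corollary.
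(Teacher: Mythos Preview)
Your proposal is correct and is essentially what the paper intends: the corollary is stated without a separate proof in the paper, being an immediate consequence of plugging the hypotheses on $\zeta$ and $\bbbeta/\alpha$ into the eight bounds of Lemma~\ref{hp_bnds_gen} and collecting constants, exactly as you outline. Your bookkeeping (the $(r\zeta)f\le 0.01$ reduction giving the $2.02$ coefficient, the $2.02/(2\sqrt{4.1})<0.5$ and $(x/2)^2\le x/4$ steps giving the $0.75$, and the absorption of the $(r\zeta)^2 f$, $(r\zeta)^2\chi$ and $\epsilon$ terms into $0.08\,r_\cur\zeta$) is accurate, and your reading of the $4p_3$ as four underlying concentration events covering items~3--8 matches how the paper accounts for the probability.
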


\begin{lem}
Consider the setting of Corollary \ref{hp_bnds_gen_cor}.
In addition, also assume that
\ben
\item $\ghatp = 1.01g+0.0001$ and  
\item $ \chi \le \min\left( \frac{g - 0.0001}{1.01g+0.0001} - 0.0001, 1 -  r_\cur\zeta - \frac{0.08}{0.25} \right)$. 
\een
Let $\hat\lambda_i: = \lambda_i(\cM)$.
Then, with probability at least $1-p_1-p_2 - 4 p_3$, the following hold.
\ben 
\item When $\G_\undet$ is not empty: $\frac{\hat\lambda_1}{\hat\lambda_{r_\cur}} \le \ghatp$,  $\frac{\hat\lambda_1}{\hat\lambda_{r_\cur+1}} > \ghatp$, and $ \hat\lambda_{r_\cur+1} \ge \thresh$.
\item When $\G_\undet$ is empty: $\frac{\hat\lambda_1}{\hat\lambda_{r_\cur}} \le \ghatp$ and $\hat\lambda_{r_\cur+1} < \thresh <  \hat\lambda_{r_\cur}$.
\item If $\rhat_\cur = r_\cur$, then
$\zeta_\cur \le \frac{\|\cH\|}{\lambda_{\min}(\A) - \lambda_{\max}(\A_\perp) - \|\cH\|} \le 0.75 r_\cur \zeta + \frac{0.08 r_\cur\zeta}{(1 - r_\cur\zeta - \chi)} \le r_\cur \zeta. $
\een
\label{main_lem}
\end{lem}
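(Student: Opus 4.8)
The plan is to carry out everything on the single high‑probability event supplied by Corollary~\ref{hp_bnds_gen_cor}: conditioned on $\{\zeta_\dett \le r\zeta\}$, that event has probability at least $1-p_1-p_2-4p_3$, and on it we simultaneously have the two‑sided bounds on $\|\cH\|$, $\lambda_{\min}(\A)$, $\lambda_{\max}(\A)$ and $\lambda_{\max}(\A_\perp)$ stated there; all remaining work is deterministic given those inequalities. I would also record the elementary numerical facts $r_\cur\zeta \le r\zeta \le 0.0001$, $(r\zeta)^2 \le 0.0001\,r_\cur\zeta$, $(r\zeta)^2 f \le 0.01\,r_\cur\zeta$ and $r_\cur\zeta\,f \le 0.01$, all consequences of $r(r\zeta)\le 0.0001$, $r(r\zeta)f\le 0.01$ and $r_\cur\le r$, together with $\lambda^- \le \lambda_\cur^- \le \lambda^+ = f\lambda^-$ and, when $\G_\undet$ is nonempty, $\lambda^- \le \lambda_\undet^+ = \chi\,\lambda_\cur^-$. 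The key structural step is to pin down the spectrum of $\cA$: on the event $\lambda_{\min}(\A) \ge [1-0.0101\,r_\cur\zeta]\lambda_\cur^- > [\chi+0.02\,r_\cur\zeta]\lambda_\cur^- \ge \lambda_{\max}(\A_\perp)$ (the strict inequality uses $\chi \le 1 - r_\cur\zeta - \tfrac{0.08}{0.25}<1$), so the block form \eqref{def_cA} shows the top $r_\cur$ eigenvalues of $\cA$ are those of $\A$ and $\lambda_{r_\cur+1}(\cA)=\lambda_{\max}(\A_\perp)$. Weyl's inequality applied to $\cM = \cA+\cH$ then gives $\hat\lambda_1 \in \lambda_{\max}(\A)\pm\|\cH\|$, $\hat\lambda_{r_\cur}\in\lambda_{\min}(\A)\pm\|\cH\|$, $\hat\lambda_{r_\cur+1}\in\lambda_{\max}(\A_\perp)\pm\|\cH\|$, which is the bridge from the spectral bounds of Corollary~\ref{hp_bnds_gen_cor} to the algorithm's $\hat\lambda_i$.

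Items~1 and~2 then come out by substitution. First, $\hat\lambda_1/\hat\lambda_{r_\cur} \le (\lambda_{\max}(\A)+\|\cH\|)/(\lambda_{\min}(\A)-\|\cH\|) \le (g+0.86\,r_\cur\zeta)/(1-0.85\,r_\cur\zeta)$, which is $\le \ghatp = 1.01g+0.0001$ since $r_\cur\zeta\le 0.0001$ and $g\ge1$. When $\G_\undet$ is nonempty, $\hat\lambda_1/\hat\lambda_{r_\cur+1} \ge (\lambda_{\max}(\A)-\|\cH\|)/(\lambda_{\max}(\A_\perp)+\|\cH\|) \ge (g-0.85\,r_\cur\zeta)/(\chi+0.85\,r_\cur\zeta)$; clearing denominators and invoking $\chi \le \tfrac{g-0.0001}{1.01g+0.0001}-0.0001$, the common factor $(1.01g+0.0001)$ cancels and the wanted inequality $>\ghatp$ collapses to $0.85\,r_\cur\zeta < 0.0001$, which holds. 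For the threshold in the nonempty case, $\hat\lambda_{r_\cur+1} \ge \lambda_{\max}(\A_\perp)-\|\cH\| \ge [\chi-0.85\,r_\cur\zeta]\lambda_\cur^- = \lambda_\undet^+ - 0.85\,r_\cur\zeta\,\lambda_\cur^- \ge \lambda^- - 0.85\,r_\cur\zeta f\,\lambda^- \ge (1-0.0085)\lambda^- > 0.95\lambda^- = \thresh$. When $\G_\undet$ is empty, $\chi=0$, so $\hat\lambda_{r_\cur+1} \le \lambda_{\max}(\A_\perp)+\|\cH\| \le 0.85\,r_\cur\zeta\,\lambda_\cur^- \le 0.85\,r_\cur\zeta f\,\lambda^- < \thresh$, while $\hat\lambda_{r_\cur} \ge \lambda_{\min}(\A)-\|\cH\| \ge [1-0.85\,r_\cur\zeta]\lambda_\cur^- \ge 0.99\lambda^- > \thresh$.

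For item~3, assume $\rhat_\cur=r_\cur$. On the event $\lambda_{\min}(\A)-\lambda_{\max}(\A_\perp)-\|\cH\| \ge [1-r_\cur\zeta-\chi]\lambda_\cur^- > 0$, so Fact~\ref{zeta_cur_bnd}(1) gives $\zeta_\cur \le \|\cH\|/(\lambda_{\min}(\A)-\lambda_{\max}(\A_\perp)-\|\cH\|)$. The point here is not to bound $\|\cH\|$ by one multiple of $r_\cur\zeta$ but to split it using the two‑term estimate $\|\cH\| \le [0.75(1-r\zeta-\chi)\,r_\cur\zeta + 0.08\,r_\cur\zeta]\lambda_\cur^-$ of Corollary~\ref{hp_bnds_gen_cor}; bounding the denominator below by $(1-r_\cur\zeta-\chi)\lambda_\cur^-$ and using $1-r\zeta-\chi \le 1-r_\cur\zeta-\chi$ yields $\zeta_\cur \le 0.75\,r_\cur\zeta + \frac{0.08\,r_\cur\zeta}{1-r_\cur\zeta-\chi}$. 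Finally $\chi \le 1-r_\cur\zeta-\tfrac{0.08}{0.25}$ forces $1-r_\cur\zeta-\chi \ge 0.32$, so the second term is at most $0.25\,r_\cur\zeta$ and $\zeta_\cur \le r_\cur\zeta$.

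I expect the only real difficulty to be keeping the many small constants straight; conceptually there are exactly two delicate spots. One is the comparison $\hat\lambda_{r_\cur+1} \ge \thresh$ when $\G_\undet$ is nonempty, where the perturbation $\|\cH\|$ scales with $\lambda_\cur^-$, which can be as large as $f\lambda^-$, and must be controlled against $\lambda_\undet^+ \ge \lambda^-$ precisely through $r_\cur\zeta\,f \le 0.01$ rather than through $\chi$. The other is the split of $\|\cH\|$ in item~3, which is what confines the potentially large $1/(1-\chi)$ factor to the small $0.08\,r_\cur\zeta$ concentration term and keeps it off the dominant, $q$‑driven term.
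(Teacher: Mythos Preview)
Your proposal is correct and follows essentially the same route as the paper's own proof: both work on the single event from Corollary~\ref{hp_bnds_gen_cor}, use $\lambda_{\min}(\A)>\lambda_{\max}(\A_\perp)$ to identify $\lambda_1(\cA),\lambda_{r_\cur}(\cA),\lambda_{r_\cur+1}(\cA)$ with $\lambda_{\max}(\A),\lambda_{\min}(\A),\lambda_{\max}(\A_\perp)$, apply Weyl to pass to $\hat\lambda_i$, and then plug in the numerical bounds; item~3 is handled identically via Fact~\ref{zeta_cur_bnd} and the two-term split of $\|\cH\|$. Your handling of $\hat\lambda_{r_\cur+1}\ge\thresh$ in the nonempty case is in fact a shade more careful than the paper's, since you explicitly convert $r_\cur\zeta\,\lambda_\cur^-$ to $r_\cur\zeta f\,\lambda^-$ before comparing with $\thresh$, whereas the paper writes $(1-0.85\,r_\cur\zeta)\lambda^-$ directly.
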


\begin{proof}
\begin{fact} \label{fact_mathcal_A}
From the  bound on $\chi$, $\chi \le 1 - 0.0001 \le 1 -  r_\cur\zeta$. Thus, using Corollary \ref{hp_bnds_gen_cor}, $\lambda_{\min}(\A) > \lambda_{\max}(\A_\perp)$ and so $\lambda_{r_\cur}(\cA) = \lambda_{\min}(\A)$, $\lambda_{r_\cur+1}(\cA) = \lambda_{\max}(\A_\perp)$, and $\lambda_1(\cA) = \lambda_{\max}(\A)$. Recall: $\lambda_1(.)$ is the same as $\lambda_{\max}(.)$.
\end{fact}

{\em Proof of item 1. }
Recall that $\cM$ and $\cA$ are defined in  \eqref{def_cM} and \eqref{def_cA}.
Using Weyl's inequality, Fact \ref{fact_mathcal_A}, and Corollary \ref{hp_bnds_gen_cor}, with the probability given there,
\[
\frac{\hat\lambda_1}{\hat\lambda_{r_\cur}}
\le
\frac{ \lambda_{\max}(\A) + \|\cH\| }{ \lambda_{\min}(\A) - \|\cH \| }
\le \frac{g + 0.86 r_\cur\zeta}{1 - 0.85 r_\cur\zeta}
\]
and
\[
\frac{\hat\lambda_1}{\hat\lambda_{r_\cur+1}} > \frac{  \lambda_{\max}(\A) - \|\cH \| }{ \lambda_{\max}(\A_\perp) + \|\cH \| }
>  \frac{g - 0.85 r_\cur\zeta}{\chi + 0.85 r_\cur\zeta}
\]
Thus, if
\bea \label{g_chi_f}
\frac{g + 0.85  r_\cur\zeta}{1 - 0.85  r_\cur\zeta} \le \ghatp \le \frac{g - 0.85  r_\cur\zeta}{\chi + 0.85  r_\cur\zeta}
\eea
holds, we will be done. The above requires $\chi$ to be small enough so that the lower bound is not larger than the upper bound and it requires $\ghatp$ to be appropriately set. Both are ensured by the assumptions in the lemma.

Since $\G_\undet$ is not empty, $\lambda_\undet^+ = \chi \lambda_\cur^- > \lambda^-$
Thus, using Weyl's inequality followed by Corollary \ref{hp_bnds_gen_cor}, with the probability given there, 
\begin{align*}
\hat\lambda_{r_\cur+1} &\ge \lambda_{r_\cur+1}(\cA) - \|\cH\| = \lambda_{\max}(\A_\perp) - \|\cH\|\\
& \ge [ \chi - 0.02 r_\cur\zeta] \lambda_\cur^- - 0.83 r_\cur\zeta \lambda_\cur^- \\
&\ge (1- 0.85 r_\cur\zeta) \lambda^- > \thresh
\end{align*}

{\em Proof of item 2. } Since $\G_\undet$ is empty, $\lambda_\undet^+ = 0$ and so $\chi = 0$. Thus, using Corollary \ref{hp_bnds_gen_cor}, with probability given there,
\begin{align*}
\hat\lambda_{r_\cur+1}  &\le \lambda_{r_\cur+1}(\cA) + \|\cH\| = \lambda_{\max}(\A_\perp)  + \|\cH\| \\
&\le  0 + 0.02r_\cur\zeta \lambda^- + \|\cH\| \le 0.85 r_\cur\zeta \lambda^- \\
&< \thresh,
\end{align*}
\begin{align*}
\hat\lambda_{r_\cur} &\ge \lambda_{r_\cur}(\cA) - \|\cH\| = \lambda_{\min}(\A)  - \|\cH\| \\
&\ge \lambda_\cur^- - 0.085 r_\cur\zeta \lambda_\cur^- \ge  (1- 0.85 r_\cur\zeta) \lambda^- \\
&> \thresh,
\end{align*}
and
\[
\frac{\hat\lambda_1}{\hat\lambda_{r_\cur}}
\le
\frac{ \lambda_{\max}(\A) + \|\cH\| }{ \lambda_{\min}(\A) - \|\cH \| }
\le \frac{g + 0.85  r_\cur\zeta}{1 - 0.85   r_\cur\zeta} \le \ghatp
\]

{\em Proof of item 3. } Using Fact \ref{zeta_cur_bnd} and Corollary \ref{hp_bnds_gen_cor}, since $\rhat_\cur = r_\cur$ is assumed, we get
\bea
\zeta_\cur 
& \le & \frac{ [0.75 (1 -  r_\cur\zeta - \chi) r_\cur \zeta + 0.08 r_\cur\zeta] \lambda_\cur^- }{\lambda_\cur^- [1 - 0.0101 r_\cur\zeta - \chi - 0.02  r_\cur\zeta - 0.83 r\zeta ]}  \nonumber \\
& \le &  \frac{ 0.75 (1 - r\zeta - \chi) r_\cur \zeta + 0.08  r_\cur\zeta}{(1 -  r_\cur\zeta - \chi)} \le r_\cur\zeta
\eea
The last inequality used the bound on $\chi$.
\end{proof}

\Subsection{Proof of Theorem \ref{thm1}} \label{proof_of_thm1}
The theorem is a direct consequence of using \eqref{induc_assu} and applying Lemma \ref{main_lem} for each of the $k$ steps with the substitutions given in Definition \ref{gen_def}; along with picking $\alpha$ appropriately.
A detailed proof is in Sec. \ref{proof_thm1_complete}.

\Section{Proof of Hoeffding lemma, Lemma \ref{hp_bnds_gen}} \label{proof_hoeff}
The following lemma, which is a modification of \cite[Lemma 8.15]{rrpcp_perf}, will be used in our proof. It is proved in Sec. \ref{matbnds_del_proof}. The proof uses \cite[Lemma 2.10]{rrpcp_perf}.

\begin{lem} \label{matbnds_del}
Given  $\zeta_\dett \le r\zeta$.
\begin{enumerate}
\item $\|\bm\Psi \G_\dett\| \leq r\zeta$ and $\|\bm\Psi \G_\cur\| \le 1$. 

\item $\ds\sqrt{1-(r\zeta)^2} \leq \sigma_i(\bR_\cur)=\sigma_i(\bm\Psi \G_\cur) \leq 1$ and $\ds\sqrt{1-(r\zeta)^2} \le \sigma_i(\bm\Psi \G_\undet ) \leq 1$

\item $\ds\|\bE_\cur{}'\bm\Psi \G_\undet \| \leq \frac{(r\zeta)^2}{\sqrt{1-(r\zeta)^2}}$
\item
\begin{align*}\bm\Psi \bm\Sigma \bm\Psi =& \left[\bm\Psi \G_\dett \ \bm\Psi \G_\cur \ \bm\Psi \G_\undet \right] \\
&\left[\begin{array}{ccc} \Lamtdet \ & \bm{0} \ & \bm{0} \  \\ \bm{0} \ & \ \Lamtcur & \ \\ \bm{0} \ & \bm{0} \ & \Lamtundet \ \end{array} \right]  \left[ \begin{array}{c}\bm\Psi \G_\dett \\ \bm\Psi \G_\cur \\ \bm\Psi \G_\undet \end{array} \right]'
  \end{align*}
    with $\lambda_{\max}(\Lamtdet) \le \lambda^+$, $\lambda_\cur^- \le \lambda_{\min}(\Lamtcur) \le \lambda_{\max}(\Lamtcur) \le \lambda_\cur^+$, $\lambda_{\max}(\Lamtundet) \le \lambda_\undet^+$. 

\item Using the first four claims, it is easy to see that
\ben
\item $\|\bE_{\cur,\perp}{}'\bm\Psi \bm\Sigma \bm\Psi\bE_{\cur,\perp}\| \le (r\zeta)^2 \lambda^+ + \lambda_\undet^+$

\item $\|\bE_{\cur,\perp}{}'\bm\Psi \bm\Sigma \bm\Psi{\bm{E}}_\cur\| \le (r\zeta)^2 \lambda^+ +  \frac{(r\zeta)^2}{\sqrt{1-(r\zeta)^2}}\lambda_\undet^+$

\item $\|\bm\Psi \bm\Sigma\| \le (r\zeta)\lambda^+ + \lambda_\cur^+$ and $\|\bm\Psi \bm\Sigma {\bm{M}_{1,t}}'\| \le q((r\zeta)\lambda^+ + \lambda_\cur^+)$

\item $\|{\bm{M}_{1,t}} \bm\Sigma\| \le  q \lambda^+$ and $\|{\bm{M}_{1,t}} \bm\Sigma {\bm{M}_{1,t}}{}'\| \le  q^2 \lambda^+$


\een
If $\Ghat_\dett=\G_\dett=[.]$, then all the terms containing $(r\zeta)$ disappear.

\item $\lambda_{\min}(A + B) \ge \lambda_{\min}(A) + \lambda_{\min}(B)$

\item Let $\a_t:=\P'\lt$, $\a_{t,\dett}:=\G_\dett{}' \lt$, $\a_{t,\cur}:=\G_\cur{}' \lt$ and $\a_{t,\undet}:=\G_\undet{}' \lt$. Also let  $\a_{t,\rest} := [\a_{t,\cur}{}', \a_{t,\undet}{}']'$. Then  $\|\a_{t,\rest}\|^2 \le r \eta \lambda_\cur^+$ and $\|\a_{t,\dett}\|^2 \le \|\a_t\|^2 \le r \eta \lambda^+$.


\item $\sigma_{\min}(\bE_{\cur,\perp}{}'\bm\Psi \G_\undet)^2 \ge 1 -(r\zeta)^2 - \frac{(r\zeta)^2}{\sqrt{1-(r\zeta)^2}}$.

\end{enumerate}
\end{lem}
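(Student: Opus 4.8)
The plan is to prove the nine items essentially in the order listed: items 4--8 use only items 1--3 together with elementary linear algebra --- idempotence of $\bm\Psi := \I - \Ghat_\dett\Ghat_\dett{}'$, submultiplicativity of the operator norm, Weyl's inequality, and a Pythagorean identity --- so the real work is in items 1--3. Throughout I use that $\P = [\G_\dett\ \G_\cur\ \G_\undet]$ has orthonormal columns (so the three blocks are mutually orthogonal and $(\I-\G_\dett\G_\dett{}')\G_\cur = \G_\cur$), that $\bm\Psi$ is an orthogonal projector, and that $\Ghat_\dett$ is produced by the cluster-EVD induction to have the same number of columns as $\G_\dett$. The cited \cite[Lemma 2.10]{rrpcp_perf} can be used to streamline several of the norm manipulations.

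Item 1 is immediate: $\|\bm\Psi\G_\dett\| = \zeta_\dett \le r\zeta$ is the hypothesis, and $\|\bm\Psi\G_\cur\| \le \|\bm\Psi\|\,\|\G_\cur\| \le 1$. For item 2 I would use idempotence to write $\sigma_i(\bm\Psi\G_\cur)^2 = \lambda_i(\G_\cur{}'\bm\Psi\G_\cur) = \lambda_i\big(\I - (\Ghat_\dett{}'\G_\cur)'(\Ghat_\dett{}'\G_\cur)\big)$, so $\sigma_{\min}(\bm\Psi\G_\cur)^2 = 1 - \|\Ghat_\dett{}'\G_\cur\|^2$; the identity $\sigma_i(\bR_\cur) = \sigma_i(\bm\Psi\G_\cur)$ holds because $\bE_\cur$ in the reduced QR factorization has orthonormal columns, and $\G_\undet$ is handled identically. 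The one substantive step is bounding $\|\Ghat_\dett{}'\G_\cur\|$: since $\G_\cur\perp\G_\dett$ we have $\Ghat_\dett{}'\G_\cur = \Ghat_\dett{}'(\I-\G_\dett\G_\dett{}')\G_\cur$, so $\|\Ghat_\dett{}'\G_\cur\| \le \|(\I-\G_\dett\G_\dett{}')\Ghat_\dett\|$, and by symmetry of the subspace error for equal-dimensional subspaces this equals $\|(\I-\Ghat_\dett\Ghat_\dett{}')\G_\dett\| = \zeta_\dett \le r\zeta$. For item 3, $\bE_\cur = \bm\Psi\G_\cur\bR_\cur^{-1}$ (the inverse exists by item 2), hence $\bE_\cur{}'\bm\Psi\G_\undet = (\bR_\cur{}')^{-1}\G_\cur{}'\bm\Psi\G_\undet$; using $\G_\cur{}'\G_\undet = 0$ one gets $\G_\cur{}'\bm\Psi\G_\undet = -(\Ghat_\dett{}'\G_\cur)'(\Ghat_\dett{}'\G_\undet)$ of norm $\le (r\zeta)^2$, and $\|\bR_\cur^{-1}\| \le (1-(r\zeta)^2)^{-1/2}$, which gives the claim.

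The remaining items are bookkeeping. Item 4 is $\bm\Sigma = \P\Lam\P'$ conjugated by $\bm\Psi$, using that $\Lam$ is diagonal so the column-block splitting of $\P$ induces $\diag(\Lamtdet,\Lamtcur,\Lamtundet)$, with the stated eigenvalue bounds being the definitions of $\lambda^+,\lambda_\cur^\pm,\lambda_\undet^+$. Item 5 then follows by substituting item 4 and expanding each quadratic form into its at most three cross terms in $\bm\Psi\G_\dett,\bm\Psi\G_\cur,\bm\Psi\G_\undet$: use $\bE_{\cur,\perp}{}'\bm\Psi\G_\cur = 0$ ($\range(\bm\Psi\G_\cur)=\range(\bE_\cur)$) to kill the ``$\cur$'' term, items 1--3 to bound the surviving factors, and --- crucially, to avoid a factor of $2$ --- group $\G_\cur$ and $\G_\undet$ into $\G_\rest:=[\G_\cur\ \G_\undet]$ so that the corresponding block has operator norm $\max(\lambda_\cur^+,\lambda_\undet^+)=\lambda_\cur^+$; for 5(c)--5(d) additionally use $\|\M_{1,t}\G_\dett\| \le \|\M_{1,t}\P\| \le q$ and the analogue for $\G_\rest$, since these are column-submatrices of $\P$. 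Item 6 is Weyl's inequality. Item 7 uses $\lt = \P\a_t$ and the boundedness bound $(\a_t)_j^2 \le \eta\lambda_j$ from Assumption~\ref{lt_mod}: summing over all $r$ coordinates gives $\|\a_t\|^2 \le r\eta\lambda^+$; $\a_{t,\dett}$ and $\a_{t,\rest}$ are subvectors of $\a_t$, and every coordinate entering $\a_{t,\rest}$ lies in cluster $k$ or later, so has $\lambda_j \le \lambda_\cur^+$, giving $\|\a_{t,\rest}\|^2 \le r\eta\lambda_\cur^+$. Item 8 follows from $\|\bm\Psi\G_\undet x\|^2 = \|\bE_\cur{}'\bm\Psi\G_\undet x\|^2 + \|\bE_{\cur,\perp}{}'\bm\Psi\G_\undet x\|^2$ together with $\sigma_{\min}(\bm\Psi\G_\undet)^2 \ge 1-(r\zeta)^2$ (item 2) and item 3, bounding $\big((r\zeta)^2/\sqrt{1-(r\zeta)^2}\big)^2$ by $(r\zeta)^2/\sqrt{1-(r\zeta)^2}$ since $r\zeta\le 0.01$.

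The main obstacle is the reduction used in item 2 and reused in item 3: turning the bound on $\zeta_\dett = \|(\I-\Ghat_\dett\Ghat_\dett{}')\G_\dett\|$ into bounds on $\|\Ghat_\dett{}'\G_\cur\|$ and $\|\Ghat_\dett{}'\G_\undet\|$ for the orthogonally-complementary blocks. This rests on subspace-error symmetry, which requires $\rank(\Ghat_\dett)=\rank(\G_\dett)$ --- the reason the cluster-size estimate must be shown to satisfy $\rhat_i = r_i$ before this lemma is applied. Everything downstream of that reduction is routine manipulation with the operator norm and the cluster definitions.
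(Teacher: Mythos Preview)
Your proposal is correct and follows essentially the same route as the paper's proof. The paper packages the key reductions in items~2--3 as a citation to \cite[Lemma~2.10]{rrpcp_perf} (reproduced as Lemma~\ref{hatswitch}), whose items~1,~3,~4 are exactly the subspace-error symmetry $\|(\I-\Ghat_\dett\Ghat_\dett{}')\G_\dett\|=\|(\I-\G_\dett\G_\dett{}')\Ghat_\dett\|$, the cross-term bound $\|\Ghat_\dett{}'\bm{Q}\|\le\zeta_\dett$ for $\bm{Q}\perp\G_\dett$, and the singular-value sandwich $\sqrt{1-\zeta_\dett^2}\le\sigma_i((\I-\Ghat_\dett\Ghat_\dett{}')\bm{Q})\le1$ that you derive by hand; your explicit argument is in fact the standard proof of that lemma. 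For item~8 the paper uses the Weyl-type splitting $\lambda_{\min}(\G_\undet{}'\bm\Psi(\I-\bE_\cur\bE_\cur{}')\bm\Psi\G_\undet)\ge\lambda_{\min}(\G_\undet{}'\bm\Psi^2\G_\undet)-\lambda_{\max}(\G_\undet{}'\bm\Psi\bE_\cur\bE_\cur{}'\bm\Psi\G_\undet)$, which is equivalent to your Pythagorean formulation. Your remark about grouping $\G_\cur$ and $\G_\undet$ into $\G_\rest$ with $\|\bm\Lambda_\rest\|=\lambda_\cur^+$ to obtain 5(c) without an extra $\lambda_\undet^+$ term is a detail the paper leaves implicit under ``it is easy to see,'' and your observation that the equal-rank hypothesis $\rank(\Ghat_\dett)=\rank(\G_\dett)$ is what makes the symmetry step go through is exactly why the induction first establishes $\rhat_i=r_i$.
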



The following corollaries of the matrix Hoeffding inequality \cite{tail_bound}, proved in \cite{rrpcp_perf}, will be used in the proof.
\begin{corollary}
\label{hoeffding_nonzero}
Given an $\alpha$-length sequence $\{\bm{Z}_t\}$ of random Hermitian matrices of size $n\times n$, a r.v. $X$, and a set ${\cal C}$ of values that $X$ can take. For all $X \in \calc$, (i) $\bm{Z}_t$'s are conditionally independent given $X$; (ii) $\mathbb{P}(b_1 \bm{I} \preceq \bm{Z}_t \preceq b_2 \bm{I} | X) = 1$ and (iii) $b_3 \bm{I} \preceq \E[\frac{1}{\alpha}\sum_t \bm{Z}_t | X ] \preceq b_4 \bm{I} $. For any $\epsilon > 0$, $\text{for all} \ X \in \calc$,
\begin{align*}
& \mathbb{P} \left( \lambda_{\max}\left(\frac{1}{\alpha}\sum_t \bm{Z}_t \right) \leq b_4 + \epsilon \Big | X \right)
\geq 1- n \exp\left(\frac{-\alpha \epsilon^2}{8(b_2-b_1)^2}\right),   \\
& \mathbb{P} \left(\lambda_{\min}\left(\frac{1}{\alpha}\sum_t \bm{Z}_t \right) \geq b_3 -\epsilon \Big| X \right)
\geq  1- n \exp\left(\frac{-\alpha \epsilon^2}{8(b_2-b_1)^2} \right).
\end{align*}
\end{corollary}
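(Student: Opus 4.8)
\emph{Proof proposal.} The plan is to reduce the statement to the \emph{unconditional} matrix Hoeffding inequality of \cite{tail_bound}, applied pointwise in the conditioning variable, and then to take the resulting bound uniformly over $\calc$. First I would fix an arbitrary value $x \in \calc$ and work with the conditional law given $X = x$. By hypothesis (i) the matrices $\bm{Z}_t$ are then independent, so the centered matrices $\bm{Y}_t := \bm{Z}_t - \E[\bm{Z}_t \mid X = x]$ are independent, conditionally zero-mean, and Hermitian.

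Next I would establish a uniform operator-norm bound on $\bm{Y}_t$. Hypothesis (ii) gives $b_1 \I \preceq \bm{Z}_t \preceq b_2 \I$ almost surely given $X = x$, hence also $b_1 \I \preceq \E[\bm{Z}_t \mid X = x] \preceq b_2 \I$. Writing $\bm{Y}_t = \big(\bm{Z}_t - \tfrac{b_1+b_2}{2}\I\big) - \big(\E[\bm{Z}_t \mid X = x] - \tfrac{b_1+b_2}{2}\I\big)$ and applying the triangle inequality for $\|\cdot\|$ yields $\|\bm{Y}_t\| \le b_2 - b_1$, i.e.\ $\bm{Y}_t^2 \preceq (b_2 - b_1)^2 \I$. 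Thus the matrix Hoeffding variance parameter is $\sigma^2 := \big\| \sum_{t=1}^{\alpha} (b_2-b_1)^2 \I \big\| = \alpha (b_2 - b_1)^2$.

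Then I would apply matrix Hoeffding to $\sum_{t=1}^\alpha \bm{Y}_t$, conditioned on $X = x$: for any $\epsilon > 0$,
\[
\Pr\Big( \lambda_{\max}\big(\tfrac{1}{\alpha}\textstyle\sum_t \bm{Y}_t\big) \ge \epsilon \,\Big|\, X = x \Big)
= \Pr\Big( \lambda_{\max}\big(\textstyle\sum_t \bm{Y}_t\big) \ge \alpha\epsilon \,\Big|\, X = x \Big)
\le n \exp\!\Big( \frac{-(\alpha\epsilon)^2}{8\sigma^2} \Big)
= n\exp\!\Big( \frac{-\alpha \epsilon^2}{8(b_2-b_1)^2}\Big).
\]
On the complement of this bad event, Weyl's inequality together with hypothesis (iii) gives
\[
\lambda_{\max}\big(\tfrac1\alpha\textstyle\sum_t \bm{Z}_t\big) \le \lambda_{\max}\big(\tfrac1\alpha\textstyle\sum_t \bm{Y}_t\big) + \lambda_{\max}\big(\tfrac1\alpha\textstyle\sum_t \E[\bm{Z}_t\mid X=x]\big) \le \epsilon + b_4,
\]
which is the first claimed bound. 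The $\lambda_{\min}$ bound is symmetric: apply the same argument to $-\bm{Y}_t$, and combine $\lambda_{\min}\big(\tfrac1\alpha\sum_t \bm{Z}_t\big) \ge \lambda_{\min}\big(\tfrac1\alpha\sum_t \E[\bm{Z}_t\mid X=x]\big) + \lambda_{\min}\big(\tfrac1\alpha\sum_t \bm{Y}_t\big) \ge b_3 - \lambda_{\max}\big(-\tfrac1\alpha\sum_t \bm{Y}_t\big)$ with the Hoeffding tail for $\lambda_{\max}(-\sum_t \bm{Y}_t)$. Since the right-hand sides do not depend on the particular $x$, the two inequalities hold uniformly for all $X \in \calc$, as stated.

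This argument is essentially routine; the only delicate point is the bookkeeping of the conditioning — one must freeze $X = x$, invoke the unconditional inequality for the conditional law (legitimate precisely because of the conditional independence in (i)), and only afterwards declare the bound to hold uniformly over $\calc$. Beyond that, the sole computations are the identification of $\sigma^2 = \alpha(b_2-b_1)^2$ and the two Weyl-inequality steps that pass from a bound on the centered sum to a bound on $\tfrac1\alpha\sum_t \bm{Z}_t$; neither presents any real obstacle.
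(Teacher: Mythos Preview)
Your argument is correct and is precisely the standard reduction to Tropp's matrix Hoeffding inequality: center, bound $\|\bm{Y}_t\|\le b_2-b_1$, read off $\sigma^2=\alpha(b_2-b_1)^2$, and undo the centering via Weyl together with hypothesis (iii). The paper does not actually supply a proof of this corollary---it states it as a known consequence of \cite{tail_bound} and refers the reader to \cite{rrpcp_perf} for the derivation---so your proposal fills in exactly what the cited reference would contain.
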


\begin{corollary}
\label{hoeffding_rec}
Given an $\alpha$-length sequence $\{\bm{Z}_t\}$ of random matrices of size $n_1 \times n_2$. For all $X \in \calc$, (i) $\bm{Z}_t$'s are conditionally independent given $X$; (ii) $\mathbb{P}(\|\bm{Z}_t\| \le b_1|X) = 1$ and (iii) $\|\E[ \frac{1}{\alpha}\sum_t \bm{Z}_t|X ]\| \le b_2$. For any $\epsilon >0$, $\text{for all} \ X \in \calc$,
\begin{align*}
\mathbb{P} \left(\bigg\|\frac{1}{\alpha}\sum_t \bm{Z}_t \bigg\| \leq b_2 + \epsilon \Big| X \right) 
\geq 1-(n_1+n_2) \exp\left(\frac{-\alpha \epsilon^2}{32 {b_1}^2}\right).
\end{align*}
\end{corollary}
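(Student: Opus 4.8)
The plan is to reduce the rectangular case to the Hermitian matrix Hoeffding bound already recorded in Corollary~\ref{hoeffding_nonzero}, using the Hermitian (self-adjoint) dilation trick. For an $n_1 \times n_2$ matrix $\bm{Z}$, set
\[
\mathcal{D}(\bm{Z}) := \begin{bmatrix} \bm{0} & \bm{Z} \\ \bm{Z}' & \bm{0} \end{bmatrix},
\]
an $(n_1+n_2)\times(n_1+n_2)$ Hermitian matrix. The two facts that make this work, which I would record first, are: (i) $\mathcal{D}$ is $\mathbb{R}$-linear, so $\mathcal{D}\big(\frac{1}{\alpha}\sum_t \bm{Z}_t\big) = \frac{1}{\alpha}\sum_t \mathcal{D}(\bm{Z}_t)$ and likewise $\mathcal{D}\big(\mathbb{E}[\frac{1}{\alpha}\sum_t \bm{Z}_t \mid X]\big) = \mathbb{E}\big[\frac{1}{\alpha}\sum_t \mathcal{D}(\bm{Z}_t)\mid X\big]$; and (ii) the nonzero eigenvalues of $\mathcal{D}(\bm{Z})$ are exactly $\pm\sigma_i(\bm{Z})$, so $\lambda_{\max}(\mathcal{D}(\bm{Z})) = \|\mathcal{D}(\bm{Z})\| = \|\bm{Z}\|$.

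Next I would push the three hypotheses through $\mathcal{D}$. Conditional independence of $\{\mathcal{D}(\bm{Z}_t)\}$ given $X$ is immediate since $\mathcal{D}$ is deterministic. From fact (ii) and hypothesis (ii), $\|\bm{Z}_t\|\le b_1$ (a.s.\ given $X$) yields $-b_1 \bm{I} \preceq \mathcal{D}(\bm{Z}_t) \preceq b_1 \bm{I}$. From fact (i) and hypothesis (iii), $\big\|\mathbb{E}[\frac{1}{\alpha}\sum_t \bm{Z}_t\mid X]\big\|\le b_2$ yields $-b_2 \bm{I} \preceq \mathbb{E}\big[\frac{1}{\alpha}\sum_t \mathcal{D}(\bm{Z}_t)\mid X\big] \preceq b_2 \bm{I}$. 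Then I would apply Corollary~\ref{hoeffding_nonzero} to $\{\mathcal{D}(\bm{Z}_t)\}$ with parameter choices $n \leftarrow n_1+n_2$, lower/upper deterministic bounds $-b_1$ and $b_1$ (so the width appearing in the exponent is $b_1-(-b_1)=2b_1$, hence $8(2b_1)^2 = 32 b_1^2$), and mean upper bound $b_4 \leftarrow b_2$ (with $b_3 \leftarrow -b_2$). This gives, for all $X\in\calc$,
\[
\mathbb{P}\left(\lambda_{\max}\left(\tfrac{1}{\alpha}\sum_t \mathcal{D}(\bm{Z}_t)\right) \le b_2 + \epsilon \mid X\right) \ge 1 - (n_1+n_2)\exp\left(\frac{-\alpha\epsilon^2}{32 b_1^2}\right).
\]
Finally, using linearity and fact (ii) again, $\lambda_{\max}\big(\frac{1}{\alpha}\sum_t \mathcal{D}(\bm{Z}_t)\big) = \lambda_{\max}\big(\mathcal{D}(\frac{1}{\alpha}\sum_t \bm{Z}_t)\big) = \big\|\frac{1}{\alpha}\sum_t \bm{Z}_t\big\|$, so the event on the left is exactly $\big\{\|\frac{1}{\alpha}\sum_t \bm{Z}_t\| \le b_2 + \epsilon\big\}$, which is the claim.

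There is no genuine obstacle here; the argument is a routine dilation reduction. The only points needing care are (a) verifying the spectral identities $\lambda_{\max}(\mathcal{D}(\bm{Z})) = \|\mathcal{D}(\bm{Z})\| = \|\bm{Z}\|$ together with linearity of $\mathcal{D}$, and (b) tracking the constant: one must notice that the relevant interval for $\mathcal{D}(\bm{Z}_t)$ has width $2b_1$, which converts the $8(\cdot)^2$ denominator of Corollary~\ref{hoeffding_nonzero} into the $32 b_1^2$ in the statement. If one prefers to avoid routing through Corollary~\ref{hoeffding_nonzero}, the same bound follows directly from Tropp's matrix Hoeffding inequality \cite{tail_bound} applied to the conditionally centered matrices $\mathcal{D}(\bm{Z}_t) - \mathbb{E}[\mathcal{D}(\bm{Z}_t)\mid X]$ (deterministic norm bound $2b_1$, variance proxy $4\alpha b_1^2$), followed by Weyl's inequality to peel off $\mathbb{E}\big[\frac{1}{\alpha}\sum_t \mathcal{D}(\bm{Z}_t)\mid X\big]$, whose top eigenvalue is at most $b_2$.
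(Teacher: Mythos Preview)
The paper does not actually prove this corollary in-text; it is stated as a corollary of the matrix Hoeffding inequality \cite{tail_bound} with the proof deferred to \cite{rrpcp_perf}. Your dilation argument is correct and is the standard way to extend Hermitian matrix concentration to rectangular matrices (and is the mechanism Tropp himself uses in \cite{tail_bound}); in particular, your tracking of the dimension $n_1+n_2$ and of the constant (deterministic range $[-b_1,b_1]$, hence width $2b_1$, hence $8(2b_1)^2 = 32 b_1^2$) matches the statement exactly, so this is almost certainly the intended argument.
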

%

\begin{proof}[Proof of Lemma \ref{hp_bnds_gen}]
Recall that we are given $\Ghat_\dett$ that was computed using (some or all) $\yt$'s for $t \le t_*$ and that satisfies $\zeta_\dett \le r\zeta$.  From \eqref{yt_mod}, $\yt$ is a linear function of $\lt$. Thus, we can let $X:=\{\l_1, \l_2, \dots \l_{t_*} \}$ denote all the random variables on which the event $\{\zeta_\dett \le r\zeta\}$ depends.
In each item of this proof, we need to lower bound the probability of the desired event conditioned on $\zeta_\dett \le r\zeta$. To do this, we first lower bound the probability of the event conditioned on $X$ that is such that $X \in \{\zeta_\dett \le r\zeta\}$. We get a lower bound that does not depend on $X$ as long as $X \in \{\zeta_\dett \le r\zeta\}$. Thus, the same probability lower bound holds conditioned on $\{\zeta_\dett \le r\zeta\}$.
\begin{fact}\label{event}
For an event $\mathcal{E}$ and random variable $X$, $\mathbb{P}(\mathcal{E}|X) \geq p$ for all $X \in \mathcal{C}$ implies that $\mathbb{P}(\mathcal{E}|X\in \mathcal{C}) \geq p$.
\end{fact}

{\em Proof of Lemma \ref{hp_bnds_gen}, item 1. }
%
Let
\begin{align*}
\mathrm{term}: = \frac{1}{{\alpha}} \sum_t  \bm\Psi\lt {\et}' = \frac{1}{{\alpha}} \sum_t \bm\Psi \lt  \lt' {\bm{M}_{1,t}}' {\bm{M}_{2,t}}'
\end{align*}
Since $\bm\Psi$ is a function of $X$, since $\lt$'s used in the summation above are independent of $X$ and $\E[\lt \lt{}'] = \bm\Sigma$,
\begin{align*}
\E[\mathrm{term}|X] = \frac{1}{\alpha} \sum_t \bm\Psi \bm\Sigma {\bm{M}_{1,t}}' {\bm{M}_{2,t}}'
\end{align*}

Next, we use Cauchy-Schwartz for matrices:
\bea
\left\|\sum_{t=1}^{\alpha} \bm{X}_t {\bm{Y}_t}'\right\|^2 \leq \lambda_{\max}\left(\sum_{t=1}^{\alpha} \bm{X}_t {\bm{X}_t}'\right)
\lambda_{\max}\left(\sum_{t=1}^{\alpha} \bm{Y}_t {\bm{Y}_t}'\right)
\label{CSmat}
\eea
Using \eqref{CSmat}, with $\bm{X}_t = \bm\Psi \bm\Sigma {\bm{M}_{1,t}}'$ and $\bm{Y}_t = {\bm{M}_{2,t}}$, followed by using $\sqrt{\|\frac{1}{\alpha} \sum_t \bm{X}_t \bm{X}_t'\|} \le \max_t \|\bm{X}_t\|$,  Assumption \ref{Mt_model} with $\bm{A}_t \equiv \I$, and Lemma  \ref{matbnds_del},
\begin{align*}
\|\E[\mathrm{term}|X]\| &\le \max_t \|\bm\Psi \bm\Sigma {\bm{M}_{1,t}}'\| \sqrt{\frac{\bbbeta}{\alpha}}\\
& \le q( (r\zeta) \lambda^+ + \lambda_\cur^+)  \sqrt{\frac{\bbbeta}{\alpha}}
\end{align*} 
for all $X \in \{\zeta_\dett \le r\zeta\}$.
%
To bound  $\| \bm\Psi\lt \et' \|$, rewrite it as $\bm\Psi\lt \et' = [\bm\Psi \G_\dett a_{t,\dett} +\bm\Psi \G_\rest a_{t,\rest}] [a_{t,\dett}' \G_\dett' +  a_{t,\rest}'\G_\rest'] \M_{1,t}' \M_{2,t}'$.  Thus, using $\|{\bm{M}_{2,t}}\| \le 1$, $\|{\bm{M}_{1,t}} \P \|  \le q < 1$, and Lemma \ref{matbnds_del},
\begin{align*}
&\| \bm\Psi\lt \et' \| \le q  r \eta (   (r\zeta) \lambda^+  +  \lambda_\cur^+ +  (r\zeta) \sqrt{\lambda^+ \lambda_\cur^+} +  \sqrt{\lambda^+ \lambda_\cur^+} )
\end{align*}
holds w.p. one when $\{\zeta_\dett \le r\zeta\}$.

Finally, conditioned on $X$, the individual summands in $\mathrm{term}$ are conditionally independent. 
Using matrix Hoeffding, Corollary \ref{hoeffding_rec}, followed by Fact \ref{event}, the result follows.

{\em Proof of Lemma \ref{hp_bnds_gen}, item 2. }
\begin{align*}
\E[ \frac{1}{{\alpha}} \sum_t  \et \et'|X] =  \frac{1}{\alpha} \sum_t {\bm{M}_{2,t}} {\bm{M}_{1,t}} \bm\Sigma {\bm{M}_{1,t}}' {\bm{M}_{2,t}}'
\end{align*}
By Lemma \ref{matbnds_del}, $\|{\bm{M}_{1,t}} \bm\Sigma {\bm{M}_{1,t}}'\| \le  q^2 \lambda^+$. Thus, using Assumption \ref{Mt_model} with $\bm{A}_t \equiv  {\bm{M}_{1,t}} \bm\Sigma {\bm{M}_{1,t}}'$,
\[
\|\E[ \frac{1}{{\alpha}} \sum_t  \et \et'|X]\| \le \frac{\bbbeta}{\alpha} q^2 \lambda^+.
\]
Using Assumption \ref{Mt_model} and Lemma \ref{matbnds_del},
\[
\|\et \et'\| = \|\M_{2,t} \M_{1,t} \P \at \|^2 \le  q^2 \eta r  \lambda^+.
\]
Conditional independence of the summands holds as before.
Thus, using Corollary \ref{hoeffding_rec} and Fact \ref{event}, the result follows.

{\em Proof of Lemma \ref{hp_bnds_gen}, item 3. }
\begin{align*}
&\E[ \frac{1}{{\alpha}} \sum_t \bE_\cur {\bE_\cur}' \bm\Psi \lt {\lt}' \bm\Psi \bE_{\cur,\perp} {\bE_{\cur,\perp}}'\| |X ] \\
&=  \bE_\cur {\bE_\cur}' \bm\Psi \bm\Sigma \bm\Psi \bE_{\cur,\perp} {\bE_{\cur,\perp}}'
\end{align*}
Using Lemma \ref{matbnds_del},
$
\| \bE_\cur {\bE_\cur}' \bm\Psi \bm{\Sigma} \bm\Psi \bE_{\cur,\perp} {\bE_{\cur,\perp}}' \|
\le (r\zeta)^2\lambda^+ + \frac{(r\zeta)^2}{\sqrt{1 - (r\zeta)^2}}\lambda_\undet^+
$
when $\{\zeta_\dett \le r\zeta\}$.
Also,
$
\| {\bE_\cur}' \bm\Psi \lt {\lt}' \bm\Psi \bE_{\cur,\perp} \|
 \le \| \bm\Psi \lt {\lt}' \bm\Psi  \| \le  \eta r ( (r\zeta)^2 \lambda^+ + \lambda_\cur^+ + 2(r\zeta) \sqrt{\lambda^+ \lambda_\cur^+})
 :=b_{prob}
$ 
holds w.p. one when $\{\zeta_\dett \le r\zeta\}$.
In the above bound, the first inequality is used to get a loose bound, but one that will also apply for the proofs of the later items given below. The rest is the same as in the proofs of the earlier parts. 


{\em Proof of Lemma \ref{hp_bnds_gen}, item 4. }
%
Using Ostrowski's theorem,
\begin{align*}
\lambda_{\min}(\E[\A|X]) &= \lambda_{\min}( {\bE_\cur}' \bm\Psi (\bm\Sigma) \bm\Psi\bE_\cur ) \\
&\ge \lambda_{\min}( {\bE_\cur}' \bm\Psi \G_\cur \Lamtcur  \G_cur{}' \bm\Psi\bE_\cur ) \\
& = \lambda_{\min}( \bR_\cur \Lamtcur \bR_\cur{}') \\
& \ge \lambda_{\min}(  \bR_\cur  \bR_\cur{}') \lambda_{\min}(  \Lamtcur  ) \ge  (1 - (r\zeta)^2) \lambda_\cur^-
\end{align*}
for all $X \in \{\zeta_\dett \le r\zeta\}$.
Ostrowski's theorem is used to get the second-last inequality, while Lemma \ref{matbnds_del} helps get the last one.

As in the proof of item 3, $\|{\bE_\cur}' \bm\Psi \lt {\lt}' \bm\Psi\bE_\cur\| \le \| \bm\Psi \lt {\lt}' \bm\Psi  \| \le b_{prob}$ holds w.p. one when $\{\zeta_\dett \le r\zeta\}$.
%
Conditional independence of the summands holds as before. Thus, by matrix Hoeffding, Corollary \ref{hoeffding_nonzero}, the result follows.

{\em Proof of Lemma \ref{hp_bnds_gen}, item 5. }
%
By Lemma \ref{matbnds_del},
\begin{align*}
\lambda_{\max}(\E[\A_\perp|X]) &= \lambda_{\max}( {\bE_{\cur,\perp}}' \bm\Psi \bm\Sigma \bm\Psi\bE_{\cur,\perp} )\\
& \le  ( (r\zeta)^2 \lambda^+ + \lambda_\undet^+ )
\end{align*}
when $\{\zeta_\dett \le r\zeta\}$. The rest of the proof is the same as that of the previous part.


{\em Proof of Lemma \ref{hp_bnds_gen}, item 6. }
Using Ostrowski's theorem,
$
\lambda_{\max}(\E[\A_\perp|X]) \ge \lambda_{\max}( {\bE_{\cur,\perp}}' \bm\Psi \G_\undet \bm\Lambda_\undet \G_\undet{}' \bm\Psi\bE_{\cur,\perp} ) \ge \lambda_{\min}( {\bE_{\cur,\perp}}' \bm\Psi \G_\undet  \G_\undet{}' \bm\Psi\bE_{\cur,\perp} ) \lambda_{\max}(\bm\Lambda_\undet).
$
By definition, $\lambda_{\max}(\bm\Lambda_\undet) = \lambda_\undet^+$.
By Lemma \ref{matbnds_del}, $\lambda_{\min}( {\bE_{\cur,\perp}}' \bm\Psi \G_\undet  \G_\undet{}' \bm\Psi\bE_{\cur,\perp} ) = \sigma_{\min}({\bE_{\cur,\perp}}'\bm\Psi \G_\undet )^2 \ge
(1 - (r\zeta)^2 - \frac{(r\zeta)^2}{\sqrt{1-(r\zeta)^2}})$ when $\{\zeta_\dett \le r\zeta\}$. The rest of the proof is the same as above.

{\em Proof of Lemma \ref{hp_bnds_gen}, item 7. }  Using Ostrowski's theorem and  Lemma \ref{matbnds_del},
$
\lambda_{\max}(\E[\A|X])  \ge
\lambda_{\max}( {\bE_\cur}' \bm\Psi \G_\cur \Lamtcur  \G_\cur{}' \bm\Psi\bE_\cur ) \ge \lambda_{\min}( \bR_\cur \bR_\cur{}') \lambda_{\max}( \Lamtcur ) \ge (1 - (r\zeta)^2)  \lambda_\cur^+
$
when $\{\zeta_\dett \le r\zeta\}$. The rest of the proof is the same as above.
%
%
\end{proof}



\Section{Detailed Proof of Theorem \ref{thm1} and Proof of Lemma \ref{matbnds_del}} \label{proof_thm1_complete} \label{matbnds_del_proof}

\begin{proof}[Proof of Theorem \ref{thm1}]
Recall that we need to show that $\zeta_k \le r_k \zeta$. Assume the substitutions given in Definition \ref{gen_def}. We will use induction.

Consider a $k < \vartheta$.
For the $k$-th step, assume that $\zeta_i \le r_i \zeta$ for $i=1,2,\dots,k-1$. Thus, using \eqref{induc_assu}, $\zeta_\dett \le r \zeta$ and so Lemma \ref{main_lem} is applicable.
We first show that $\rhat_k = r_k$ and that Algorithm \ref{cPCA_algo} does not stop (proceeds to $(k+1)$-th step). From Algorithm \ref{cPCA_algo}, $\rhat_k = r_k$ if $\frac{\hat\lambda_1}{\hat\lambda_{r_k}} \le \ghatp$, and $\frac{\hat\lambda_1}{\hat\lambda_{r_k+1}} > \ghatp$. Also it will not stop if $ \hat\lambda_{r_k+1} \ge \thresh$.
Since $k < \vartheta$, $\G_\undet$ is not empty. Thus, item 1 of Lemma \ref{main_lem} shows that all these hold. Hence $\rhat_k = r_k$  and algorithm does not stop w.p. at least $1- p_1 - p_2 - 4p_3$. Thus, by item 3 of the same lemma, with the same probability, $\zeta_k \le r_k \zeta$.

Now consider $k= \vartheta$. We first show $\rhat_k = r_k$ and that Algorithm \ref{cPCA_algo} does stop, i.e.,  $\hat\vartheta = \vartheta$. This will be true if $\frac{\hat\lambda_1}{\hat\lambda_{r_k}} \le \ghatp$ and $ \hat\lambda_{r_k+1} < \thresh$. For $k = \vartheta$, $\G_\undet$ is empty. Thus, item 2 of Lemma \ref{main_lem} shows that this holds w.p. at least $1- p_1 - p_2 - 4p_3$. Thus, by item 3 of the same lemma, with the same probability, $\zeta_k \le r_k \zeta$.

Thus, using the union bound, w.p. at least $1 - \vartheta(p_1  + p_2 + 4p_3)$, $\rhat_k = r_k$ and $\zeta_k \le r_k \zeta$ for all $k$. Using \eqref{SE_Phat_bnd}, this implies that $\SE \le r \zeta$ with the same probability.

Finally, the choice $\alpha \ge \alpha_0$, implies that $p_1 \le \frac{1}{\vartheta} 2n^{-10}$, $p_2 \le  \frac{1}{\vartheta} 2n^{-10}$, $p_3 \le  \frac{1}{\vartheta} 2n^{-10}$. Hence $\SE \le r \zeta$ w.p. at least $1- 12n^{-10}$. We work this out for $p_1$ below. The others follow similarly.

Recall that $p_1 = 2n \exp(-\alpha \frac{\epsilon^2}{32b_{prob}^2})$, $\epsilon = 0.01(r\zeta) \lambda^-$ and $b_{prob} = \eta r q ( (r\zeta) \lambda^+  + \lambda_\cur^+ +  (r\zeta) \sqrt{\lambda^+ \lambda_\cur^+} +  \sqrt{\lambda^+ \lambda_\cur^+} )  $.
Thus,  $\frac{b_{prob}^2}{(\lambda^-)^2} \le (4 \eta r \max(q (r\zeta) f, qg, q \sqrt{f g}, q (r\zeta) \sqrt{fg} ) )^2 \le 16 \eta^2  r^2 \max(q (r\zeta) f, qg, q \sqrt{f g} )^2$

Thus, $\alpha \frac{\epsilon^2}{32b_{prob}^2} \ge \frac{32 \cdot 16}{(0.01)^2} \frac{ \eta^2 r^2 (11\log n + \log \vartheta)}{(r \zeta)^2} \max(q (r\zeta)f, q g, q \sqrt{fg}) \frac{(0.01(r\zeta))^2}{32 \cdot 16 \eta^2 r^2 \max(q (r\zeta) f, qg, q \sqrt{f g} )^2} \ge 11 \log n + \log \vartheta$. Thus, $p_1 \le \frac{1}{\vartheta} 2n^{-10}$.
\end{proof}

\begin{proof}[Proof of Lemma \ref{matbnds_del}]
The first claim is obvious. The next two claims follow using the following lemma:
\begin{lem} [\cite{rrpcp_perf}, Lemma 2.10] \label{lemma0}\label{hatswitch}
Suppose that $\bm{P}$, $\Phat$ and $\bm{Q}$ are three basis matrices. Also, $\bm{P}$ and $\Phat$ are of the same size, $\bm{Q}' \bm{P} = \bm{0}$ and $\|(\I-\Phat \Phat{}' ) \bm{P} \| = \zeta_*$. Then,
\begin{enumerate}
  \item $\|(\I-\Phat\Phat{}')\bm{P}\bm{P}'\| =\|( \I - \bm{P}\bm{P}' ) \Phat \Phat{}'\| =  \|( \I - \bm{P} \bm{P}' ) \Phat \| = \| ( \I - \Phat \Phat{}' ) \bm{P}\| =  \zeta_*$
  \item $\|\bm{P} \bm{P}' - \Phat \Phat{}'\| \leq 2 \|(\I-\Phat \Phat{}')\bm{P}\| = 2 \zeta_*$
  \item $\|\Phat{}' \bm{Q}\| \leq \zeta_*$ \label{lem_cross}
  \item $ \sqrt{1-\zeta_*^2} \leq \sigma_i\left((\I-\Phat \Phat{}')\bm{Q}\right)\leq 1 $
\end{enumerate}
\end{lem}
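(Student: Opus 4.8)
The plan is to prove all four claims by elementary projection algebra, reducing everything to the norm of a single ``cross term''. Write $\bm{P}_\perp$ and $\Phat_\perp$ for basis matrices whose columns span $\range(\bm{P})^\perp$ and $\range(\Phat)^\perp$ respectively, so that $\I - \bm{P}\bm{P}' = \bm{P}_\perp \bm{P}_\perp{}'$ and $\I - \Phat\Phat' = \Phat_\perp \Phat_\perp{}'$. First I would record two facts that get used repeatedly: for any matrix $\bm{M}$ and any basis matrix $\bm{B}$ one has $\|\bm{M}\bm{B}\bm{B}'\| = \|\bm{M}\bm{B}\|$ (the inequality $\le$ is clear since $\|\bm{B}'\|=1$, and equality follows by testing the vector $\bm{B}\bm{u}$ where $\bm{u}$ achieves $\|\bm{M}\bm{B}\|$) and $\|\bm{B}\bm{M}\| = \|\bm{M}\|$; also $\|\bm{M}\| = \|\bm{M}'\|$. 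Combining these, the chain of equalities in claim~1 reduces to a single assertion: we get $\|(\I-\Phat\Phat')\bm{P}\bm{P}'\| = \|(\I-\Phat\Phat')\bm{P}\| = \zeta_*$ directly, $\|(\I-\Phat\Phat')\bm{P}\| = \|\Phat_\perp{}'\bm{P}\|$, and $\|(\I-\bm{P}\bm{P}')\Phat\Phat'\| = \|(\I-\bm{P}\bm{P}')\Phat\| = \|\bm{P}_\perp{}'\Phat\|$, so the only real content is the symmetry $\|\Phat_\perp{}'\bm{P}\| = \|\bm{P}_\perp{}'\Phat\|$.

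I expect this symmetry to be the one genuinely non-mechanical step, and it is where the hypothesis that $\bm{P}$ and $\Phat$ have the same number of columns is used. Set $\bm{M} := \bm{P}'\Phat$, which is a \emph{square} $r\times r$ matrix. Then $\|\Phat_\perp{}'\bm{P}\|^2 = \lambda_{\max}(\bm{P}'\Phat_\perp\Phat_\perp{}'\bm{P}) = \lambda_{\max}(\I_r - \bm{P}'\Phat\Phat'\bm{P}) = 1 - \lambda_{\min}(\bm{M}\bm{M}')$, and symmetrically $\|\bm{P}_\perp{}'\Phat\|^2 = 1 - \lambda_{\min}(\bm{M}'\bm{M})$. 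Since $\bm{M}$ is square, $\bm{M}\bm{M}'$ and $\bm{M}'\bm{M}$ have the same characteristic polynomial, hence $\lambda_{\min}(\bm{M}\bm{M}') = \lambda_{\min}(\bm{M}'\bm{M})$, which gives claim~1.

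The remaining claims then follow quickly. For claim~2 I would write $\bm{P}\bm{P}' - \Phat\Phat' = \bm{P}\bm{P}'(\I-\Phat\Phat') - (\I-\bm{P}\bm{P}')\Phat\Phat'$ and apply the triangle inequality together with claim~1 (noting $\|\bm{P}\bm{P}'(\I-\Phat\Phat')\| = \|(\I-\Phat\Phat')\bm{P}\bm{P}'\| = \zeta_*$). For claim~3, the hypothesis $\bm{Q}'\bm{P} = \bm{0}$ gives $\range(\bm{Q}) \subseteq \range(\bm{P})^\perp$, hence $(\I-\bm{P}\bm{P}')\bm{Q} = \bm{Q}$, so $\|\Phat'\bm{Q}\| = \|\Phat'(\I-\bm{P}\bm{P}')\bm{Q}\| \le \|(\I-\bm{P}\bm{P}')\Phat\|\,\|\bm{Q}\| = \zeta_*$, using claim~1 and $\|\bm{Q}\| = 1$. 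For claim~4, the upper bound is immediate from $\sigma_i((\I-\Phat\Phat')\bm{Q}) \le \|\I-\Phat\Phat'\|\,\|\bm{Q}\| = 1$, and for the lower bound I would compute $\sigma_{\min}((\I-\Phat\Phat')\bm{Q})^2 = \lambda_{\min}(\bm{Q}'(\I-\Phat\Phat')\bm{Q}) = 1 - \lambda_{\max}(\bm{Q}'\Phat\Phat'\bm{Q}) = 1 - \|\Phat'\bm{Q}\|^2 \ge 1 - \zeta_*^2$ by claim~3, and since this bounds every singular value from below we get $\sigma_i \ge \sqrt{1-\zeta_*^2}$. The whole argument is routine once the $\bm{M}\bm{M}'$ versus $\bm{M}'\bm{M}$ observation is in place; the only thing to watch is that nothing above requires $\bm{Q}$ to span the \emph{full} orthogonal complement of $\range(\bm{P})$ --- only $\bm{Q}'\bm{P}=\bm{0}$ and $\bm{Q}$ being a basis matrix are used.
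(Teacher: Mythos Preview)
Your proof is correct and self-contained. The paper does not actually prove this lemma; it is quoted verbatim as \cite{rrpcp_perf}, Lemma~2.10 and used as a black box inside the proof of Lemma~\ref{matbnds_del}. So there is no ``paper's own proof'' to compare against --- you have supplied one where the paper does not.

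A brief check of the substantive steps: the symmetry $\|\Phat_\perp{}'\bm{P}\| = \|\bm{P}_\perp{}'\Phat\|$ via $\lambda_{\min}(\bm{M}\bm{M}') = \lambda_{\min}(\bm{M}'\bm{M})$ for the square matrix $\bm{M}=\bm{P}'\Phat$ is exactly the right place to use the equal-size hypothesis, and it is correct. Claims~2 and~3 are routine consequences of claim~1 as you wrote them. In claim~4 you implicitly use that $(\I-\Phat\Phat')$ is idempotent so that $\sigma_i((\I-\Phat\Phat')\bm{Q})^2 = \lambda_i(\bm{Q}'(\I-\Phat\Phat')\bm{Q})$; with that, the eigenvalues of $\I - \bm{Q}'\Phat\Phat'\bm{Q}$ all lie in $[1-\zeta_*^2,\,1]$, giving the full singular-value sandwich, not just the minimum. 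Your closing remark that $\bm{Q}$ need not span all of $\range(\bm{P})^\perp$ is also accurate and worth keeping.
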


Use item 4 of Lemma \ref{hatswitch} and the fact that $\G_\dett{}' \G_\cur = \bm{0}$ and $\G_\dett{}' \G_\undet = \bm{0}$ to get the second claim.

For the third claim, notice that $\bE_\cur{}'\bm\Psi \G_\undet = \bR_\cur^{-1} \G_\cur{}' \bm\Psi \G_\undet =\bR_\cur^{-1} \G_\cur{}' \Ghat_\dett \Ghat_\dett{}' \G_\undet$.
since $ \bm\Psi^2 = \bm\Psi$ and $\G_\cur{}'\G_\undet = 0$.
Using the second claim, $\|\bR_\cur^{-1}\| \le \frac{1}{\sigma_{\min}(\bR_\cur)} \le \frac{1}{1 - (r\zeta)^2}$.
Use item 3 of Lemma \ref{hatswitch} and the facts that $\G_\cur{}'\G_\dett = 0$ and $ \G_\undet{}'\G_\dett = 0$ to bound $\|\G_\cur{}' \Ghat_\dett\|$ and $\|\Ghat_\dett{}' \G_\undet\|$ respectively.

The fourth claim just uses the definitions. The fifth claim uses the previous claims and the assumptions on $\bm{M}_t$ from Assumption \ref{Mt_model}. The sixth claim follows using Weyl's inequality.

The second last claim: We show how to bound $\a_{t,\rest}$: $\|a_{t,\rest}\|^2 = \| \a_{t,\cur}\|^2 + \|\a_{t,\undet}\|^2 \le \sum_{j \in \cG_\cur} \eta \lambda_j + \sum_{j \in \cG_\undet} \eta \lambda_j \le r \eta \lambda_\cur^+$ (since $\lambda_j \le \lambda_\cur^+$ for all the $j$'s being summed over).
 The other bounds follow similarly.

Last claim:
\begin{align*}
&\sigma_{\min}(\bE_{\cur,\perp}{}'\bm\Psi \G_\undet)^2\\
&=\lambda_{\min}(\G_\undet{}' \bm\Psi \bE_{\cur,\perp} \bE_{\cur,\perp}{}'\bm\Psi \G_\undet)\\
&= \lambda_{\min}(\G_\undet{}' \bm\Psi (\I - \bE_\cur \bE_\cur{}') \bm\Psi \G_\undet)\\
& \ge \lambda_{\min}(\G_\undet{}' \bm\Psi \bm\Psi \G_\undet)- \\
& \lambda_{\max}(\G_\undet{}' \bm\Psi \bE_\cur \bE_\cur{}' \bm\Psi \G_\undet)\\
&= \sigma_{\min}(\bm\Psi \G_\undet)^2 - \|\bE_\cur{}' \bm\Psi \G_\undet\| \\
&\ge 1 - (r\zeta)^2 - \frac{(r\zeta)^2}{\sqrt{1-(r\zeta)^2}}.
\end{align*}
The last inequality follows using the second and the third claim.
\end{proof}

\end{document}